\newcommand{\leqnomode}{\tagsleft@true\let\veqno\@@leqno}
\NewDocumentCommand{\defName}{m}{#1}
\NewDocumentCommand{\Multiindex}{}{\vec{i}}
\NewDocumentCommand{\MultiindexSet}{}{I^{2+*}}
\NewDocumentCommand{\IStructs}{}{\mathcal{I}}
\NewDocumentCommand{\IStructsM}{}{\mathcal{I}_\txt{marked}}
\NewDocumentCommand{\GClasses}{}{\mathcal{G}}
\NewDocumentCommand{\GClassesGeneral}{}{\GClasses_{\txt{gt}}}
\NewDocumentCommand{\MClasses}{}{\mathcal{M}}
\NewDocumentCommand{\Rmodel}{}{R^{\txt{model}}}
\NewDocumentCommand{\NonTrivialIndicators}{}{\mathcal{R}}
\DeclareMathOperator{\IStruct}{IS}
\DeclareMathOperator{\IStructM}{IM}
\DeclareMathOperator{\IStructX}{IX}
\DeclareMathOperator{\IStructExt}{\IStructX}
\DeclareMathOperator{\IStructOracle}{\IStruct_{\txt{oracle}}}
\DeclareMathOperator{\IStructOracleM}{\IStructM_{\txt{oracle}}}
\DeclareMathOperator{\IStructOracleX}{\IStructX_{\txt{oracle}}}
\DeclareMathOperator{\CDAlg}{CD}
\DeclareMathOperator{\CDAlgPC}{PC}
\DeclareMathOperator{\CDAlgFCI}{FCI}
\DeclareMathOperator{\linspan}{span}
\NewDocumentCommand{\halfquad}{}{\mkern9mu}
\NewDocumentCommand{\algMarkedIndependence}{}
	{\texttt{marked\_\allowbreak{}independence}}
\NewDocumentCommand{\algConstructStateSpace}{}
	{\texttt{construct\_\allowbreak{}state\_\allowbreak{}space}}
\NewDocumentCommand{\algRunCD}{}
	{\texttt{run\_\allowbreak{}cd}}
\NewDocumentCommand{\algPseudoCIT}{}
	{\texttt{pseudo\_\allowbreak{}cit}}
\NewDocumentCommand{\mCIToutR}{}{\mathfrak{R}}
\NewDocumentCommand{\CodeRepo}{}{\href{https://github.com/martin-rabel/Causal\_GLDF}{https://github.com/martin-rabel/Causal\_GLDF}}
\title{Context-Specific Causal Graph Discovery with Unobserved Contexts: Non-Stationarity, Regimes
	and Spatio-Temporal Patterns}
\author{Martin Rabel\textsuperscript{$*$,a,b}, Jakob Runge\textsuperscript{a,b}}
\date{\today}
\NewDocumentCommand{\mayincludegraphics}{O{}m}{\includegraphics[#1]{#2}}
\begin{document}
	\maketitle
	\noindent\begin{minipage}{\textwidth}
		\centering
		{\footnotesize\textsuperscript{a}University of Potsdam, Institute of Computer Science,
			An der Bahn 2, 14476 Potsdam, Germany}\\[-0.1em]
		{\footnotesize\textsuperscript{b}Center for Scalable Data Analytics and Artificial Intelligence (ScaDS.AI) Dresden\Slash{}Leipzig, Germany}\\[-0.1em]
		{\footnotesize\textsuperscript{$*$}corresponding author,
			martin.rabel@uni-potsdam.de}
	\end{minipage}

	\begin{abstract}
		Real-world problems, for example in climate applications,
		often require causal reasoning on
		spatially gridded time series data or data with comparable structure.
		While the underlying system is often believed to behave similarly
		at different points
		in space and time, those variations that do exist are relevant twofold:
		They often encode important information in and of themselves. And they
		may negatively affect the stability and validity of results if not accounted for.
		We study the information encoded in changes of the causal graph,
		with stability in mind.
		Two core challenges arise,
		related to the complexity of encoding system-states and
		to statistical convergence properties in the presence
		of imperfectly recoverable non-stationary structure.
		We provide a framework realizing principles
		conceptually suitable to overcome these challenges
		-- an interpretation supported by numerical experiments.
		Primarily, we modify constraint-based causal discovery approaches on
		the level of independence testing.
		This leads to a framework which is additionally highly modular,
		easily extensible and widely applicable.
		For example, it allows to
		leverage existing constraint-based causal discovery methods
		(demonstrated on PC, PC-stable, FCI, PCMCI, PCMCI+ and LPCMCI),
		and to systematically divide the problem into simpler subproblems
		that are easier to analyze and understand and relate more clearly
		to well-studied problems like change-point-detection, clustering,
		independence-testing and more.
		Code is available at \CodeRepo{}.
	\end{abstract}
	
	{\small\noindent{}\emph{AMS Mathematics Subject Classification:}
	62G02, 68T02.}
	
	{\small\noindent{}\emph{Keywords:}
	causality,
	change point detection, clustering,
	independence-structures, distribution shifts.}
	
	\section{Introduction}\label{sec:intro}

	In science and technology, the researcher often is interested in causal relationships
	\citep{PearlBook,Elements}.
	One reason being
	that causality is key to understanding
	actions and their consequences.
	Even without the intent to actually take an action, scientific reasoning and insight often
	revolve around the impact an action would have.
	Understanding the causes of things has always been integral to scientific curiosity.
	
	The flavor of causal reasoning we will focus on is the study from \emph{observational}
	data:
	In practice, potential consequences may prohibit the experimental
	exploration of possible actions. In other cases an experiment may not be feasible;
	for example, because an action, like the release of greenhouse gases, may not be taken
	voluntarily or repeatable,
	yet the consequences, like climate change, may mandate scientific study.
	In such cases, causal knowledge must be extracted from observations alone.
	
	Generally, the study of causal relationships,
	causal inference \citep{PearlBook,Elements,spirtes2001causation},
	encompasses many tasks,
	ranging from the estimation of effects (of actions\Slash{}interventions)
	to the answering of counter-factual queries (what would have happened?).
	All these tasks build on the knowledge of basic cause-effect relationships;
	a knowledge often represented by a causal graph.
	This causal graph may not be known a priori,
	in which case it has to be discovered from data.
	This graph-discovery task is referred to as causal discovery (CD).
	Especially CD from observational data
	has recently garnered substantial attention from climate- and other applications
	\citep{granger1969investigating,PearlCIOverview09,runge2019inferring}.
	
	In real-world scenarios CD faces many challenges \citep{Runge2023}.
	These challenges may stem, for example from	the type and quality of data
	or the violation of simplifying assumptions required to make the CD task
	tractable. A particularly common assumption is that of IIDness
	-- or stationarity in the
	time series case: A single underlying model is postulated to describe all data-generation
	uniformly
	-- the location in space, time or other information relating multiple data-points
	is considered insubstantial. Yet in the real world, almost any observed model is subject to some degree of variability \eg in space or time.
	A local climate might experience qualitatively different dry and moist regimes
	over time in a single place, or in different places at the same time;
	many climate and weather parameters are subject to seasonality;
	a policy might affect public health in ways related to
	untracked parameters
	or a technological device may break or otherwise change its internal state for some time
	or some firmware-version.
	
	Indeed these examples reveal another challenge, entirely separate from
	assumption violations and incurred stability issues:
	A mayor point of interest oftentimes is the occurrence and form of a change itself;
	it is the existence of and insight into changes in itself that is the subject of study.
	With the goal of understanding precisely these \emph{variations} of the causal model,
	the challenge to CD studied in this paper is to extract additional information:
	Does a causal model -- and in particular its causal graph -- change within a data-set,
	and if so how?
	Thus we study context-dependent causal properties
	\citep{bareinboim2012transportability,LDAG_definition,CD-NOD,JCI,JPCMCI}
	in the case where the context is unknown\Slash{}hidden \citep{Saggioro2020},
	specifically hidden-context context-specific causal discovery (HCCD)
	see Fig.\ \ref{fig:intro_toy_model}.

	\begin{figure}[ht]
		\begin{minipage}{0.58\textwidth}
			\begin{tikzpicture}
				\draw (0,0) node[anchor=north west] {\mayincludegraphics[width=\textwidth]{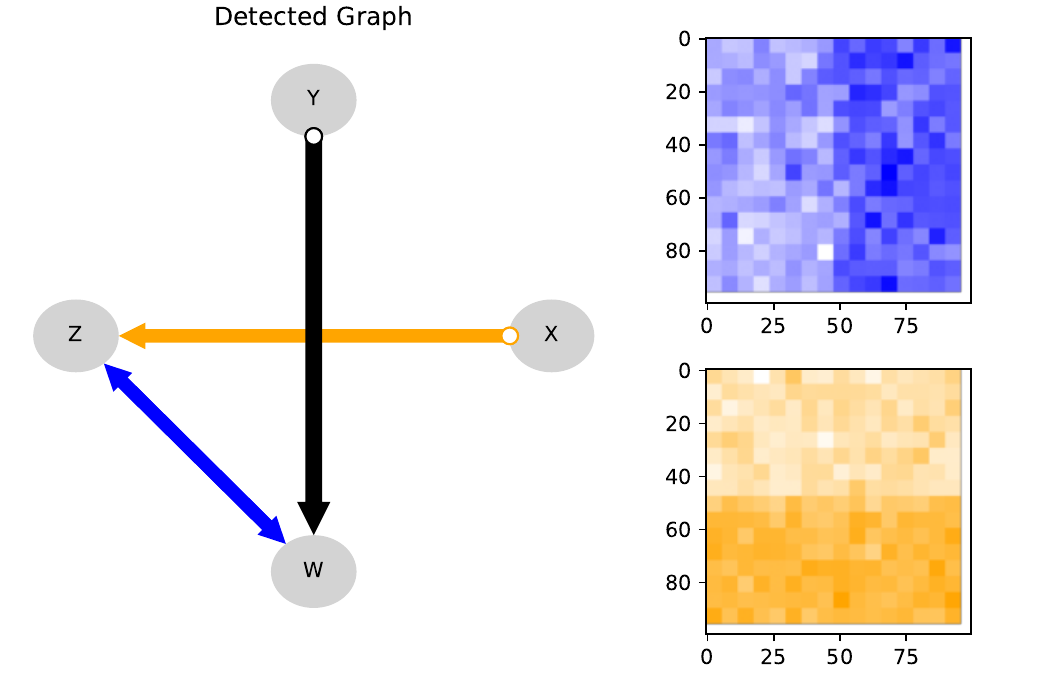}};
				\draw (0.75em,-0.25em) node[anchor=north west] {(a)};
			\end{tikzpicture}
		\end{minipage}
		\begin{minipage}{0.4\textwidth}
			\colorlet{nodecolor}{black}
\providecommand{\introSpaceQuandrantsScale}{0.5}

\begin{tikzpicture}
	[
		scale=\introSpaceQuandrantsScale,
		inner sep=0,
		outer sep=0.15em,
		varnode/.style={
			circle,
			draw=nodecolor,
			thick,
			fill=nodecolor!20,
			align=center,
			minimum height=1.5em
		},
		mechedge/.style={
			very thick
		}
	]
	\draw (-5,5) node[anchor=north west]{(b)};
	
	\draw[very thick] (-5,0) -- (5,0);
	\draw[very thick] (0,-5) -- (0,5);
	
	\begin{scope}[xshift=-2.5cm, yshift=2.5cm]
		\draw (0,1.5) node(a)[varnode]{$X$};
		\draw (-1.5,0) node(b)[varnode]{$Y$};
		\draw (0,-1.5) node(c)[varnode]{$Z$};
		\draw (1.5,0) node(d)[varnode]{$W$};
		
		\draw[mechedge, ->] (a) -- (c);
	\end{scope}
	
	\begin{scope}[xshift=2.5cm, yshift=2.5cm]
		\draw (0,1.5) node(a)[varnode]{$X$};
		\draw (-1.5,0) node(b)[varnode]{$Y$};
		\draw (0,-1.5) node(c)[varnode]{$Z$};
		\draw (1.5,0) node(d)[varnode]{$W$};
		
		\draw[mechedge, ->] (a) -- (c);
		\draw[mechedge, <->] (b) -- (c);
	\end{scope}
	
	\begin{scope}[xshift=-2.5cm, yshift=-2.5cm]
		\draw (0,1.5) node(a)[varnode]{$X$};
		\draw (-1.5,0) node(b)[varnode]{$Y$};
		\draw (0,-1.5) node(c)[varnode]{$Z$};
		\draw (1.5,0) node(d)[varnode]{$W$};
		
		\draw[mechedge, ->] (a) -- (c);
		\draw[mechedge, ->] (d) -- (b);
	\end{scope}
	
	\begin{scope}[xshift=2.5cm, yshift=-2.5cm]
		\draw (0,1.5) node(a)[varnode]{$X$};
		\draw (-1.5,0) node(b)[varnode]{$Y$};
		\draw (0,-1.5) node(c)[varnode]{$Z$};
		\draw (1.5,0) node(d)[varnode]{$W$};
		
		\draw[mechedge, ->] (a) -- (c);
		\draw[mechedge, ->] (d) -- (b);
		\draw[mechedge, <->] (b) -- (c);
	\end{scope}
\end{tikzpicture}
		\end{minipage}
		\caption{Simple toy-model to illustrate our method's output (a; applied with FCI):
		Data is given on a spatial $100\times 100$ grid (\ie
		10,000 data points in total), the link marked in orange is present in the southern
		half, the one marked in blue is present in the eastern half.
		This spatial distribution is unknown\Slash{}hidden.
		In the graph, the circle-marks on edges at $X$ and $Y$ indicate that no conclusive
		decision was possible about these edge-marks (\eg both $Z \leftarrow X$
		and $Z\leftrightarrow X$ are compatible with encountered independence relations).
		Our focus is on how the graph on the left-hand side (including the designation of
		changing links) can be recovered.
		In this example, different contexts correspond to the four (spatial) quadrants (b).
		While this illustrates the interpretation of output (a),
		the association of changing links to spatial directions is
		a very special case, see Fig.\ \ref{fig:illustrate_global_vs_local_structure}.}
		\label{fig:intro_toy_model}
	\end{figure}
	
	Our focus is on the \emph{extraction} of information about changes of the causal graph,
	with \emph{robustness} against non-graphical non-stationarity.	
	From the perspective that contexts are intervened instances of a shared model,
	this means we gain knowledge about a notion of soft intervention
	that modifies parent-sets.
	Such interventions seem to occur in the real world: From the closing of a window disrupting
	temperature exchange or the damaging of an electrical insulation enabling new currents,
	via phase-transitions qualitatively changing physical interactions or
	the day-night-cycle (or cloud-cover) controlling the effect of solar parameters,
	to climate-science phenomena like ENSO \citep{webster1997past, Saggioro2020}
	or soil-moisture feedbacks
	\citep{seneviratne2010investigating, Popescu2024RegimeCausalClimate}.
	We discuss acyclic (no \emph{contemporaneous} cycles in the time series case) models, with
	numerical experiments focused on the causally sufficient (no hidden confounders) case.
	These are not fundamental limitations of our approach, rather choices to keep the presentation
	reasonably succinct and to ensure the availability of existing methods for comparison.

	\paragraph{Choice of Inductive Bias:}
	
	Non-IIDness and non-stationarities can take a vast number of possible forms.
	Indeed, even the question about the presence of non-IID structure cannot
	be answered from data in full generality \citep{WaldWolfowitzRuntest, RandomnessTestsOnline} (see also
	Prop.\ \ref{prop:imposs_A}).
	It is therefore not possible to provide adequate quality of estimation
	over arbitrary non-IID structure. This makes a detailed understanding
	and choice of inductive biases,
	and considerations for deciding the applicability of methods a priori,
	indispensable.
	By choice of inductive bias we mean here specifically, what form of
	non-IID structure we intend to optimize performance for.
	
	First, the principle of causal modularity
	\cite[p.\,63]{PearlBook}\cite[2.1 (p.\,19)]{Elements}
	suggests that causal mechanisms
	change independently of each other. From this perspective it appears reasonable
	to direct our focus to changes that occur \emph{locally in the model}.
	To this end, we will expand around the limit where changes in different mechanisms
	(more precisely: different links) occur independently. 
	
	Second, variations of the generating model seem particularly relevant for large
	data-sets. By relevant we really mean two things: Changes in the generating model
	seem more likely to occur in large data-sets, but from a practical perspective
	this challenging problem may also not possess satisfying solutions on too small datasets.
	Thus we pay particular attention to the careful statistical description
	and scaling properties -- both statistically and concerning compute-resources --
	with increasing sample-size.
	
	Also the notion of large structures, like the temporal extend of persistent regimes,
	requires further comment: We specifically account for
	scaling of the sample-size $N$ \emph{not} necessarily being matched by typical (time-)scales $\ell$
	of context-changes; instead of a limit with $\sfrac{N}{\ell}$ fixed, an uninformative prior
	(for example uniformly distributed mechanism-typical length-scales
	of $\ell$ over scales from $1$ to $N$) seems to capture
	finite sample properties better.
	This can also be phrased in terms of information contained in the regime-structure.
	Proportional scaling would assume that there is always an equal absolute amount of information in
	the regime-structure independently of data-set size.
	We instead focus on limits where the regime-structure will typically contain more information	on larger data-sets.

	\paragraph{Challenges:}
	For a better understanding of the challenges involved in the problem of HCCD
	it also helps to briefly outline possible approaches to the problem
	considered in the literature.
	Particularly simple are sliding-window approaches that divide the data into smaller
	segments and apply a conventional CD-algorithm locally in time\Slash{}space.
	Similarly, one may apply
	a change-point detection (CPD) method first and apply CD on detected segments.
	Besides finite-sample stability
	a difficulty in both cases is how to aggregate the resulting graphs.
	There is no simple way to leverage possibly reoccurring contexts (especially not
	local contexts), but for example one may cluster graphs.
	Indeed, one may also first cluster data,
	then apply CD per cluster.
	There are also more sophisticated methods that fit multiple models and optimize over
	regime-assignments by predictive quality with \citep{BalsellsRodas2023} or without \citep{Saggioro2020}
	parametric assumptions about regime-structure.
	While evidently the HCCD problem is connected to
	CD, clustering, CPD, optimality and more,
	this connection is as of now not well systematized making the combination of different
	solutions to the different aspects difficult.

	\begin{figure}[ht]
		\centering
		\colorlet{nodecolor}{black}
\colorlet{color_c}{black!10!orange}
\colorlet{color_b}{gray!30!blue}
\colorlet{color_a}{black!30!green}
\colorlet{color_d}{black!30!purple}

\providecommand{\introLocalScale}{1.0}
\providecommand{\introLocalScaleSubX}{1.2}
\providecommand{\introLocalScaleSubY}{0.8}
\providecommand{\introGlobalScaleX}{1.0}
\providecommand{\introGlobalScaleY}{0.25}
\providecommand{\introLocalGlobalLineWidth}{1.2pt} 

	\begin{tikzpicture}[scale=\introLocalScale]
		
		\draw (-1.75, 0) node[rotate=90]{\textbf{local}};
		
		\fill[color_b!30!white] (-1.5,0) rectangle (1.5,1);
		\fill[color_b!30!white] (7.5,0) rectangle (10.5,1);
		\fill[color_b!10!white] (1.5,0) rectangle (7.5,1);
		\draw[color_b, very thick] (1.5,0) -- (1.5,1);
		\draw[color_b, very thick] (7.5,0) -- (7.5,1);
		
		\fill[color_a!30!white] (-1.5,-1.2) rectangle (4.5,-0.2);
		\fill[color_a!10!white] (4.5,-1.2) rectangle (10.5,-0.2);
		\draw[color_a, very thick] (4.5,-1.2) -- (4.5,-0.2);
		
		\draw (0,0) node (A) {
			\begin{tikzpicture}
				[
				xscale=\introLocalScaleSubX,
				yscale=\introLocalScaleSubY,
				inner sep=0,
				outer sep=0.15em,
				varnode/.style={
					circle,
					draw=nodecolor,
					thick,
					fill=nodecolor!20,
					align=center,
					minimum height=1.5em
				},
				mechedge/.style={
					->,
					line width=\introLocalGlobalLineWidth
				}
				]		
				\draw (0,0) node(a)[varnode]{};
				\draw (1,0) node(b)[varnode]{};
				\draw (0.5,1) node(c)[varnode]{};
				\draw (1.5,1) node(d)[varnode]{};
				\draw (1,2) node(e)[varnode]{};
				
				\draw[mechedge,color_a] (a) -- (b);
				\draw[mechedge] (a) -- (c);
				\draw[mechedge] (b) -- (d);
				
				\draw[mechedge,color_b] (e) -- (c);
				\draw[mechedge] (e) -- (d);
			\end{tikzpicture}
		};
		\draw (3,0) node (B) {
			\begin{tikzpicture}
				[
				xscale=\introLocalScaleSubX,
				yscale=\introLocalScaleSubY,
				inner sep=0,
				outer sep=0.15em,
				varnode/.style={
					circle,
					draw=nodecolor,
					thick,
					fill=nodecolor!20,
					align=center,
					minimum height=1.5em
				},
				mechedge/.style={
					->,
					line width=\introLocalGlobalLineWidth
				}
				]		
				\draw (0,0) node(a)[varnode]{};
				\draw (1,0) node(b)[varnode]{};
				\draw (0.5,1) node(c)[varnode]{};
				\draw (1.5,1) node(d)[varnode]{};
				\draw (1,2) node(e)[varnode]{};
				
				\draw[mechedge,color_a] (a) -- (b);
				\draw[mechedge] (a) -- (c);
				\draw[mechedge] (b) -- (d);
				
				\draw[mechedge] (e) -- (d);
			\end{tikzpicture}
		};
		\draw (6,0) node (C) {
			\begin{tikzpicture}
				[
				xscale=\introLocalScaleSubX,
				yscale=\introLocalScaleSubY,
				inner sep=0,
				outer sep=0.15em,
				varnode/.style={
					circle,
					draw=nodecolor,
					thick,
					fill=nodecolor!20,
					align=center,
					minimum height=1.5em
				},
				mechedge/.style={
					->,
					line width=\introLocalGlobalLineWidth
				}
				]		
				\draw (0,0) node(a)[varnode]{};
				\draw (1,0) node(b)[varnode]{};
				\draw (0.5,1) node(c)[varnode]{};
				\draw (1.5,1) node(d)[varnode]{};
				\draw (1,2) node(e)[varnode]{};
				
				\draw[mechedge] (a) -- (c);
				\draw[mechedge] (b) -- (d);
				
				\draw[mechedge] (e) -- (d);
			\end{tikzpicture}
		};
		\draw (9,0) node (D) {
			\begin{tikzpicture}
				[
				xscale=\introLocalScaleSubX,
				yscale=\introLocalScaleSubY,
				inner sep=0,
				outer sep=0.15em,
				varnode/.style={
					circle,
					draw=nodecolor,
					thick,
					fill=nodecolor!20,
					align=center,
					minimum height=1.5em
				},
				mechedge/.style={
					->,
					line width=\introLocalGlobalLineWidth
				}
				]		
				\draw (0,0) node(a)[varnode]{};
				\draw (1,0) node(b)[varnode]{};
				\draw (0.5,1) node(c)[varnode]{};
				\draw (1.5,1) node(d)[varnode]{};
				\draw (1,2) node(e)[varnode]{};
				
				\draw[mechedge] (a) -- (c);
				\draw[mechedge] (b) -- (d);
				
				\draw[mechedge,color_b] (e) -- (c);
				\draw[mechedge] (e) -- (d);
			\end{tikzpicture}
		};
	\end{tikzpicture}\\
	\begin{tikzpicture}[xscale=\introGlobalScaleX,yscale=\introGlobalScaleY]		
		\draw (-1.75, 0) node[rotate=90]{\textbf{global}};
		
		\fill[color_a!30!white] (-1.5,-1.2) rectangle (1.5,1.2);
		\fill[color_b!30!white] (1.5,-1.2) rectangle (4.5,1.2);
		\fill[color_c!30!white] (4.5,-1.2) rectangle (7.5,1.2);
		\fill[color_d!30!white] (7.5,-1.2) rectangle (10.5,1.2);
		
		\draw[black, very thick] (1.5,-1.2) -- (1.5,1.2);
		\draw[black, very thick] (7.5,-1.2) -- (7.5,1.2);
		\draw[black, very thick] (4.5,-1.2) -- (4.5,1.2);
		
		\draw (0,0) node (A) {A};
		\draw (3,0) node (B) {B};
		\draw (6,0) node (C) {C};
		\draw (9,0) node (D) {D};
	\end{tikzpicture}\\[-0.75em]
		\caption{Illustration of complexity-scaling for local vs.\ global description of contexts
			on a time-series example.
			Locally there are two contexts (top panel), globally the number of contexts increases
			exponentially in the number of local contexts (bottom panel).
			The global contexts can always be decomposed in terms of local contexts, this does not
			require the existence of corresponding index-set\Slash{}spatial directions as
			those used in Fig.\ \ref{fig:intro_toy_model}b.}
		\label{fig:illustrate_global_vs_local_structure}
	\end{figure}
	
	When comparing these quite diverse approaches to what we picked as an interesting inductive bias above,
	we note that they all share two deep conceptual challenges:
	First, they all are assigning contexts globally in the model\Slash{}graph, while we came to believe that
	the model changes primarily locally. But given $\kappa$ local changes
	(binary context indicators)
	one observes up to a total of $2^{\kappa}$ global combinations (contexts);
	thus global (in this sense)
	methods scale exponentially in the number of local changes.
	This is illustrated in Fig.\ \ref{fig:illustrate_global_vs_local_structure}.
	Additionally, the local signal (independent
	of node-count) is obfuscated by the all noise globally
	(with entropy approximately proportional to node-count); thus global (in this sense)
	methods suffer from large node-counts.
	
	Second, less evident, but much more worrisome is an observation concerning the scaling to larger
	sample-size. All the above approaches (barring sliding windows)
	follow in some way the intuitive logic of reconstructing regimes, then running CD per-regime.
	However, the first step necessarily produces imperfect reconstructions, so we find ourselves
	in the difficult situation, where the second step (CD) \emph{must not} converge fast enough
	to detect these imperfections. We are effectively racing our CD (or independence test) vs.\ our
	method of regime-assignment.
	Especially, if the information contained in the regime-structure is not artificially held constant
	by matching typical regime-scales to sample-size, the reconstruction of regimes will often
	converge slower than the independence tests used by CD.
	Indeed, this leads in numerical experiments §\ref{sec:num_experiments} to a systematic and
	substantial degradation of results on large data-sets.
	This is illustrated in Fig.\ \ref{fig:illustrate_direct_vs_indirect}.
	
	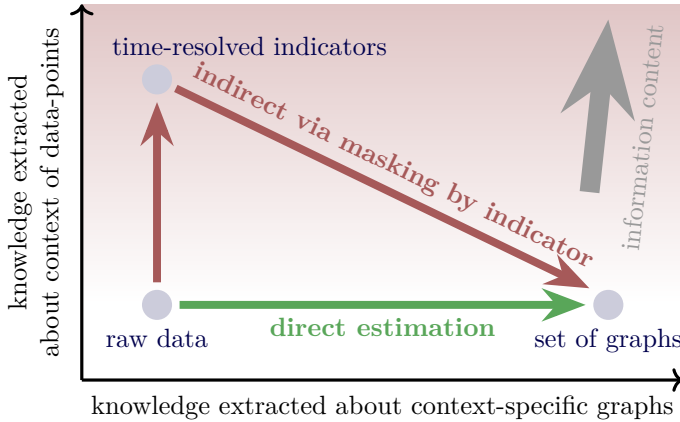
\begin{figure}[ht]
		\begin{minipage}{0.65\textwidth}
			\colorlet{model}{blue!30!black}
\colorlet{model_fill}{model!20!white}

\colorlet{output}{blue!30!black}
\colorlet{output_fill}{model!20!white}

\colorlet{color_simple}{gray!70!green}
\colorlet{color_complex}{gray!70!red}
\colorlet{color_simple_fill}{color_simple!70!white}
\colorlet{color_complex_fill}{color_complex!70!white}
\colorlet{color_simple_back}{color_simple_fill!50!white}
\colorlet{color_complex_back}{color_complex_fill!50!white}

\providecommand{\introDirectIndirectScale}{1.0}

\begin{tikzpicture}[scale=\introDirectIndirectScale]
	\node [shading = axis,rectangle, top color=color_complex_back, bottom color=white, anchor=north west, minimum width=\introDirectIndirectScale*8cm, minimum height=\introDirectIndirectScale*4cm] (background) at (0,5){};
	
	\draw[very thick,->] (0,0) -- (0,5) node[pos=0.5, rotate=90, align=center,anchor=south,yshift=0.3em]
		{knowledge extracted\\about context of data-points};
	\draw[very thick,->] (0,0) -- (8,0) node[pos=0.5, align=center, anchor=north, yshift=-0.2em]
		{knowledge extracted about context-specific graphs};
		
	\fill[model_fill] (1,1) circle(0.2);
	\draw[model] (1,0.8) node[anchor=north] {raw data};
	
	\fill[output_fill] (1,4) circle(0.2);
	\draw[output] (1,4.2) node[anchor=south, xshift=3.5em]{(time-)resolved indicators};
	
	\fill[output_fill] (7,1) circle(0.2);
	\draw[output] (7,0.8) node[anchor=north] {set of graphs};
%
	
	\draw[line width=0.3em,color_simple,-Stealth] (1.3, 1) -- (6.7,1)
		node [pos=0.5, below]{\textbf{direct estimation}};
	\draw[line width=0.3em,color_complex,-Stealth, dashed] (1, 1.3) -- (1,3.7);
	\draw[line width=0.3em,color_complex,-Stealth, dashed] (1.25, 3.88) -- (6.75,1.22)
		node [pos=0.5, above, sloped] {\textbf{indirect via masking by indicator}};
		
	\draw[line width=0.75em, black!40!white,-Stealth] (6.75,2.5) -- (7, 4.8)
		node [pos=0.35, sloped, anchor=north, yshift=-0.7em] {information content};
\end{tikzpicture}
		\end{minipage}
		\hfill
		\begin{minipage}{0.34\textwidth}			
			\caption{Illustration of direct vs.\ indirect graph-discovery.
				The green (solid) arrow requires only the extraction of low-complexity knowledge,
				red (dashed) arrows require the extraction of high-complexity knowledge.
				The information-content of time-resolved indicators will typically increase
				with larger sample-size,
				while the information
				contained in causal graphs does not.
				So the gradient in information-content (gray arrow, wide),
				for larger data-sets, points steeply upwards.}
			\label{fig:illustrate_direct_vs_indirect}
		\end{minipage}
	\end{figure}
	
	To approach these fundamental challenges, we seek to adapt two guiding principles:
	Work locally in the graph (gL) and test directly (D). By testing directly, we mean,
	avoid the intuitive but perilous detour through recovering the (often complex) information
	contained in the regime-structure, but approach the eventually required result directly
	(this will become much clearer with the concrete approach in §\ref{sec:dyn_indep_tests}).
	In reference to these principles we will refer to our approach as a gLD-framework.
	
	\begin{rmk}
		Intuitively, one would first recover the regime-structure, then per-context graphs.
		Testing for graphs directly also means the recovery of regime-structure (now the second step)
		can employ the graphical results, for example to optimize signal-to-noise, for better results.
		Indeed there are multiple simplifications possible, cf.\ §\ref{apdx:indicator_resolution}.
		The present paper focuses on the (direct) graph-discovery step of this scheme.
	\end{rmk}
	
	Besides these statistical challenges, there is also a practical one:
	Non-stationarity encompasses a vast range of phenomena that each may be best approached
	through very different methods. Thus a systematic approach to structure the problem
	and \emph{fit together} a multitude of specific methods and ideas is of great relevance in practice.
	To this end,
	we try to keep the framework modular, making the relation of its separate parts to
	conventional CD methods, independence-testing, CPD and clustering clearer. This modularity also
	applies to the assumptions about regime-structure:
	As the reader may have noted at the list of examples above,
	we emphasized how similar problems arise from patterns in time, in space or in other parameters. Indeed,
	our framework allows for the precise kind of pattern to be specified and leveraged for detection
	almost entirely separate from other aspects of the setup.
		
	\colorlet{boxcolor}{black!30!blue}
	\colorlet{datacolor}{black!30!green}
	\colorlet{discoverycolor}{black!30!orange}
	\colorlet{compositioncolor}{black!30!purple}
	\usetikzlibrary{calc}
	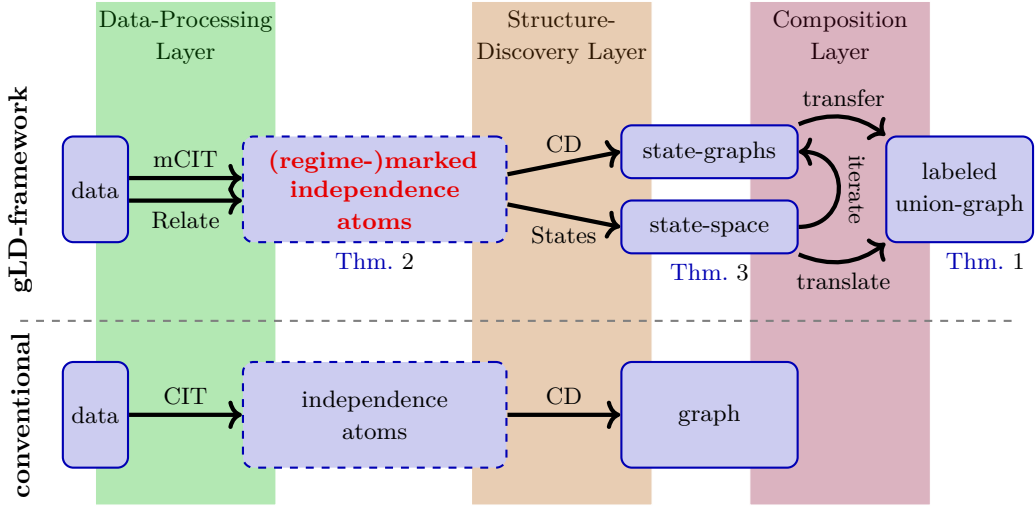
\begin{figure}[ht]
		\colorlet{boxcolor}{black!30!blue}
\colorlet{datacolor}{black!30!green}
\colorlet{discoverycolor}{black!30!orange}
\colorlet{compositioncolor}{black!30!purple}
\usetikzlibrary{calc}

\providecommand{\introFrameworkScale}{1.0}
\providecommand{\introFrameworkIterShift}{2em}
\providecommand{\introFrameworkThmMCIT}{Thm.\ \ref{thm:mCIT}}
\providecommand{\introFrameworkThmStateSpace}{Thm.\ \ref{thm:state_space_construction}}
\providecommand{\introFrameworkThmUnionGraph}{Thm.\ \ref{thm:core_algo}}

\begin{tikzpicture}
	[
	scale=\introFrameworkScale,
	block/.style={
		rectangle,
		draw=boxcolor,
		thick,
		fill=boxcolor!20,
		align=center,
		rounded corners,
		minimum height=4em
	}
	]
	\draw[fill=datacolor!30, draw=datacolor!0] (1.0,-4.2) rectangle (3.4, 2.5);
	\draw (2.2, 2.5) node [anchor=north, align=center, color=datacolor!25!black]
	{\small Data-Processing\\\small Layer};
	\draw[color=datacolor!25!black] (1.05, 1.575) -- (3.35,1.575);
	
	\draw[fill=discoverycolor!30, draw=discoverycolor!0] (6.0,-4.2) rectangle (8.4, 2.5);
	\draw (7.2, 2.5) node [anchor=north, align=center, color=discoverycolor!25!black]
	{\small Structure-\\\small Discovery Layer};
	\draw[color=discoverycolor!25!black] (6.05, 1.575) -- (8.35,1.575);
	
	\draw[fill=compositioncolor!30, draw=compositioncolor!0]
	(9.7,-4.2) rectangle (12.1, 2.5);
	\draw (10.9, 2.5) node [anchor=north, align=center, color=compositioncolor!25!black]
	{\small Composition\\\small Layer};
	\draw[color=compositioncolor!25!black] (9.75, 1.575) -- (12.05,1.575);


	\draw (0,-3) node (hCD) [rotate=90] {\textbf{conventional}};
	
	\draw (1,-3) node (dataCD) [block] {\small data};
	\draw[->, ultra thick] (dataCD.east) --	+(1.5,0)
	node [above, pos=0.5] {\small CIT}
	node (istructCD) [block, anchor=west, dashed, text width=9.3em] {\small independence\\\small atoms};
	\draw[->, ultra thick] (istructCD.east) -- +(1.5,0)
	node [above, pos=0.5]{\small CD}
	node (graphCD) [block, anchor=west, text width=6em] {\small graph};
	
	\draw[color=gray, dashed, thick] (0,-1.75) -- +(13.25,0);
	
	\draw (0,0) node (hGLDF) [rotate=90] {\textbf{gLD-framework}};
	
	\draw (1,0) node (dataGLDF) [block] {\small data};
	\draw (dataGLDF.east)+(1.5,0) node (transientGLDF) [block, anchor=west, dashed, text width=9.3em]
	{\small\textbf{\textcolor{red!90!black}{(regime-)marked}}\\
		\small\textbf{\textcolor{red!90!black}{independence}}\\
		\small\textbf{\textcolor{red!90!black}{atoms}}};	
	\draw[boxcolor] (transientGLDF.south) node[anchor=north]
	{\small \introFrameworkThmMCIT};
	\draw[->, ultra thick] (dataGLDF.east)+(0,+0.15) --
	node [above, pos=0.5] {\small mCIT} 
	($(transientGLDF.west)+(0,+0.15)$);		
	\draw[->, ultra thick] (dataGLDF.east)+(0,-0.15) --
	node [below, pos=0.5] {\small Relate}		
	($(transientGLDF.west)+(0,-0.15)$);		
	\draw[->, ultra thick] ($(transientGLDF.east)+(0,0.2)$) -- +(1.5,0.3)
	node [above, pos=0.5]{\small CD}
	node (graphGLDF) [block, minimum height=2em, anchor=west, text width=6em]
	{\small state-graphs};
	\draw[->, ultra thick] ($(transientGLDF.east)+(0,-0.2)$) -- +(1.5,-0.3)
	node [below, pos=0.5, yshift=-0.6em, align=center]{\small construct\\[-0.3em]\small state-space}
	node (statesGLDF) [block, minimum height=2em, anchor=west, text width=6em]
	{\small state-space};
	\draw[boxcolor] (statesGLDF.south) node[anchor=north, xshift=-0.3em]
	{\small \introFrameworkThmStateSpace};
	\draw[->, ultra thick] (statesGLDF.east) to[bend right=90, distance=\introFrameworkIterShift] 
	node [right, pos=0.5, rotate=-90, anchor=south] {\small iterate}
	(graphGLDF.east);
	
	\draw (11.5, 0) node (resultGLDF) [block, anchor=west]
	{\small labeled\\\small union-graph};
	\draw[boxcolor] (resultGLDF.south east) node[anchor=north east]
	{\small \introFrameworkThmUnionGraph};
	\draw[->, ultra thick] (graphGLDF.north east) to[bend left=30]
	node[above, midway]{\small transfer}
	(resultGLDF.north west);
	\draw[->, ultra thick] (statesGLDF.south east) to[bend right=30]
	node[below, midway]{\small translate}
	(resultGLDF.south west);
\end{tikzpicture}
		\caption{
			Framework Architecture.
			Blue boxes are abstract concepts encoding knowledge
			(dashed: commonly lazily evaluated).
			Arrows represent algorithmic components
			converting such knowledge (see §\ref{sec:dynamic_cd});
			these components are highly modular:
			Conventionally one may combine different CD-algorithms
			with different CITs; in our framework individual
			components enjoy similar independence.
			By independence \inquotes{atoms} we refer to individual independencies
			(no relations like d-separations are implied at that stage).
			The core idea is the introduction of a new abstraction of independence atoms
			that additionally encode context-information. Its careful choice enables our framework.
		}\label{fig:architecture}
	\end{figure}
	
	\paragraph{Realization of the guiding principles:}
	Having laid out the problem, its challenges, and conceptual bounds for its realization,
	we can finally discuss how to actually approach the solution of the problem.
	Figure \ref{fig:architecture} illustrates the high-level architecture of our framework.
	First focus on the lower panel conceptualizing the dataflow in conventional CD:
	There are two abstract layers. A CD-logic, which operates not on data directly, but rather
	on independence-statements, and a conditional independence test (CIT) 
	transforming data into independence-statements.
	The core idea of our framework is to modify the CIT to also bear the
	main burden of pattern-discovery. This means we can retain CD almost unchanged
	(both in theory and code-implementation; both IID and time series), and have produced a problem
	local in the graph already.
	All scaling with the regime-resolution's complexity is confined to the
	data-processing layer which already produces bounded (and in practice low) complexity output.
	The direct-testing principle thus will be realized in the data-processing layer.
	
	Evidently the main difficulty is now in actually realizing such a modified CIT that checks for the
	existence of a pattern in (conditional) independence rather than just independence vs.\ dependence.
	This test can avoid scaling with information in the regime-structure, because
	an output deciding between three alternatives -- independence, dependence or existence of
	any regime-structure realizing both --
	turns out to be (almost; §\ref{sec:indicator_translation}) enough information to complete HCCD.
	This main difficulty of producing suitable modified CITs
	can further be subdivided into sub-tasks (see §\ref{sec:dyn_indep_tests}).
	Among other advantages, this simplifies the identification of CPD, CIT and clustering related subtasks,
	and isolates the underlying problem sufficiently to make a direct testing approach not just
	feasible, but makes it manifest itself intuitively.
	Finally, this systematic analysis of the task also reveals two unavoidable limitations
	to what kinds of results are possible for HCCD, irrespective of what methodology is employed.
	These limitations are related to testing for the \emph{existence} of non-IID structure,
	and to the combination of accepting independence while rejecting IIDness for \emph{independent regimes}.
	Knowing of these fundamental restrictions,
	we can then understand theoretically and numerically the incurred trade-offs one has to consider,
	which also helps making informed hyperparameter decisions.
	
	The ideas outlined in the previous paragraph modify what is labeled as \inquotes{data-processing layer}
	in Fig.\ \ref{fig:architecture}.
	However, we also have to systematically evaluate the additional information provided by this
	modified data-processing layer.
	In particular, we have to recover a space of contexts or \inquotes{model-states}
	(even the number of changes in the independence-structure need not be the same as the number of changes in the model;
	arrow labeled \inquotes{construct state-space} in Fig.\ \ref{fig:architecture}).
	Finally, other than for conventional CD, we have to compose a meaningful HCCD output:
	We have to co-ordinate multiple applications of CD (there are evidently multiple graphs to produce).
	Observed changes in the independence-structure can be attributed non-trivially
	to model-properties.
	Some edge-orientation information can be transferred between contexts, other
	information about skeleton and orientations in context(-specific)-graphs is determined by the underlying
	(IID\Slash{}stationary) CD-algorithm; for example application with FCI will output
	partially directed acyclic graphs (PAGs).
	At least in the (suitably) acyclic case, the result can then be represented as a labeled
	union-graph: a single graph where context-specifically changing links are labeled
	(or marked in some other way, see Fig.\ \ref{fig:intro_toy_model}).

	\paragraph{Contributions:}
	The main contributions of this paper are to
	\begin{itemize}
		\item
		analyze the choice of useful inductive biases, improve the understanding of the HCCD problem and its challenges,
		and lay out guiding principles to avoid apparent difficulties on a conceptual level (§\ref{sec:intro}).
		\item
		devise a framework (Fig.\ \ref{fig:architecture}) based around modifying the independence-testing procedure that can realize these
		principles in a highly modular way.
		\item
		analyze in detail the realization of the modified independence-testing (§\ref{sec:dyn_indep_tests})
		theoretically and in (sub-system level) numerical experiments;
		the analysis includes the uncovering of fundamental limitations
		to the feasibility of HCCD
		(Prop.\ \ref{prop:imposs_A}, \ref{prop:imposs_B}, Lemma \ref{lemma:necessity_of_dyn_indep_struct}), practical resolutions via trade-offs (Fig.\ \ref{fig:tradeoffs}) and meaningful
		convergence statements (Thm.\ \ref{thm:mCIT}), as well as potential paths forward (§\ref{apdx:mCIT_future_work}).
		\item 
		analyze in detail the specification and realization of the state-space recovery
		(§\ref{sec:indicator_translation}, Thm.\ \ref{thm:state_space_construction}).
		This includes a meaningful notion of identifiable states (§\ref{sec:identifiable_states})
		and theoretically sound, practically feasible algorithmic implementations (§\ref{sec:indicator_translation}) for the acyclic and modular case.
		We also provide a proof-of-concept implementation of the required indicator-relation
		test (§\ref{apdx:implication_testing}).
		\item 
		analyze in detail the properties of the composition algorithm (§\ref{sec:dynamic_cd},
		Thm.\ \ref{thm:core_algo}).
		This includes a proof
		of its soundness given an oracle for the modified independence-tests
		and sound state-space recovery (previous two points).
	\end{itemize}
	The structure of our framework may, after the fact, appear evident and unsurprising,
	however, there are a great many plausible approaches to the problem, and it seems to be
	primarily the \emph{correct structure} that decides, whether ultimately everything fits
	together and produces useful results on finite data.
	
	To demonstrate the feasibility and validate our ideas about the inductive bias incurred,
	we further
	\begin{itemize}
		\item
		implement, for each module of our framework, a simple baseline version
		and combine them into a reference method.
		\item
		perform extensive numerical experiments and comparisons to other approaches
		to test our expectations concerning the behavior with a number of parameters.
		\item
		discuss the results of these experiments, and some weaknesses unveiled.
	\end{itemize}
	Our numerical experiments focus on simple settings
	(continuous variables, linear models, no non-graphical non-stationarities),
	but we believe that our very modular approach should generalize well to more complex settings.
	We include discussions of potential future work, supporting this idea.
		
	\begin{rmk}[Nomenclature]
		The terms context, state and regime are sometimes assigned rather
		specific meanings in the literature.
		This should not cause confusion here, as the meaning in formal
		results is always fixed and in informal statements sufficiently clear
		from context. Nevertheless, we briefly outline, how and
		why we use multiple expressions for similar concepts:
		We refer to regions of the index-set (for example in time)
		that are described by the same model as regimes, the resolution
		of regimes in the index-set (for example in time) as regime-structure
		and the individual models as context-specific (they are derived from
		a meta-model given the context of its regime). If a link changes
		(is present in some, but not all contexts), then we refer to the presence
		of the link as local context and its local regime-structure as binary context-indicator.
		Alternatively the meta-model may be described as a finite state machine,
		with its state setting the context of for observations.
		Generally the details of this state machine will depend on
		exogenous unobserved factors and cannot be recovered from observations.
		Hence we focus on results about the context-enriched causal system instead.
		However, a causal modularity assumption can be employed to
		structure the space of states (and thus the set of contexts)
		effectively. In cases where such assumptions relating
		contexts to states can be leveraged, we preferentially use the term state
		(rather than context).
	\end{rmk}

	\section{Related Literature}\label{sec:related_lit}
	
	The notion of causality we use is the one developed by Pearl and others, see \eg
	\citep{PearlBook, Elements, Glymour2016}; other approaches to formally capture causation, for example from
	temporal properties \citep{granger1969investigating}, are also employed by some references below.
	\textbf{Causal discovery} algorithms have been a subject of study for decades. Our work builds on constraint-based
	algorithms \citep{spirtes2001causation, SGSalgo, PCalgo, pcmci, pcmci_plus, lpcmci}, but also other approaches like score-based
	methods \citep{Meek1997GraphicalMS, Chickering2002OptimalSI}, especially when using local scores,
	may be compatible with many of the ideas presented.
	Besides the study of different data-setups
	like (discrete) time series \citep{pcmci, granger1969investigating}, 
	also event-based models like \citep{Xu2016LearningGC} or, especially more recently, continuous-time
	models \citep{manten2024signature,boeken2024dynamic} are studied in the literature.
	Below, we focus in particular on variants that take into account multi-context data (like multiple datasets
	or multiple contexts) from a mostly constraint based view-point. We study the case of observational data,
	interventional data from multiple contexts is studied for example by \citep{peters2016causal,Li2023CausalDF}.
	
	Adding non-stationarities to a problem \textbf{involves a large number of choices}:
	The considered functional form \eg in time (like smooth functions, discrete contexts, etc.),
	the affected aspects of the model (in causal terms, for example noises, parent-sets, mechanisms
	and relations between such local choices) and the considered prior knowledge
	about these aspects.
	
	But beyond these aspects of the system, there are further many plausible \textbf{goals} to be considered:
	Non-stationarity affects the quality of results produced by tests
	often designed for stationary (or IID) data -- for example the rates at which CITs report
	false positives or false negatives -- and thus it can be studied as a problem of
	robustness of testing \citep{CD-NOD,Saggioro2020}.
	A related question is that of the \emph{interpretation} of heterogeneous information in terms of
	shared model-information, for example a \inquotes{union-graph} \citep{Saeed2020,strobl2023causal}.
	Multi-context data does contain additional information about the joint (here: stationary)
	aspects of a system \cite{bareinboim2012transportability,CD-NOD,JCI,JPCMCI}.
	An often studied aspect of this is information about edge-orientations\footnote{Additional edge-orientations
	can also be found for example by restricting function-classes of mechanism (\eg assuming linearity) \citep{shimizu2006linear},
	multi-context information provides an additional pathway to such information.},
	for example \inquotes{Constraint-based causal Discovery from heterogeneous/non-stationary Data}
	(CD-NOD) \citep{CD-NOD} (with a recent more time series focused variant \citep{CD-NOTS})
	uses kernel-tests to analyze which mechanisms vary (how strong);
	this information can under a causal modularity assumption be interpreted as orientation information.
	Finally, it is also a quite reasonable goal to extract information about non-stationarities.
	This includes differences between discovered contexts, which we discus below first for
	known context-assignments (where also some further details about extracting joint information
	like orientations, see above, are given).
	Then we give a similar discussion for the case of hidden contexts.
	In this case, additionally the question about the extraction of the form of non-stationary
	behavior, for example as a function of time, arises.
	
	\textbf{Given knowledge about context-assignments}, this knowledge can generally be
	exploited to get orientation information and locate changes in the graph \citep{CD-NOD,JCI,JPCMCI},
	much of which was systematized by the \inquotes{joint causal inference} (JCI) framework \citep{JCI}.
	The basic idea is, to include the context as a variable -- possibly with special properties
	like presumed exogeneity -- into CD, which both localizes the context-dependence in the graph
	and makes new orientation information available from new v-structures \cite{Verma1991}
	(intuitively, if a mechanism $X\rightarrow Y$ is context-dependent, then $Y$ is context-dependent,
	while $X$ is not, so that the orientation $X\rightarrow Y$ can be inferred,
	intuitively from $Y$ changing and by CD from the unshielded collider $X \rightarrow Y \leftarrow C$).
	Context-assignment can be highly flexible, for example
	\citep{Cai2021THPsTH, Zhou2013LearningSI, Xu2016LearningGC} employ a graphical structure also on context-information
	(reflecting for example a device-network-topology), with a formalization that ultimately draws on
	the theory of Hawkes-processes \citep{delattre2016hawkes} (to study event data),
	while \cite{gao2023causal}
	uses periodic structures, but many methods specialize to
	persistent temporal regimes \citep{Saggioro2020} or multiple data-sets \citep{bareinboim2012transportability,SelectionVars,JCI,JPCMCI}.
	CD-NOD \citep{CD-NOD} can treat the second case and additionally the case of quantitative (effect strength)
	temporal variation,
	where it does not require prior information	about precise regime-locations, see below.
	Further also knowledge about \emph{context-specific} graphs (as opposed to union-models) can be exploited
	for example for down-stream tasks like effect-estimation \citep{bareinboim2012transportability, SelectionVars}.
	Indeed this problem has also been studied for example by \citep{EndoMethod} and from the perspective
	of specifically finding differences (for example in causal effects) between contexts by \citep{assaad2024causal}.
	
	\textbf{Not knowing the context-assignments in advance}, some of the above ideas still indirectly apply,
	for example \citep{JPCMCI} is able to deconfound certain context-related (in the sense of \citep{JCI})
	situations. The full recovery of context-specific graphs and context-assignments (the main topic of the present paper)
	also has been studied before. A simple, but very practical, approach is to use some sort of sliding window-procedure.
	The work pursuing the most similar goals to ours is probably \citep{Saggioro2020},
	who develop an algorithm called Regime-PCMCI to discover different regimes and associated graphs.
	The basic idea there being to (starting from an initial guess) iteratively improve regime-assignments by fitting causal models
	per context and optimizing assignments for optimal predictive quality. The requirement to fit a model puts
	some restrictions on possible models (no confounders and edges must be orientable, \citep{Saggioro2020} study
	time series without contemporaneous links and without confounders), but it allows to discover also
	shifting (but stable within each context) causal effects, beyond graphical changes (which we focus on).
	There are also comparable parametric approaches \cite{malinsky2019learning}.
	Albeit for very different types of data \citep{BalsellsRodas2023}, propose a approach which also
	employs model (density) fitting plus a parametric (hidden Markov model-like) assumption on regime-assignments.
	A final remark, especially compared to typical \citep{JCI} settings, should be that we study a local
	and binary (dependent or independent) problem, so that the setup for our purposes can always
	be simplified (without loss of generality) to the study of binary context-variables.
	
	This binary (in particular categorical) nature of the context means we are looking for \textbf{qualitative changes}.
	It is of course also reasonable to investigate quantitative changes (drifts or more complex dependencies on
	a sample-index like time), see for example \citep{mameche2025spacetime}; interestingly their approach seems to
	be local in the graph (this also applies to CD-NOD, see next paragraph).	
	Ultimately one is in such cases often interested in qualitative questions, like \inquotes{is there
	an upward\Slash{}downward trend}, at which point the philosophy behind our framework might be helpful.
	Generally the idea of testing such properties directly is of course not new, see for example
	\citep[§10.13]{VapnikEstimation}, but we are not aware of this idea having been applied to
	independence-testing, CD (as studied in this paper) or even effect-estimation questions (as posed above).
	
	For the temporal case \textbf{CD-NOD} \citep{CD-NOD} specializes precisely in finding quantitative 
	changes. While this formally includes qualitative changes as a special case, no information about such
	a qualitative nature is extracted. Indeed this might also prove extraordinary challenging in a kernel-setting.
	CD-NOD and our approach in many ways supplement each other and it would be an interesting question for future
	work, if they actually synergize well in practice: CD-NOD can in particular extract orientation-information
	that would not be conventionally accessible by constraint-based causal discovery methods, and the kernel-based
	implementations it provides are well-suited for the analysis of quantitative smooth changes;
	our approach on the other hand focuses on extracting qualitative information and wraps CD methods
	in a way that could also incorporate CD-NOD's orientation information in principle.
	CD-NOD introduces new information to what we label the data-processing layer in Fig.\ \ref{fig:architecture};
	while not phrased as a modification of the independence-structure in \citep{CD-NOD}, 
	the orientation-information produced by in their test could probably be systematically included
	in such a way.

	An important conceptual step in our framework is to modify the notion of \textbf{independence-structure}.
	Indeed there is a related class of concepts loosely referred to as \inquotes{local independence}
	(local in the indexing, \eg time, not in a causal graph)
	according to \citep{henning1989meanings}
	\inquotes{the concept of local (conditional) independence has been ascribed at least a dozen definitions in the testing literature}.
	Typically, the assumption is that (in some formal sense)
	there is a latent $L$ such that knowing $L$ we have for example $X\independent Y | L$
	while $X \dependent Y$, where $L$ contains little enough information (in some sense) for this problem to become testable.
	Compare this to our setup, which can instead be formulated as there being an $L$
	such that values of $L$ split the data
	into subsets $X\independent Y | L=0$ while  $X\dependent Y | L=1$.
	Local independence ideas in this sense have played a role in causal methodology for example in
	\citep{JPCMCI}. \citep{rodas2021causal} introduces a related idea of conditional stationarity for quantitative properties.
	There are also modifications of (graphical) independence-models to account for known context-assignments,
	especially (but not only) for categorical variables as \inquotes{stratified graph} \citep{stratified_graphs}
	and (closely related) \inquotes{labeled DAG} (LDAG) \citep{LDAG_logical, LDAG_learning}.
	For context-specific knowledge, the translation of independence information into causal information
	can become quite non-trivial \citep{EndoMethod,EndoTheory}.
	On the practical testing of conditional independence, there is of course likewise a large corpus of
	literature; a comprehensive review is not possible within the scope of this work.
	However, we want to highlight two examples relevant to our discussion:
	Already \citep{Aitkin1985EstimationAH} discuss the testing of independencies on mixtures.
	Further there are known impossibility-results \citep{shah2020hardness} beyond the (pattern-detection related)
	ones we encounter in §\ref{sec:dyn_indep_tests}.
	
	The questions we ask may also be seen as (and are certainly related to) aspects of \textbf{multi-scale} problems.
	Such problems are ubiquitous in nature (\eg \citep{Kannenberg2019})
	and have formal approaches ranging from multi-level statistics \citep{Gelman2006} to
	causal effect-estimation in the Fourier-domain\footnote{
		Also the \inquotes{driving force estimation} of CD-NOD \citep{CD-NOD},
		quantifies context-influence via suitable eigenvectors.
		While not a Fourier\Slash{}Laplace-transform, this also is a basis-change in function-space,
		with focus on an additional relevance-sorting to make projection to a low-dimensional subspace meaningful.}
	\citep{reiter2023causal}.
	Further, we detect patterns in all-variable samples (like regions in time),
	while there is also the orthogonal problem of finding features representing variables in-sample in the
	approaches called causal feature learning (CFL) \citep{CFL,CFL_ElNino} and causal representation learning
	(CRL) \citep{scholkopf2021toward}.
	
	Finally, we will naturally encounter questions from a wide range of \textbf{related fields}.
	These include change-point detection \citep{truong2020selective}
	(or more generally signal processing), also by causal principles \citep{gao2024causal}
	(which could be helpful for the second step of indicator-resolution,
	see also §\ref{apdx:indicator_resolution}), clustering.
	Hyperparameter choices have relations to over-\Slash{}under-fitting questions
	and the problem is inherently related to missing-value problems, so unsurprisingly
	many approaches \citep{Saggioro2020, BalsellsRodas2023} are implicitly or explicitly
	related to the EM-algorithm \citep{dempster1977maximum}
	and it seems plausible that such approaches could improve our current
	implementation of §\ref{sec:weak_regimes}.
	Also ideas from the shape\Slash{}pattern recognition literature \citep{arias2005near}
	are related, even though the typical problem-statements seem to involve a
	signal vs.\ size behavior (\eg keeping the integral over the signal fixed) that is not applicable
	to independence-testing (there is no such thing as a \inquotes{stronger} independence signal, as this is the
	null hypothesis).
	An important connection that perhaps should be investigated further in future work is that to Vapnik--Chervonenkis (VC)
	theory. Many of the properties we encounter in §\ref{sec:dyn_indep_tests} seem to mirror our expectations
	from VC-theory, yet they are surprisingly difficult to make concrete. The fundamental limitation
	of Prop.\ \ref{prop:imposs_A} can be intuitively understood as a maximum amount of asymptotic
	pattern-information per sample, which seems closely related to VC-entropy \citep[§7.A1]{VapnikEstimation}.

	\section{Underlying Model}\label{sec:models}
	
	Causal properties are properties of the underlying generating model, not of the
	underlying distribution \citep[§1.5 (p.\,38)]{PearlBook}.
	We want to study causal properties of non-IID (or non-stationary time series) systems
	and it is thus of great importance,
	to clearly specify what such a model may look like and which of its properties we are interested in.
	We start from structural causal models (SCMs) in the usual sense §\ref{sec:standard_scm} and generalize
	them in a way that formally captures the properties relevant to changing causal graphs §\ref{sec:model_defs}.
	If the graph associated to our model can change, then the underlying model must transition
	between different states §\ref{sec:model_states} each described by a simpler model.
	Not all such model-states will occur in data, and not all of them can necessarily be distinguished by
	a CD-algorithm in practice; to make the second point clearer, we will first discuss CD in a standard
	sense in §\ref{sec:stationary_cd}, before returning to model-states in §\ref{sec:identifiable_states}.
	The discussion of CD in a standard sense §\ref{sec:stationary_cd} also gives a more abstract perspective on
	constraint-based CD as an abstract machinery transforming independence information into information about
	causal graphs, which will simplify arguments for the discussion of HCCD in §\ref{sec:dynamic_cd}.
	Finally, we outline assumptions and goals for causal discovery in this setup §\ref{sec:goals}.
	Throughout this section we focus on the case where the regime-structure occurs at random,
	but is fixed as part of the model; details on the modeling of randomness of the regime-structure
	itself in a meta-model are discussed in the appendix §\ref{apdx:model_details}.

	\subsection{SCMs}\label{sec:standard_scm}
		
	We start from the standard formulation via structural causal models (SCM) \citep{PearlBook,Elements}:
	\begin{notation}\label{def:SCM}
		For some finite index set $I$, fix a set of \defName{endogenous variables}
		$\{X_i\}_{i\in I}$,
		taking values in $\val{X}_i$,
		with $i \in \mathcal{O} \subseteq I$ observed (if $I = \mathcal{O}$ the model is called causally sufficient),
		and mutually independent \defName{exogenous noises} (hidden) $\{\eta_i\}_{i\in I}$,
		taking values in $\val{N}_i$.
		We write $V := \{X_i | i\in I\}$ for the set of all endogenous variables,
		$U := \{\eta_i | i\in I\}$ for the set of all exogenous noises.
		For $A \subseteq V$, denote $X_A = (x_j)_{j\in A} \in \val{X}_A := \prod_{j\in A} \val{X}_j$,
		and similarly for $B \subseteq U$,
		further $\val{X} := \val{X}_V$ and $\val{N} := \val{N}_U$.
		
		For each $i \in I$ there is a set of \defName{parents}
		$\Pa_i \subseteq I \setminus \{i\}$ and a \defName{mechanism}
		$f_i : \val{X}_{\Pa_i} \times \val{N}_i \rightarrow \val{X}_i$
		such that the endogenous variables satisfy the \defName{structural equations}
		$X_i = f_i(X_{\Pa_i}, \eta_i)$ relative to its parents and noise-term.
		Parent-sets are assumed to satisfy a suitable minimality condition,
		\eg \citep[Def.\ 2.6]{BongersCyclic}.
		The model is called additive if each $f_i$ can be written as		
		$f_i(X_{\Pa_i}, \eta_i) = \eta_i + \sum_{j\in\Pa_i} f_{ij}(X_j)$.
		It is called linear if each $f_i$ is (affine-)linear.
		
		In the causally sufficient case,
		the causal graph $G$ is a graph on the vertex-set $\mathcal{O} = I$
		with a directed edge from $i$ to $j$ iff $j \in \Pa_i$.
		In the presence of latent confounders $\mathcal{L} = I \setminus \mathcal{O} \neq \emptyset$,
		bi-directed edges are added between $i$ and $j$ iff
		there is a latent $l \in \mathcal{L}$ and directed paths
		from $X_l$ to $X_i$ and $X_j$ where only the respective endpoints
		($i$ and $j$) are observed.
	\end{notation}

	\subsection{Non-Stationary Models}\label{sec:model_defs}

	Graphical changes are inherently qualitative: Each (directed) link in the graph may be
	present or not. In particular, explicit time-dependence of the causal graph
	can always be described by a collection of \inquotes{indicators} that indicate the presence
	(or absence) of any particular link. Links that never change have a trivial indicator:
	\begin{notation}
		To make the distinction to the variable-indices $I$ clearer,
		we denote the sample-indices by $T$, where \asswlog $T\subseteq \mathbb{Z}$.
		We refer to $t\in T$ as time for the same reason,
		but $T$ can be any index set and may also encode spatial or other patterns
		(see §\ref{sec:persistence_formal}).
	\end{notation}
	\begin{definition}[Indicators]\label{def:indicator}
		An indicator is a mapping $R : T \rightarrow \{0,1\}$.
		An indicator $R$ is	trivial, if $R$ is constant (as a mapping).
	\end{definition}

	We will later put constraints on the form of indicators enforcing a kind of pattern in $T$ that limits
	the information-content of $R$ (§\ref{sec:persistence}) --
	for example perpersistence in time limiting the frequency of change.
	For additive models, it is straightforward to define a non-stationary
	model with changes in the causal graph modeled by indicators:
	
	\begin{definition}[Non-Stationary Additive SCM]\label{def:model_nonstationary_scm}
		Given variables $\{X_i\}_{i\in I}$, (possibly trivial)
		binary indicators $\{R_{ij}\}_{i,j \in I}$ and time-independent
		structural mappings $f_{ij} : \mathbb{R} \rightarrow \mathbb{R}$,
		the non-stationary additive SCM $M$ is given on
		$\{X_i\}_{i\in I}$ by the modified structural equations
		$X^t_i = \eta^t_i + \sum_{j\in I} R_{ij}(t) \times f_{ij}(X_j^t)$.		
		
		\emph{Remark:} We formally sum over all $j \in I$ (we have not yet defined parent-sets).
		Because the $R_{ij}$ are allowed to be trivially equal to zero,
		we will \asswlog assume all $f_{ij}$ are non-constant.		
	\end{definition}
	Parent-sets	may then be defined by non-vanishing of indicators:
	\begin{definition}[Non-Stationary Parents]
		The parents at time $t$ of $X_i$ in the non-stationary additive SCM $M$
		are the elements of the set
		\begin{equation*}
			\Pa_i^t = \{ j \in I | R_{ij}(t) \neq 0 \} \txt.
		\end{equation*}
		We define the non-stationary causal graph $G(t)$ relative to $\Pa_*^t$
		as before (see notations \ref{def:SCM}).
	\end{definition}
	
	\begin{rmk}
		If all $R_{ij}$ are constant on an interval $[a,b] \subseteq T$,
		then for $t\in [a,b]$ the non-stationary graph $G(t)$ is the
		causal graph of the model obtained for the system restricted to $[a,b]$
		in the standard sense.
	\end{rmk}
	
	Non-additive models can be described similarly, with details provided in §\ref{apdx:models_non_additive}.

	\subsubsection{Limitations}
	
	This description does not capture other, potentially practically relevant, non-stationarities like
	parameter-drifts or other explicit time-dependence of mechanisms or noises.
	One might be inclined to start from a very general model-definition (like explicitly time-dependent
	mechanisms), however, one has to be very careful to avoid tautologies:
	To learn something from a model, it must make interesting properties explicit.
	A model with arbitrary time-dependence of mechanisms may implicitly capture the presence of edges,
	for example by (non-)constness in a particular parent at a fixed time $t$, but this property
	(which we are ultimately interested in) itself is not explicit from its structure.
	For this reason, we decided to give a minimal extension beyond standard SCMs that explicitly
	captures what we need; every state (see §\ref{sec:model_states}) then has fixed parent-sets,
	per state models may still be modified to capture other types of non-stationarity, but this
	approach separates graphical and non-graphical changes in a clear hierarchy.
	Indeed it may also be useful for the interpretation of results to further
	consider changes in noise-terms separately (extending to general probability-kernels as mechanisms
	in another step).
	Nevertheless, the other view-point -- to generalize first --
	is of course also useful, especially in cases where graphical
	changes are not of interest.

	\subsection{Model States}\label{sec:model_states}
	
	The above models become SCMs in the standard sense once we fix the values of the indicators.
	There is only a finite number of combinations of values of the non-trivial indicators,
	and many causal properties of the model may be understood from knowing the
	causal graph of each such \inquotes{state}.
	
	\begin{definition}[States]\label{def:states}
		The set of non-trivial indicators is
		\begin{equation*}
			\NonTrivialIndicators\halfquad:= \halfquad\big\{ \halfquad R_{ij}\halfquad |\halfquad
			 i\neq j \text{ and } R_{ij} \not\equiv\const \halfquad\big\}
			\quad
			\txt{with $\kappa = \setelemcount{\NonTrivialIndicators}$ elements.}
		\end{equation*}
		Denoting the $\kappa$ elements of $\NonTrivialIndicators$ by
		$R_1, \ldots, R_{\kappa}$
		we may capture the state of the model by
		\begin{equation*}
			\sigma :
			T \rightarrow \{0,1\}^\kappa,
			t \mapsto \big( R_{1}(t), \ldots, R_{\kappa}(t) \big)
			\txt.
		\end{equation*}
		We call the $2^\kappa$ elements of $S := \{0,1\}^\kappa$
		potential (as opposed to reached, Def.\ \ref{def:reached_states}) system states
		and write $T_s := \sigma^{-1}(\{s\}) \subseteq T$ for the time-indices realizing state $s$.
	\end{definition}
	States capture all qualitative graphical causal non-stationarity (proof in §\ref{apdx:model_detail_proofs}); we will represent graphs $G(t)$ by some class $\GClasses$ (like PAGs), so $G(t)\in\GClasses$
	(see §\ref{sec:stationary_cd}).
	\begin{lemma}[State-Factorization]\label{lemma:graphs_factor_through_states}
		The map $G: T \rightarrow \GClasses, t \mapsto G(t)$ factors through $\sigma:T\rightarrow S$
		(\inquotes{$G$ can be written as a function of $s\in S$}),
		\ie there is a unique mapping $G_{(s)}: S \rightarrow \GClasses$ such that
		$G(t) = (G_{(s)} \circ \sigma)(t) := G_{(s)}(\sigma(t))$.
		We will by slight abuse of notation write $G(s)=G_{(s)}(s)$.
	\end{lemma}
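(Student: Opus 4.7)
The plan is to show that the map $t \mapsto G(t)$ only depends on $t$ through the values of the non-trivial indicators, because all other ingredients entering the construction of $G(t)$ are either time-independent or are indicators that are constant in $t$ by definition.

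First I would unpack the definition of $G(t)$. In both Def.\ \ref{def:model_nonstationary_scm} and Def.\ \ref{def:model_nonstationary_scm_nonadd} the graph $G(t)$ is determined by the parent-sets $\Pa_i^t = \{j \in I \mid R_{ij}(t) \neq 0\}$, and nothing else about $t$ enters: the mechanisms $f_{ij}$ and, in the latent case, the set $\mathcal{L} = I - \mathcal{O}$ are time-independent. Hence it suffices to show that the collection $\big(R_{ij}(t)\big)_{i,j \in I}$, regarded as an element of $\{0,1\}^{I\times I}$, is a function of $\sigma(t)$.

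Next I would split the index set $I \times I$ into indices corresponding to trivial versus non-trivial indicators. For $(i,j)$ with $R_{ij}$ trivial (and \wlog $i \neq j$, since any self-loop is excluded from $\NonTrivialIndicators$), $R_{ij}$ is constant on $T$ by Def.\ \ref{def:indicator}; call this constant $c_{ij} \in \{0,1\}$. For $(i,j)$ with $R_{ij} \in \NonTrivialIndicators$, fix the enumeration $R_1, \ldots, R_\kappa$ used in Def.\ \ref{def:states} and let $k(i,j)$ be the index such that $R_{k(i,j)} = R_{ij}$. Then
\begin{equation*}
    R_{ij}(t)
    \;=\;
    \begin{cases}
        c_{ij} & \text{if $R_{ij}$ is trivial,}\\
        \sigma(t)_{k(i,j)} & \text{if $R_{ij} \in \NonTrivialIndicators$.}
    \end{cases}
\end{equation*}
The right-hand side depends on $t$ only through $\sigma(t)$, so the same is true of the entire parent-set structure and therefore of $G(t)$.

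I would then define $G_{(s)} : S \to \GClasses$ by the same structural recipe: for $s \in S$, put $\tilde R_{ij}(s) = c_{ij}$ if $R_{ij}$ is trivial and $\tilde R_{ij}(s) = s_{k(i,j)}$ otherwise, let $\Pa_i(s) = \{j \mid \tilde R_{ij}(s) \neq 0\}$, and read off $G_{(s)}(s)$ exactly as in Notation \ref{def:SCM}. By the displayed identity above, $G_{(s)}(\sigma(t)) = G(t)$ for all $t \in T$. For uniqueness: if $H : S \to \GClasses$ satisfies $H \circ \sigma = G$, then $H$ agrees with $G_{(s)}$ on $\sigma(T)$; outside $\sigma(T)$, uniqueness holds in the sense of the canonical structural extension (any two such extensions agree on states $s$ ever realized in $T$, which is the only content of the factorization).

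The proof is essentially bookkeeping; the only place requiring care is to make sure trivial indicators are genuinely excluded from $\sigma$ (so that non-trivial ones are the only source of variation in $t$) and that the construction of $G_{(s)}$ at states $s \notin \sigma(T)$ is handled by the same structural formula used for realized states, so that the abuse of notation $G(s) = G_{(s)}(s)$ is consistent.
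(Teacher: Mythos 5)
Your proof is correct and follows essentially the same route as the paper's: unpack $G(t)$ via the parent-sets $\Pa_i^t$, observe that trivial indicators are constant while the non-trivial ones are exactly the components of $\sigma(t)$, and define $G_{(s)}$ by the same structural recipe, with uniqueness understood via the canonical assignment on states not in the image of $\sigma$. Your explicit handling of the uniqueness subtlety on unreached states (and of the diagonal/self-loop indices, which the framework implicitly excludes) is a sensible, slightly more careful spelling-out of the same bookkeeping argument.
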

	This means, all qualitative graphical changes are captured by the map $G(s)$
	defined on the finite state-space $S$.
	We will focus on learning $G(s)$ for all contexts $s$ soundly and approximating
	the (global) context-indicator $\sigma(t)$,
	when of interest, in a post-processing step, see §\ref{sec:goals} and §\ref{apdx:indicator_resolution}.
	This notion of states will be further refined in §\ref{sec:identifiable_states}.

	\subsection{Constraint-Based Causal Discovery Revisited}\label{sec:stationary_cd}
	
	Constraint-based causal discovery algorithms employ conditional independence-relations
	of an observed distribution to draw conclusions about the underlying causal graph.
	Indeed, this structure of independencies is the \emph{only} information\footnote{More precisely:
	The only information \emph{contained in the data} which is used. A priori information like
	a causal sufficiency assumption is of course also implicitly employed.} used;
	while this limits the results obtained, for example to Markov equivalence-classes (sets of
	possible graphs rather than individual graphs), it is important for the
	remainder of this paper to note that no other information is needed.
	Thus an abstract constraint-based causal discovery algorithm can be defined
	as a mapping from independence-structures to sets of graphs,
	with certain properties encoding soundness, completeness and others.
	In practice, it is of course of great relevance, how this mapping can be efficiently
	computed. Abstracting this aspect away will later allow us to employ
	standard algorithms and implementations to compute the abstract mapping;
	thereby we will not have to reinvent such efficient realizations but can
	instead leverage existing methodology.
	For simplicity we only consider CD-algorithms that will (given independence-test results)
	execute deterministically.
	This includes to our knowledge most constraint-based causal-discovery algorithms used in practice.
	
	We will start with a simple yet convenient abstraction of independence-structures\footnote{Here denoting
		simple atomic conditional independence-statements, without
		reasoning about logical implications between these statements \citep{LDAG_logical},
		which suffices for our purposes.},
	then continue with a description of what results are produced (\eg Markov equivalence-classes),
	what algorithms formally compute, how their correctness may be defined,
	and how the sparsity of 
	algorithms
	(small number of independencies actually evaluated)
	comes into play.
	For the remainder of this section we fix a $K$-element index-set $I$
	and variables $\{V_i\}_{i\in I}$ to avoid confusion with $X,Y,Z$ denoting individual or sets of $V_i$ used 
	in independence statements.
	\begin{definition}[Independence Atoms]\label{def:independence_structure}
		We define multi-indices (corresponding to conditional in\-de\-pen\-dence-tests, see below)
		of the form\footnote{We use the usual convention
			that for a set $A$, the set $A^0 = \{*\}$ is the one-element set.
			Below, we denote $\IStruct(i_x, i_y, (*)) = 0$ as $X \independent_{\IStruct} Y$.}
		(with $A \sqcup B$ denoting the disjoint union)
		\begin{equation*}
			\Multiindex = (i_x, i_y, (i_z^1, \ldots, i_z^m))
			\quad\in\quad
			\MultiindexSet := I \times I \times \big(I^0 \sqcup I^1 \sqcup \ldots \sqcup I^{K-2}\big)\text.
		\end{equation*}
		An independence-structure is (for our purposes) a mapping
		\begin{equation*}
			\IStruct : \MultiindexSet
			\rightarrow \{0,1\}\txt.
		\end{equation*}	
		The set of all independence-structures is denoted by $\IStructs$.		
	\end{definition}
	\begin{notation}
		Using the shorthands
		$X := V_{i_x}$, $Y := V_{i_y}$ and $Z_j := V_{i_z^j}$,
		we will denote
		\begin{align*}
			\IStruct(i_x, i_y, (i_z^1, \ldots i_z^m)) = 0
			\halfquad&\txt{as}\halfquad
			X \independent_{\IStruct} Y | Z_1, \ldots, Z_m
			\halfquad\txt{and}
			\\
			\IStruct(i_x, i_y, (i_z^1, \ldots i_z^m)) = 1
			\halfquad&\txt{as}\halfquad
			X \dependent_{\IStruct} Y | Z_1, \ldots, Z_m
			\txt.
		\end{align*}
	\end{notation}
	\begin{example}[Independence Oracle]\label{example:oracle}
		The independence oracle $\IStructOracle(M)$ for an SCM $M$ 
		is the (unique) independence-structure with the property
		\begin{align*}
			X \independent_{\IStructOracle(M)} Y | Z_1, \ldots, Z_m
			\quad\Leftrightarrow\quad
			&X \independent_{M} Y | Z_1, \ldots, Z_m \\
			\overset{\txt{faithful, Markov}}{\Leftrightarrow}\quad
			&X \independent_{G} Y | Z_1, \ldots, Z_m
		\end{align*}
		where $X \independent_{M} Y | Z_1, \ldots, Z_m$ denotes independence
		in the distribution induced by $M$.
		In the faithful and Markov case (usually assumed),
		the right-hand-side (and thus the left-hand-side)
		is further equivalent to d-separation in the causal graph $G$ of $M$
		(second line).		
	\end{example}
	CD algorithms output sets of compatible graphs rather than individual
	graphs:
	\begin{definition}[Resolved Graphs]\label{def:cd_alg_abstract_target}
		Let $\GClassesGeneral$ be a set whose elements are the graphs describing
		the possible ground-truths, \eg directed acyclic graphs (DAGs) or ancestral graphs (AGs).
		We call an equivalence-relation $\sim$ on $\GClassesGeneral$
		(\eg Markov-equivalence)
		a graphical kernel (it will capture which graphs end up in the same output class)
		and its equivalence-classes $\GClasses = \sfrac{\GClassesGeneral}{\sim}$
		the resolved graphical structure (its elements will be distinguishable).
	\end{definition}
	\begin{example}\label{example:target_graph_classes}
		Standard algorithms have resolved graphical structures in this sense:
		\begin{enumerate}[label=(\alph*)]
			\item PC-Algorithm \citep{PCalgo}:
			Let $\GClassesGeneral^{\txt{DAG}}$ be the set of directed acyclic graphs (DAGs).
			Define $\sim_{\txt{Markov}}$ as Markov equivalence and $\GClasses_{\txt{Markov}}$ as the set of Markov equivalence-classes of DAGs.
			\item FCI-Algorithm \citep{spirtes2001causation}:
			Let $\GClassesGeneral^{\txt{AG}}$ be the set of ancestral graphs (AGs).
			Define $\GClasses_{\txt{PAG}}$ as the set of inducing-path (maximal, MAG) graphs
			with partial orientations in the sense of FCI
			(the equivalence relation here also takes the potential
			existence of inducing paths into account).
			\item 
			PC-Skeleton:
			Let $\GClassesGeneral^{\txt{skeleton}}$ be the set of undirected graphs.
			The Markov equivalence-classes are identical to
			$\GClasses_{\txt{skeleton}} = \GClassesGeneral^{\txt{skeleton}}$ if orientations are ignored
			\citep{Verma1991}.
		\end{enumerate}
	\end{example}
	These together already allow for an abstraction of constraint-based causal
	discovery algorithms as follows:
	\begin{definition}\label{def:cd_alg_abstract}			
		An (abstract) constraint-based causal discovery algorithm
		is a mapping
		\begin{equation*}
			\CDAlg : \IStructs \rightarrow \GClasses_{\CDAlg}
		\end{equation*}
		into an algorithm-dependent resolved graphical structure $\GClasses_{\CDAlg}$,
		induced by $\sim_{\CDAlg}$.
	\end{definition}
	\begin{example}
		Standard algorithms are abstract algorithms in this sense:
		\begin{enumerate}[label=(\alph*)]
			\item PC-Algorithm:
			Use $\GClasses = \GClasses_{\txt{Markov}}$
			(cf.\ example \ref{example:target_graph_classes}a)
			and define $\CDAlgPC(\IStruct)$ as the output of the PC-algorithm when given the
			independencies in $\IStruct$.
			\item FCI-Algorithm:
			Use $\GClasses = \GClasses_{\txt{PAG}}$
			(cf.\ example \ref{example:target_graph_classes}b)
			and define $\CDAlgFCI(\IStruct)$ as the output of the FCI-algorithm when given the
			independencies in $\IStruct$.
			\item 
			PC-Skeleton:
			Use $\GClasses = \GClasses_{\txt{skeleton}}$
			(cf.\ example \ref{example:target_graph_classes}c)
			and define $\CDAlgPC_\text{skeleton}(\IStruct)$
			as the output of the skeleton phase of the PC-algorithm given the
			independencies in $\IStruct$.
		\end{enumerate}
	\end{example}
	We also need a notion of correctness.
	So far, we only captured what algorithms do, but not under which assumptions
	they are sound and complete.
	Up to finite-sample errors, this is the question about consistency in the oracle case:
	\begin{definition}[CD Correctness]\label{def:cd_alg_abstract_consistent}
		Given a class $\MClasses$ of SCMs on the variables $\{X_i\}_{i\in I}$ indexed by $I$,
		where $M\in \MClasses$ has the (true) causal graph $g(M) \in \GClassesGeneral$
		(using $g$ for
		individual graphs as opposed to equivalence-classes)
		a causal discovery-algorithm $(\CDAlg, \GClasses_{\CDAlg})$ is
		$\MClasses$-consistent, if given the independence oracle $\IStructOracle(M)$
		(example \ref{example:oracle}),
		it satisfies $\forall M\in\MClasses$
		\begin{equation*}
			g(M) \in \quad \CDAlg(\IStructOracle(M))\text.
		\end{equation*}
	\end{definition}
	At this point, standard consistency proofs under assumptions restricting
	models to a specific $\MClasses$ translate to:
	\begin{example}
		For the algorithms above, it holds that \citep{spirtes2001causation, ZhangFCIorientationrules}:
		\begin{enumerate}[label=(\alph*)]
			\item PC-Algorithm:
			Let $\MClasses^{\text{ac}}_{\text{ff,\,cs}}$ be the class of faithful, causally sufficient SCMs with acyclic graphs.
			Then $(\CDAlgPC, \GClasses_{\CDAlgPC})$ is
			$\MClasses^{\text{ac}}_{\text{ff,\,cs}}$-oracle-consistent.
			\item FCI-Algorithm:
			Define $\MClasses^{\text{aac}}_{\text{ff}}$ as the class of faithful SCMs with almost acyclic graphs.
			Then $(\CDAlgFCI, \GClasses_{\CDAlgFCI})$ is $\MClasses^{\text{aac}}_{\text{ff}}$-oracle-consistent.
			\item PC-Skeleton:
			Let $\MClasses^{\text{ac}}_{\text{ff,\,cs}}$ be the class of faithful, causally sufficient SCMs with acyclic graphs.
			Then $(\CDAlgPC_\text{skeleton}, \GClasses_{\txt{skeleton}})$ is
			$\MClasses^{\text{ac}}_{\text{ff,\,cs}}$-oracle-consistent.
		\end{enumerate}
	\end{example}
	\begin{rmk}
		The strength of a statement about consistency depends on
		\emph{both} the size of the class $\MClasses$ (larger is more general)
		\emph{and} on the amount of information provided by knowledge of an element
		$G\in \GClasses_{\CDAlg}$ (smaller sets $G$ contain more detail).
		So, for example, there is no strict hierarchy between PC and FCI:
		FCI applies more generally, but may provide less information in cases where
		PC does apply.
	\end{rmk}
	Finally, of particular practical relevance is the sparsity of lookups:
	There are a lot of statements in the independence-structure, but only
	few of them are actually \inquotes{consumed} by most algorithms.	
	This reduces runtime(-complexity) and helps to keep error-rates in check.
	While the general treatment is substantially simplified by abstracting
	these implementation-details away, knowledge of such properties
	\emph{can} be helpful (and can be formalized), a simple example will
	be seen in §\ref{sec:indicator_translation}, more details are provided in 
	§\ref{apdx:cd_lookup_regions}.

	\subsection{Identifiable State-Space}\label{sec:identifiable_states}
	
	Identifiability considerations concerning the state-space will require
	a formal account to three similar but conceptually disparate problems:
	Deterministic relations between indicators may make states \emph{unreachable};
	for finite observation-time, it may simply happen that some state never occurs
	making them \emph{not reached}; and states $s\neq s'$ with $g_s \sim_{\CDAlg} g_{s'}$ are not $\CDAlg$-\emph{distinguishable}.
	
	\paragraph{Reached State-Space:}
	
	A more formal definition requires some clarifications concerning
	the model-structure given in §\ref{apdx:model_details};
	the question whether a state is \emph{sufficiently} reached to be detected in
	a finite-sample sense can be treated similarly.
	This is a very intuitive problem, and the following simple definition
	will suffice to read the remainder of this paper:	
	\begin{definition}[Reached States]\label{def:reached_states}
		A state $s\in S$ (Def.\ \ref{def:states}) is $T$-reached, if $T_s = \sigma^{-1}(\{s\}) \neq \emptyset$.
		Denote by $S_T \subseteq S$ the set of $T$-reached states in $S$.
	\end{definition}
	It should be evident that, given data for $T$, we will not be able to learn anything
	about specifics of non-$T$-reached states.
	
	\paragraph{Resolved State-Space}
	Our approach inspects the (multi-valued) independence-structure only, so
	states $s$, $s'$ featuring
	the same independence-structure are indistinguishable.
	Consider the following example:
	\begin{example}[Non-Identifiable Regions of the State-Space]
		We are given a model with two states, $A$ and $B$.
		In state $A$, the causal graph of the model is $X \leftarrow Z \rightarrow Y$,
		in state $B$ the causal graph is $X\rightarrow Z \rightarrow Y$.
		In both states, the independence-structure is identical.\\
		\emph{Remark:} This model is neither union-acyclic nor modular (Def.\ \ref{def:causal_modularity_order}).
	\end{example}
	Generally, which states are actually distinguishable
	could depend on the CD-algorithm used, so we specify which states are distinguishable
	as follows:
	\begin{definition}[Identifiable States]\label{def:identifiable_states}
		Given an abstract causal discovery algorithm (Def.\ \ref{def:cd_alg_abstract})
		$\CDAlg$,
		we call two states $\CDAlg$-indistinguishable $s \sim_{\CDAlg} s'$, if
		\begin{equation*}
			s \sim_{\CDAlg} s' :\Leftrightarrow g_s \sim_{\CDAlg} g_{s'}\txt,
		\end{equation*}
		where $g_s$ and $g_{s'}$ are the ground-truth graphs for $s$ and $s'$ and
		$\sim_{\CDAlg}$ on the right-hand side (as a relation on graphs) denotes
		the equivalence-relation of Def.\ \ref{def:cd_alg_abstract_target} (\eg Markov-equivalence
		if $\CDAlg = \CDAlgPC$).
		This defines an equivalence-relation on the model states $S$.
		We call the equivalence-classes the ($T$-)identifiable (by $\CDAlg$) state-space
		\begin{equation*}
			S^{\CDAlg} := \sfrac{S}{\sim_{\CDAlg}}
			\quad\txt{and}\quad
			S^{\CDAlg}_T := \sfrac{S_T}{\sim_{\CDAlg}}
			\txt.
		\end{equation*}		
	\end{definition}
	Non-equivalent (in this sense) states are thus distinguishable in the oracle case,
	if the output of the used causal discovery algorithm is consistent
	(Def.\ \ref{def:cd_alg_abstract_consistent}).
	On the other hand, there is little hope to detect (by constraint-based causal discovery alone)
	a more fine-grained state-structure than $S^{\CDAlg}_T$ on the system.

	\subsection{Assumptions and Goals}\label{sec:goals}
	
	We focus on patterns in exogenous variables, for example persistence in \emph{time};
	possible extensions to patterns in endogenous variables are briefly discussed in §\ref{apdx:endogenous}.
	Our approach is tailored towards modular (independently changing) mechanisms
	(see §\ref{sec:num_experiments}; Def.\ \ref{def:causal_modularity_order}).
	Since, on each non-empty $T_s$, the associated graph $G(s)$ (lemma \ref{lemma:graphs_factor_through_states})
	is the causal graph in the standard sense of the model in state $s$, it makes sense to assume:
	\begin{assumption}[Statewise Faithfulness Property]\label{ass:momentary_faithful}
		Given $s \in S_T^{\CDAlg}$, then
		independence $X \independent_{P_s} Y | Z$ in the distribution $P_s(\ldots):=P(\ldots|t\in T_s)$
		implies d-separation $X \independent_{G(s)} Y | Z$ in $G(s)$.
	\end{assumption}
	And we also immediately obtain (cf.\ \citep[Prop.\,6.31 (p.\,105)]{Elements}):	
	\begin{lemma}[Statewise Markov Property]\label{lemma:momentary_markov}
		Given $s \in S_T^{\CDAlg}$, assume $G(s)$ is acyclic,
		then d-separation $X \independent_{G(s)} Y | Z$ in $G(s)$ implies
		the corresponding
		independence $X \independent_{P_s} Y | Z$ in the distribution $P_s(\ldots):=P(\ldots|t\in T_s)$.
	\end{lemma}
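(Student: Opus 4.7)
The plan is to reduce the statement to the classical acyclic Markov property for standard SCMs (the cited Proposition 6.31 in \citep{Elements}), by arguing that on $T_s$ the non-stationary model behaves like a stationary SCM with causal graph $G(s)$. So the work lies in justifying that the conditional distribution $P_s$ really is the distribution induced by such a standard SCM.

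First I would unfold definitions. By Definition \ref{def:states}, on $T_s = \sigma^{-1}(\{s\})$ every non-trivial indicator $R\in\NonTrivialIndicators$ is pinned to the corresponding coordinate of $s$, while the trivial indicators are globally constant by definition. In the additive case (Def.\ \ref{def:model_nonstationary_scm}) this means that for each $t\in T_s$ the structural equations collapse to
\begin{equation*}
X^t_i \;=\; \eta^t_i + \sum_{j\,:\,R_{ij}(s)=1} f_{ij}(X_j^t),
\end{equation*}
which is exactly the structural-equation system of a time-independent SCM $M_s$ whose parent-sets are $\Pa_i^s = \{j\mid R_{ij}(s)=1\}$ and whose causal graph is therefore $G(s)$. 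The analogous collapse works in the non-additive case of §\ref{sec:models_non_additive}, because constness of $r_{ij}$ on $T_s$ turns each $\tilde X^t_{ij}$ either into a plain parent $X^t_i$ or into a fixed constant that can be absorbed into the mechanism. This step is exactly the observation already recorded in the remark under Def.\ \ref{def:model_nonstationary_scm}; I would cite it and write out just enough detail to make it formal.

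Next I would argue that the distribution of $(X^t_i)_{i\in I}$ for $t\in T_s$ coincides with the distribution of the single-time marginal of this stationary SCM $M_s$. Since the exogenous noises $\eta^t_i$ are mutually independent and (under the standard modeling assumptions of §\ref{sec:standard_scm}, extended IID-in-time as recorded in §\ref{apdx:model_details}) identically distributed across $t$, the law of $X^t$ depends on $t$ only through $\sigma(t)$; thus conditioning on $t\in T_s$ yields a mixture of identical distributions, i.e.\ the single distribution induced by $M_s$. With $G(s)$ assumed acyclic, Prop.\ 6.31 of \citep{Elements} applies to $M_s$ and gives that d-separation in $G(s)$ implies the corresponding conditional independence in the law of $M_s$, which is $P_s$. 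This is the desired implication.

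The only delicate point, and the place where I would be most careful, is the independence of the regime-structure from the exogenous noises: if one wanted to allow $\sigma$ to itself be random, one must ensure that $\sigma$ is generated from sources independent of the $\eta^t_i$, so that conditioning on $\{t\in T_s\}$ does not introduce spurious associations between the $\eta^t_i$ and thereby violate the standard SCM assumptions used by Prop.\ 6.31. This is precisely the separation of regime-randomness from endogenous randomness deferred to §\ref{apdx:model_details}, and I would simply invoke that modeling choice; everything else in the proof is bookkeeping.
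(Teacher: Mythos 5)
Your proposal is correct and follows essentially the same route as the paper: the lemma is obtained by observing that, with the indicators pinned to their values in state $s$, the model restricted to $T_s$ is an ordinary acyclic SCM with causal graph $G(s)$, so the standard Markov property (Prop.\ 6.31 of the cited reference) applies to $P_s$. Your extra care about the regime-structure being fixed\Slash{}independent of the exogenous noises is exactly the modeling point the paper defers to its appendix, so nothing further is needed.
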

	
	The information we reconstruct from data concerns
	the following qualitative properties.
	Their complexity increases with model-complexity,
	but not with sample-size:	
	\begin{itemize}
		\item Existence of states: Which model-states exist? Which states are reached?
		\item Causal graphs of identifiable states: For each identifiable $s\in S_T^{\CDAlg}$,
		what is $G(s)$?
		\item Attribution to model properties: Which indicators $R_{ij}$ \emph{in the model} are non-trivial?
			(For example in the presence of hidden confounding and inducing paths, this is \emph{not} implicit
			in the graphs.)
	\end{itemize}
	These are supplemented with quantitative information about the form of non-trivial $R_{ij}$.
	Here complexity necessarily increases with sample-size, so only approximations are possible
	(even in asymptotic limits):
	\begin{itemize}
		\item Temporal location of states: When is the system in state $s$? 
		\Ie approximate $T_s$. Or for all states together, approximate
		$\sigma(t)$.
		\item Uncertainty quantification: Estimate and represent expected errors.
	\end{itemize}
	Giving an estimate for the (temporal) location of regimes is
	a post-processing step in our framework
	(see also §\ref{apdx:indicator_resolution}), we focus on the HCCD problem.
	Especially giving good uncertainty statements on time-resolutions seems relevant.
	Due to the length of this paper, we leave a detailed study of this conceptually
	separate problem to future work.

	\section{Hidden Context Causal Discovery}\label{sec:dynamic_cd}
	
	We next analyze our primary task: Given data from a non-stationary model, as described in
	§\ref{sec:model_defs}, and an abstract stationary CD-algorithm, as described in
	§\ref{sec:stationary_cd} (\eg PC or FCI), how can the collection of context
	specific graphs be extracted from a 
	modified notion of conditional independency relations?

	We extend the standard notion of an independence-structure in two steps.
	This extension together with the viewpoint of abstract causal discovery
	algorithms will in §\ref{sec:core_algo} allow
	the direct application of both theoretical results for,
	and practical implementations of, standard CD-algorithms to non-stationary settings.
	We describe the IID-case, the stationary case works analogously.
	
	\subsection{Regime-Marked Independence Structure}
	\label{sec:three_way_testing}
	
	Conventionally, a conditional independence takes one of two possible values:
	Independence $X \independent Y | Z$, or dependence $X \dependent Y | Z$.
	Here, $Z$ denotes a set of variables.
	We will add a third outcome: The presence of a
	(true) regime.
	We assume some ground-truth structure on the data.
	This structure automatically exists for example for the models discussed in §\ref{sec:models},
	but it will typically not be identifiable without linking it
	to some a priori known pattern-information like persistence of 
	temporal regimes in §\ref{sec:persistence}.
	We fix a single test\Slash{}multi-index (\ie the variables $X$, $Y$ and the set $Z$ in $X \independent Y | Z$),
	definitions below are to be understood to exist for any particular such choice.
	
	\begin{assumption}[Regime Structure]\label{ass:segment_structure}
		We assume, the data indexed by $T$ can be divided into (local) regimes,
		that is $n$ (possibly $n=1$) disjoint ($i\neq j \Rightarrow T_i \cap T_j = \emptyset$),
		non-empty 
		subsets $T_1, \ldots, T_n$ with $T = T_1 \cup \ldots \cup T_n$
		and such that for each regime $T_i$ the data points 
		with $t \in T_i$ in that regime are IID
		distributed according to a (jointly for all variables $X$, $Y$ and $W \in Z$) distribution $P_i$.
		We assume these subsets are maximal with this property, \ie $i\neq j \Rightarrow P_i \neq P_j$.
		In the one-dimensional case (\eg if $T$ encodes time)
		we call (maximal) connected intervals within regimes segments (as is typically
		done in CPD literature). We call the random variable $L$ measuring the
		number of data-points per segment the segment-length.
	\end{assumption}
	\begin{example}\label{example:model_segmentation}
		A non-stationary model as described in §\ref{sec:models} produces data
		which is globally (for all variables) structured by times $T_s$ spend in state $s$.
		For any specific test $X \independent Y | Z$,
		changes in mechanisms at non-ancestors of $Z \cup \{X, Y\}$
		will not change $P_i$, thus regimes
		are unions of $T_s$ over all $s\in S$ with relative changes only in such non-ancestors.
	\end{example}

	We are interested in three different ground-truth configurations:
	\begin{definition}[Regime Marked Dependence]\label{def:marked_independence}
		To be revisited in Def.\ \ref{def:marked_independence_rel_d}.
		The statistical testing of these hypotheses will be discussed in §\ref{sec:dyn_indep_tests}.
		\begin{enumerate}
			\item[$0:$]
			We call $X$ globally independent of $Y$ given $Z$,
			denoted $X \independent Y | Z$, if
			$\forall i: X \independent_{P_i} Y | Z$.
			\item[$1:$]
			We call $X$ globally dependent on $Y$ given $Z$,
			denoted $X \dependent Y | Z$, if
			$\forall i: X \dependent_{P_i} Y | Z$.
			\item[$\mCIToutR:$]
			We say there is a true regime between $X$ and $Y$ given $Z$,
			denoted $X \independent_R Y | Z$, if $\exists i_0 : X \independent_{P_{i_0}} Y | Z$
			and $\exists i_1 : X \dependent_{P_{i_1}} Y | Z$.
		\end{enumerate}
	\end{definition}
	\begin{definition}[Detected Indicator]\label{def:detected_indicator}		
		There is an associated indicator $R_{XY|Z}$, which we call the detected indicator, given by
		$R_{XY|Z}(t) = 0$ for $t\in T_i$ with $X \independent_{P_i} Y | Z$
		and $R_{XY|Z}(t) = 1$ otherwise. This indicator is trivially $\equiv 0$ in case $0$, trivially $\equiv 1$
		in case $1$ and non-trivial if and only if we are in case $\mCIToutR$.
	\end{definition}
	A regime-marked independence-structure is then (in analogy to Def.\ \ref{def:independence_structure}):
	\begin{definition}[Regime Marked Independence-Atoms]
		A regime-marked independence-structure is a mapping (where $\mCIToutR$ on the rhs is just some special symbolic value)
		\begin{equation*}
			\IStructM : \MultiindexSet
			\rightarrow \{0, 1, \mCIToutR\}\txt.
		\end{equation*}	
		We denote the set of all regime-marked independence-structures by $\IStructsM$
		and
		\begin{align*}
			\IStructM(i_x, i_y, (i_z^1,\ldots,i_z^m))=0
			\halfquad&\txt{by}\halfquad
			X \independent^{\IStructM} Y | Z_1, \ldots, Z_m
			\txt,
			\\
			\IStructM(i_x, i_y, (i_z^1,\ldots,i_z^m))=1
			\halfquad&\txt{by}\halfquad
			X \dependent^{\IStructM} Y | Z_1, \ldots, Z_m
			\txt{ and}
			\\
			\IStructM(i_x, i_y, (i_z^1,\ldots,i_z^m))=\mCIToutR
			\halfquad&\txt{by}\halfquad
			X \independent^{\IStructM}_R Y | Z_1, \ldots, Z_m
			\txt.
		\end{align*}
	\end{definition}
	The formal connection to the models of §\ref{sec:models} is again (cf.\ example \ref{example:oracle})
	made via an	oracle structure:
	\begin{example}[Marked Independence Oracle]\label{example:oracle_marked}
		The regime-marked independence oracle $\IStructOracleM(M)$ for a non-stationary
		SCM $M$ 
		is the regime-marked independence-structure with the property
		\begin{align*}
			X \independent^{\IStructOracleM(M)} Y | Z_1, \ldots, Z_m
			\quad\Leftrightarrow\quad
			&X \independent^{M} Y | Z_1, \ldots, Z_m \\			
			X \dependent^{\IStructOracleM(M)} Y | Z_1, \ldots, Z_m
			\quad\Leftrightarrow\quad
			&X \dependent^{M} Y | Z_1, \ldots, Z_m \\			
			X \independent_R^{\IStructOracleM(M)} Y | Z_1, \ldots, Z_m
			\quad\Leftrightarrow\quad
			&X \independent^{M}_R Y | Z_1, \ldots, Z_m
		\end{align*}
		where $X \independent^{M}_* Y | Z_1, \ldots, Z_m$ denotes
		the respective statement (Def.\ \ref{def:marked_independence}) for
		the distribution induced by $M$; regimes are fixed according
		to example \ref{example:model_segmentation}.
	\end{example}

	It is of course of pivotal importance if and how this structure can be
	discovered from data in theory and in practice.
	In §\ref{sec:dyn_indep_tests} we propose statistical testing strategies to approach this problem,
	and show that these have good theoretical and practical properties.

	\subsection{Multi-Valued Independence Structure}\label{sec:extended_indep_struct_subsection}
	If there are multiple independence-statements with true regimes,
	then two questions remain (illustrated by example \ref{example:induced_indicators} below):
	Which combinations of values actually appear (see below)?
	And how are these related to model states (§\ref{sec:model_states})?
	Both problems will be systematically addressed in §\ref{sec:indicator_translation}.
	\begin{notation}[Indicator Naming]
		Given $X$ and $Y$ corresponding to variables $V_{i_x}, V_{i_y}$ with
		indices $i_x, i_y \in I$, we denote the indicator $R_{i_xi_y}$ appearing
		in the definition of models \ref{def:model_nonstationary_scm}
		by $\Rmodel_{XY}(t) := R_{i_xi_y}(t)$.
		Additionally, we use the detected indicators $R_{XY|Z}$ (possibly $R_{XY|\emptyset}$)
		of Def.\ \ref{def:detected_indicator} if there is a true regime between $X$ and $Y$ given $Z$.
	\end{notation}	
	\begin{example}[Induced Indicators]\label{example:induced_indicators}
		Consider $X \leftarrow Y \rightarrow Z$ with only $R^{\text{model}}_{XY}$
		non-trivial (the link $X \leftarrow Y$, and only the link $X \leftarrow Y$,
		is context-dependent). This will, besides $R_{XY|\emptyset}$ non-trivial, also
		lead to $R_{XZ|\emptyset}$ non-trivial,
		even though $\Rmodel_{XZ}$ is trivial.
		Further, while both are non-trivial,
		$R_{XZ|\emptyset} \equiv R_{XY|\emptyset}$; the underlying
		model only has two states (as opposed to the four potential values the pairs
		$(R_{XY|\emptyset}(t), R_{XZ|\emptyset}(t))$ could take): One with the link $X \leftarrow Y$ active,
		one with the link $X \leftarrow Y$ not active.
	\end{example}
	We capture what we need to know about co-occurrence of true regimes as follows:
	\begin{definition}[Multi-Valued Independence Structure]\label{def:dynamic_independence}
		A multi-valued independence-structure $\IStructExt$
		is a set of independence structures.
		We call elements of $\IStructExt$ detected states.
	\end{definition}
	\begin{rmk}\label{rmk:induced_marked_structure}
		A multi-valued independence structure $\IStructExt$ induces
		a regime-marked independence structure $\IStructM$ by
		mapping a multi-index $\vec{i}=(i_x, i_y, i_{\vec{z}})$ to
		$0$ (or $1$) if all $\IStruct \in \IStructExt$ take the same value $0$ (or $1$)
		on the argument $\vec{i}$,
		and to $R$ if there are $\IStruct, \IStruct' \in \IStructExt$
		with $\IStruct(\vec{i}) \neq \IStruct'(\vec{i})$.
	\end{rmk}
	\begin{example}[Multi-Valued Independence Oracle]\label{example:oracle_extended}
		The multi-valued independence oracle $\IStructOracleX(M)$ for a non-stationary SCM $M$
		is the multi-valued independence-structure
		\begin{equation*}
			\IStructOracleX(M)
			:=
			\big\{
			\IStructOracle(M_s) \big| s\in S(M)
			\big\}
		\end{equation*}
		where $S(M)$ is the set of states of $M$ (Def.\ \ref{def:states}),
		and $\IStructOracle(M_s)$ is the independence oracle (Example \ref{example:oracle})
		of the model $M$ in state $s$.
	\end{example}
	\begin{example}[Implicit Marked Oracle]
		The multi-valued independence oracle $\IStructOracleX(M)$
		induces by Rmk.\ \ref{rmk:induced_marked_structure}
		a regime-marked independence structure $\IStructM$.
		This regime-marked independence structure is the
		regime-marked oracle $\IStructM = \IStructOracleM(M)$ of example \ref{example:oracle_marked}.
	\end{example}
	
	We will in practice divide the discovery of multi-valued independence-structures
	from data into three simpler sub-problems:
	\begin{enumerate}[label=(\roman*)]
		\item
		Discover the regime-marked structure $\IStructM$ induced by $\IStructX$.
		This is fully local in the graph and can be decided per independence-statement;
		see §\ref{sec:dyn_indep_tests}.
		\item 
		Discover simple binary relations between (small) sets of indicators.
		This requires only tests with binary results
		(see §\ref{sec:indicator_translation}, §\ref{apdx:required_implication_tests},
		§\ref{apdx:implication_testing}).
		\item 
		In §\ref{sec:indicator_translation} we will show in detail how these entail the
		full multi-valued structure from constraints arising in causal modeling.
	\end{enumerate}
	In the schematic picture of our framework given in the introduction (Fig.\ \ref{fig:architecture})
	the first two points contribute the data-processing layer, and the box labeled \inquotes{marked independence atoms}.
	The third point realizes the arrow labeled \inquotes{states} in the structural layer,
	the multi-valued independence structure as defined here combines the states (labeled
	\inquotes{state space} in the figure) and the associated independence-structure(s).

	\subsection{Necessity}\label{sec:dyn_structure_necessity}
	
	Knowledge of the (qualitative) hidden-context context-specific causal structure
	$\{G_s\}_{s\in S}$ of a model $M$ is stronger than knowledge of its 
	multi-valued independence-structure $\IStructOracleX(M)$.
	\Ie it is necessary to implicitly or explicitly
	discover the multi-valued independence-structure to solve the HCCD
	problem:
	\begin{lemma}[Multi-Valued d-Connectivity]\label{lemma:necessity_of_dyn_indep_struct}
		Given a non-stationary SCM $M$ with states $M_s$ faithful and Markov to
		their respective causal graphs $G_s$, then
		$\{G_s\}_{s\in S}$ determines $\IStructOracleX(M)$.
	\end{lemma}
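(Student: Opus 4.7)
The plan is to unroll the definitions and appeal to the faithfulness and Markov assumptions on a per-state basis. By Example \ref{example:oracle_extended},
\[
\IStructOracleX(M) = \big\{\,\IStructOracle(M_s)\,\big|\,s\in S(M)\,\big\},
\]
so it suffices to show that each component $\IStructOracle(M_s)$ is determined by the corresponding graph $G_s$, and that the indexing of the family by $s\in S$ is likewise recoverable from $\{G_s\}_{s\in S}$ (trivially, as the index set is just the domain of the given family).

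For a fixed state $s$, recall from Example \ref{example:oracle} that $\IStructOracle(M_s)$ is the unique independence-structure with
\[
X \independent_{\IStructOracle(M_s)} Y \mid Z_1,\ldots,Z_m
\;\Leftrightarrow\;
X \independent_{P_s} Y \mid Z_1,\ldots,Z_m.
\]
By the hypothesis that $M_s$ is both faithful and Markov with respect to $G_s$, the right-hand side is in turn equivalent to d-separation $X \independent_{G_s} Y \mid Z_1,\ldots,Z_m$ in the graph $G_s$. Since d-separation is a purely graphical (combinatorial) property of $G_s$, this equivalence shows that $\IStructOracle(M_s)$ is a function of $G_s$ alone: given $G_s$, we can read off the value of $\IStructOracle(M_s)$ on every multi-index $\vec{i} \in \MultiindexSet$ by checking the corresponding d-separation.

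Carrying this out for every $s\in S$ and collecting the results yields $\IStructOracleX(M)$, which is therefore fully determined by $\{G_s\}_{s\in S}$. There is no real obstacle here beyond being careful that the correspondence $s \mapsto \IStructOracle(M_s)$ preserves the state-indexing; this is immediate from the definition of the multi-valued oracle in Example \ref{example:oracle_extended}, which bundles the per-state oracles as a set indexed by $S(M)$, and does not rely on any additional information about $M$ beyond its graphs.
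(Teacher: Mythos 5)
Your proposal is correct and follows essentially the same route as the paper: per state, faithfulness and Markovness identify independence in $P_s$ with d-separation in $G_s$, so $G_s$ determines $\IStructOracle(M_s)$, and the multi-valued oracle is by definition the collection of these per-state oracles. The only superfluous worry is about "preserving the state-indexing" — by Def.\ \ref{def:dynamic_independence} the multi-valued structure is just a set of independence structures, so no indexing needs to be recovered.
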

	\begin{proof}
		By hypothesis, for each $s$
		d-connectivity is equivalent to independence, and $G_s$ determines
		$\IStructOracle(M_s)$ by d-connections.
		By definition, $\IStructOracleX(M) = \{\IStructOracle(M_s)|s\in S\}$.
	\end{proof}
	\begin{rmk}\label{rmk:necessity_of_dyn_indep_struct}
		$\IStructOracleX(M)$ further determines $\IStructOracleM(M)$ by
		Rmk.\ \ref{rmk:induced_marked_structure} and all indicator implications by
		Rmk.\ \ref{rmk:indicator_implciation}.
	\end{rmk}
	
	In the next section §\ref{sec:core_algo},
	we show that the reverse direction,
	discovering hidden-context context-specific causal structure $\{G_s\}_{s\in S}$
	from the extended independence-structure, can be reduced to the problem
	of causal discovery in the IID or stationary case.
	For the IID\Slash{}stationary case, there is of course a large
	range of algorithms available; these can easily be integrated with our approach.

	\subsection{The Core Algorithm}\label{sec:core_algo}
	
	In this section, we explain the
	pseudo-code given in algorithm \ref{algo:core}.
	Formal properties are then described in Thm.\ \ref{thm:core_algo}.	
	The general outline is as follows:
	We are given a multi-valued independence structure whose discovery from data
	will be described in §\ref{sec:dyn_indep_tests} and §\ref{sec:indicator_translation}.
	More precisely, we are given two functions encoding multi-valued independence information
	by implementing
	\begin{itemize}
		\item
		the discovery of marked independence (Def.\ \ref{def:marked_independence}):
		Given a multi-index $\vec{j} = (i_x, i_y, i_{\vec{z}})$ corresponding to
		a conditional independence query $X \independent Y | Z$,
		this algorithm called \algMarkedIndependence{}
		returns one of three possible outcomes: $0$ (global independence),
		$1$ (global dependence) or $R$ (true regime), cf.\ example \ref{example:oracle_marked}.
		The implementation is described in §\ref{sec:dyn_indep_tests}.
		\item
		partial state-space construction:
		Let $J$ be a (possibly incomplete) set of marked independencies
		(\ie a set of multi-indices with $\vec{j}\in J$ $\Rightarrow$
		\algMarkedIndependence{}$(\vec{j}) = \mCIToutR$).
		Then independence-statements in $J$, by definition, do \emph{not} take a
		definite (global) value dependent or independent (or $1$ and $0$),
		but rather a state-dependent value.
		The possible combinations of values of elements in $J$
		are precisely the mappings from $J$ to $0$ or $1$, denoted $\Map(J, \{0,1\})$.
		As already the simple example \ref{example:induced_indicators} above shows
		not all elements of $\Map(J, \{0,1\})$ are realized (in the example, $|J|=2$,
		thus $|\Map(J, \{0,1\})|=4$, while there are only two states, namely
		the state that maps both elements of $J$ to $0$ and the state which
		maps both elements of $J$ to $1$).
		In §\ref{sec:indicator_translation}
		we will discuss in detail why this is the case, and how it can be resolved;
		to understand the core algorithm,
		it suffices to know that §\ref{sec:indicator_translation} will produce
		an algorithm \algConstructStateSpace{}, which
		recovers the \emph{reached} states $S^J \subseteq \Map(J, \{0,1\})$.
		The implementation is described in §\ref{sec:indicator_translation}.
	\end{itemize}
	From this information encoding the multi-valued independence structure,
	we want to find the 
	set of identifiable states $S_T^{\CDAlg}$ (Def.\ \ref{def:identifiable_states})
	and associated (equivalence-classes of) graphs $\{G_s\}_{s\in S_T^{\CDAlg}}$.
	Indeed we will focus on producing the correct set of graphs $\{G_s | s\in S_T^{\CDAlg}\}$,
	the association of model-states (translation into model properties) is
	discussed in §\ref{apdx:state_translation}.
	
	Details on the formal description of the core algorithm \ref{algo:core}
	and a proof of the correctness
	(in the oracle case) of its output is given in §\ref{apdx:core_algorithm}.
	Indeed, there are some important aspects, like the completeness of the recovered
	set of graphs that require a careful analysis.
	Nevertheless the general behavior of algorithm \ref{algo:core} can be understood
	informally as follows:
	
	\begin{algorithm}
		\caption{\texttt{run\_hccd} (Multiple Causal Discovery)}
		\label{algo:core}
		\begin{algorithmic}[1]
			\Function{pseudo\_cit[$s$,$J$,$J'_s$, \normalfont\texttt{data}]}
			{$\vec{j}$}\Comment{capture $J'_s$ by reference (in\Slash{}out)}
				\If{$\vec{j} \in J$}\Comment{value depends on $J$-state}
					\State\Return 'independent' \textbf{if} $s(\vec{j})=0$ \textbf{else} 'dependent'
				\Else\Comment{not (yet) known to feature a regime}
					\If{\algMarkedIndependence{}(\texttt{data}, $\vec{j}$) = 0
						 (globally independent)}
						\State \Return 'independent'						
					\ElsIf{\algMarkedIndependence{}(\texttt{data}, $\vec{j}$) = 1 (globally dependent)}					
						\State \Return 'dependent'
					\ElsIf{\algMarkedIndependence{}(\texttt{data}, $\vec{j}$) =
						$\mCIToutR$	(true regime)}
						\State $J'_s := J'_s \cup \{\vec{j}\}$\Comment{we were not previously aware of this change}
						\State \Return 'dependent'\Comment{default to union model}
					\EndIf\Comment{Only these three cases exist.}
				\EndIf
			\EndFunction
			\vspace*{0.5em}\hrule\vspace*{0.5em}
			\State \textbf{Input:} The multivariate \texttt{data}.
			\State \textbf{Output:}
			A set of graphs $\{ G_s | s\in S \}$.
			\State $J := \emptyset$
			\Comment{will record marked independencies $\vec{j}$ encountered so far}
			\Repeat
				\State
				$S := $ \algConstructStateSpace{}(\texttt{data}, $J$)
				\For{$s \in S$}
					\State $J'_s := \emptyset$					
					\State $G_s := \texttt{CD}($ \algPseudoCIT{}[$s$, $J$,
					\textbf{in/out} $J_s'$, \texttt{data}] $)$
				\EndFor
				\State $J' := \cup_s J'_s$
				\State $J = J \cup J'$
			\Until{$J'=\emptyset$}
			\State\Return $\{ G_s \}_{s\in S}$
		\end{algorithmic}
		\hrule\vspace{0.5em}
		States $s\in S$ are encoded as mappings $s : J \rightarrow \{0,1\}$
		and constructed by \algConstructStateSpace{}
		as detailed in §\ref{sec:indicator_translation}
		as Algo.\ \ref{algo:constr_state_space}.
		The underlying CD algorithm '\texttt{CD}' (\eg PC)
		is treated as a function taking another function (a lazily evaluated
		independence-structure, here: \algPseudoCIT{})
		as an input, cf.\ §\ref{sec:stationary_cd}.
		This \algPseudoCIT{} uses the values of $s$, $J$, $J_s'$ and \texttt{data}
		from the local context (inner scope of the for loop; besides its
		return-value seen by \texttt{CD} it also writes to $J_s'$).
		It makes heavy use of \algMarkedIndependence{} described in §\ref{sec:dyn_indep_tests}
		as Algo.\ \ref{algo:mCIT}. In §\ref{apdx:illustrate_algo_evaluation}, the algorithm
		is explicitly evaluated step by step on an example (in the oracle case).
	\end{algorithm}
	
	Initially we know of no state-dependent independence-statements $J_0 = \emptyset$,
	and using \algConstructStateSpace{} will in this case always return the one
	trivial map $\emptyset \rightarrow \{0,1\}$, that is, $S_0=\{*\}$ has a single element.
	We thus invoke \algRunCD{} once, with $J=\emptyset$.
	This executes a standard causal discovery-algorithm \texttt{CD},
	for example $\CDAlg=\CDAlgPC$. Whenever a CIT would be executed by CD,
	it instead runs \algPseudoCIT{}, which for
	$J=\emptyset$ simply runs \algMarkedIndependence{}.
	If the outcome is \inquotes{true regime} we
	remember this test, by storing it in $J'$, and continue as if the result had
	been dependence (in the initial iteration this convention yields the union-graph,
	Lemma \ref{lemma:states_and_union_graph}).
	We end up with a single graph $G$ and a set of context-specific independence-statements $J'$,
	which we remember as $J_1$.
	
	In the $i^{\txt{th}}$ iteration, $S_i$, by specification of \algConstructStateSpace{},
	contains those combinations of known state-dependent independence-statements in $J_i$
	that actually appear (the reached $J_i$-states, see above).
	For each such state $s$, we invoke \algRunCD{} once with this state $s$.
	This again executes a standard causal discovery-algorithm \texttt{CD},
	and whenever a CIT would be executed, it instead runs \algPseudoCIT{}.
	For the state-dependent independence-statements in $\vec{j}\in J_i$, we fix values
	to $s(\vec{j})$. Effectively \algRunCD{} runs once for each actual state (combination
	of values taken by the state-dependent statements $J_i$ fixed).
	Any previously unknown state-dependent statements (across all states) are recorded into $J'_i$
	and included additionally for the next iteration $J_{i+1} = J_i \cup J'_i$.
	
	At some point, no new
	state-dependencies are found $J'= \emptyset$ and the algorithm terminates.
	It returns all graphs discovered in the last iteration.
	In the last iteration (since $J' = \emptyset$), for each state $s$, the
	result of \algPseudoCIT{} is always determined either by a \inquotes{standard}
	independence result ('dependent' or 'independent') or by the state $s$.
	Thus \algRunCD{}, for each actual state $s$, produces the graph discovered by $\CDAlg$
	if state-dependent independencies take the values prescribed by the state,
	while non-state-dependent independencies take their global values.

	\paragraph{Properties:}
	The above algorithm potentially has to rerun the underlying (stationary) CD-algorithm
	often. This is surprisingly not a problem in practice:
	The limiting factor both in terms of runtime-requirements and in terms of
	statistical sample-efficiency is in the number of actually executed tests.
	But the output of \algMarkedIndependence{} is deterministic (given data)
	and can be cached, so there is little increase in the required number of tests
	compared to the union-graph.
	In fact, the union-graph is denser (contains more edges)
	than any context-specific graph, thus for example for $\CDAlg=\CDAlgPC$
	already the first iteration
	will run all tests required for (the skeleton-phase of) any
	of the reruns.
	This also means, typically all marked independencies that will be found
	are found in the initial iteration, so that the algorithm in practice
	usually converges after only two iterations.

	The output of Algo.\ \ref{algo:core} is sound and complete in the following oracle-case (for details and proofs,
	see §\ref{apdx:core_algorithm}):	
	\begin{thm}[Core Algorithm]\label{thm:core_algo}
			Given
		a non-stationary model $M$ (Def.\ \ref{def:model_nonstationary_scm}\Slash{}\ref{def:model_nonstationary_scm_nonadd}),
		a marked independence oracle $\IStructOracleM(M)$ (Def.\ \ref{example:oracle_marked}; invoked as
		{\normalfont{\algMarkedIndependence{}}} in algorithm \ref{algo:core}),
		an abstract CD-algorithm $\CDAlg$ (Def.\ \ref{def:cd_alg_abstract}; invoked as
		{\normalfont{\texttt{CD}}} in algorithm \ref{algo:core}),
		which is consistent (Def.\ \ref{def:cd_alg_abstract_consistent})
		on a set of models $\mathcal{M}$ including all reached states (Def.\ \ref{def:reached_states})
		$\forall s\in S_T: M_s \in \mathcal{M}$
		and
		a sound and complete, on a model-class containing $M$,
		state-space construction (Def.\ \ref{def:state_space_construction_specification}; invoked as
		{\normalfont{\algConstructStateSpace{}}} in algorithm \ref{algo:core}).
		
		Then algorithm \ref{algo:core} terminates after a finite number of iterations
		and its output is a set (\ie no ordering or characterization of states by model properties
		is implied) which consists of precisely one graph (a $\CDAlg$ equivalence class, Def.\ \ref{def:cd_alg_abstract_target}) per $\CDAlg$-identifiable state $s\in S_T^{\CDAlg}$
		\begin{equation*}
			\Big\{ G_s \big| s\in S_T^{\CDAlg} \Big\}
			\quad\text{such that}\quad
			\forall \tilde{s} \in s:
			g_{\tilde{s}} \in G_s
			\text,
		\end{equation*}
		where $g_{\tilde{s}}$ is the ground-truth graph in state $\tilde{s}$
		(which is a representative of the $\CDAlg$-identifiable state $s$).
	\end{thm}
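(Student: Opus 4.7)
The plan is to establish three properties of Algorithm \ref{algo:general_algo} in sequence: termination; per-state soundness of the final \algRunCD{} invocations; and a bijection between the resulting output set and $S_T^{\CDAlg}$.

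Termination follows from a short monotonicity argument. $J$ is modified only via $J := J \cup J'$, so it is non-decreasing and contained in the finite set $\MultiindexSet$; the loop condition $J' \neq \emptyset$ forces $J$ to grow strictly, so the loop halts after at most $\setelemcount{\MultiindexSet}$ iterations. Let $J_*$ and $S_* := \algConstructStateSpace{}(J_*)$ denote the values at termination. Invoking the hypothesis that \algConstructStateSpace{} is sound and complete (Def.\ \ref{def:state_space_construction_specification}) on a class containing $M$, one may identify $S_*$ with the projection of $S_T$ onto $\{0,1\}^{J_*}$ given by $\tilde s \mapsto (\IStructOracle(M_{\tilde s})(\vec{j}))_{\vec{j}\in J_*}$.

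For soundness, fix $s \in S_*$ and any reached $\tilde s \in S_T$ projecting to $s$. The key step is that, during \algRunCD{}$(s, J_*)$, every query $\vec{j}$ to \algPseudoCIT{} returns the oracle value $\IStructOracle(M_{\tilde s})(\vec{j})$: if $\vec{j} \in J_*$, the returned value is $s(\vec{j})$, which by the projection identity equals $\IStructOracle(M_{\tilde s})(\vec{j})$; if $\vec{j} \notin J_*$, the `R' branch cannot fire at termination (else $\vec{j}$ would enter $J' \neq \emptyset$), so \algMarkedIndependence{} returns a global $0$ or $1$, which by Def.\ \ref{def:marked_independence} holds in every regime, including $\tilde s$'s. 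Applying $\CDAlg$-consistency on $\mathcal{M}$ (Def.\ \ref{def:cd_alg_abstract_consistent}) to the oracle $\IStructOracle(M_{\tilde s})$ then yields $g_{\tilde s} \in G_s$. Surjectivity onto $S_T^{\CDAlg}$ is immediate: every class in $S_T^{\CDAlg}$ has a representative $\tilde s$ projecting to some $s \in S_*$, whose $G_s$ contains $g_{\tilde s}$. For the bijection, note that if two reached states share a projection $s$, their invocations of \algRunCD{} are bit-identical and produce a single $G_s$ containing both ground truths, forcing $\CDAlg$-equivalence; conversely, two $\CDAlg$-equivalent reached states with distinct projections $s \neq s'$ yield $G_s, G_{s'}$ which each contain the respective (mutually $\CDAlg$-equivalent) ground truths and therefore coincide as elements of $\GClasses_{\CDAlg}$, collapsing to a single entry in the output set.

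The main technical obstacle lies in the soundness step for $\vec{j} \notin J_*$: one must argue that at termination, \algRunCD{} on any $s \in S_*$ never queries a regime-sensitive multi-index outside $J_*$. This is exactly the content of the loop's fixed-point condition, but making it rigorous depends on the determinism of $\CDAlg$ given fixed CIT-answers (Def.\ \ref{def:cd_alg_abstract}) together with the monotonicity of $J$. Concretely, once $J_*$ is fixed, the set of CITs issued across all states $s \in S_*$ is itself fixed, and any regime-sensitive element of that set would be flagged `R' by \algMarkedIndependence{} and absorbed into $J$ during the current iteration, contradicting that we are at the fixed point. A careful book-keeping of this interplay, together with the appeal to the formal specification of \algConstructStateSpace{} used in the identification $S_* = \pi(S_T)$, is where the genuine work resides; the remaining steps reduce to direct consequences of Defs.\ \ref{def:marked_independence} and \ref{def:cd_alg_abstract_consistent}.
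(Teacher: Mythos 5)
Your proposal is correct and follows essentially the same route as the paper's argument: termination by monotone growth of $J$ inside the finite set $\MultiindexSet$, the fixed-point property of the last iteration guaranteeing that every \algPseudoCIT{} answer (state-fixed for $\vec{j}\in J_*$, global otherwise) coincides with $\IStructOracle(M_{\tilde{s}})$ for any reached $\tilde{s}$ projecting to $s$, determinism of $\CDAlg$ to identify the run with $\CDAlg(\IStructOracle(M_{\tilde{s}}))$ and invoke consistency, and the sound/complete state-space construction plus the equivalence-class bookkeeping to get exactly one output graph per $\CDAlg$-identifiable reached state. Your observation that unresolved projections force $\CDAlg$-equivalence of the corresponding ground-truth graphs is precisely the completeness point the paper singles out as requiring care, so no gap remains beyond the routine induction over queries that you already flag.
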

	\begin{example}
		If $\CDAlg = \CDAlgPC$,
		all states $S$ are in different Markov-equivalence-classes,
		reached (occur in data),
		and the models $M_s$ of the individual states are causally sufficient,
		faithful with acyclic true graph $g_s$ and acyclic union-graph,
		then the identifiable state-space is $S_T^{\CDAlgPC}=S$ and
		in the oracle case algorithm \ref{algo:core} returns the set
		of Markov equivalence-classes $\{G_s|s\in S\}$
		representing the true graphs $g_s$, \ie $g_s \in G_s$
		(by $S = S_T^{\CDAlgPC}$, each $s$ has a single representative \inquotes{$\tilde{s}=s$}).
	\end{example}

\section{Marked Independence Tests}
\label{sec:dyn_indep_tests}

Our framework relies on knowledge of the multi-valued independence structure described in §\ref{sec:extended_indep_struct_subsection}.
Indeed, as shown in §\ref{sec:dyn_structure_necessity} \emph{any} solution to the
HCCD problem (learning the set of context-specific graphs)
has to implicitly or explicitly extract this knowledge.
Thus there is a profound interest in understanding its recovery from data.
However, this recovery from data turns out to be a quite challenging problem:
We show two impossibility-results that put limits to what can be
identified from data even in principle. These impossibilities are inherent to the
problem, their occurrence is general, not specific to our approach.

A major strength of our approach is that it can make these limitations
explicit and allows to understand and navigate them through hyperparameter controlled trade-offs.
Indeed our main goal is to \emph{understand} the problem and the behavior and
implied inductive biases of different solutions.
To this end, we divide the problem further into conceptually separate sub-tasks.
This allows for the theoretical and numerical
analysis of assumptions, null-distributions and power on different alternatives.
Still, with the primary goal of gaining insights, we focus on very simple
and transparent solutions for the separate tasks.
We do, however, outline how they are interconnected with well-known
standard tasks and believe that leveraging more sophisticated
state-of-the-art methodology on these tasks will allow for substantial
improvements on finite sample performance in the future; these connections
will be summarized in §\ref{sec:dyn_indep_test_summary} at the end of this section.
Both the subdivision into simpler tasks and the use of simple and transparent 
solutions also aids numerical verification of the theoretical findings:
Benchmarking simple modular components at
sub-millisecond runtimes allows for otherwise impossible exploration
of the behavior with large numbers of model and algorithmic parameters.

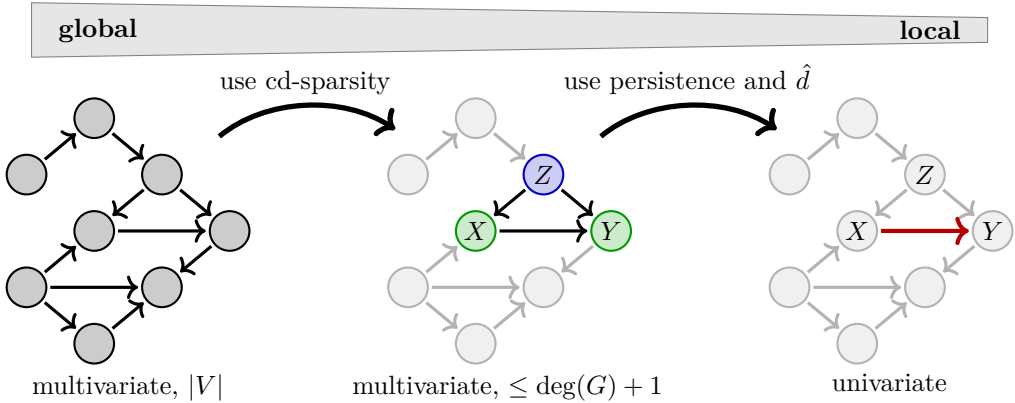
\begin{figure}[ht]	
	\begin{minipage}{\textwidth}
		\centering
		\colorlet{nodecolor}{black}

\providecommand{\mcitGlobalToLocalScale}{1.0}

\providecommand{\mcitGlobalToLocalEmToCentimeter}{0.35138}
%

\begin{tikzpicture}
	[
	xscale=\mcitGlobalToLocalScale,
	yscale=0.75*\mcitGlobalToLocalScale,
	subpicture/.style={
		inner sep=0em
	}
	]
	
	\draw (0,0) node [subpicture] (A) {
		\begin{tikzpicture}
			[
			xscale=0.9*\mcitGlobalToLocalScale,
			yscale=0.75*\mcitGlobalToLocalScale,
			inner sep=0,
			outer sep=0.15cm*\mcitGlobalToLocalEmToCentimeter,
			varnode/.style={
				circle,
				draw=nodecolor,
				thick,
				fill=nodecolor!20,
				align=center,
				minimum height=1.5em
			},
			mechedge/.style={
				->,
				very thick
			}
			]	
			\draw (0,0) node(a)[varnode]{};
			\draw (1,1) node(b)[varnode]{};
			\draw (2,0) node(c)[varnode]{};
			\draw (1,-1) node(d)[varnode]{};
			\draw (3,-1) node(e)[varnode]{};
			\draw (0,-2) node(f)[varnode]{};
			\draw (2,-2) node(g)[varnode]{};
			\draw (1,-3) node(h)[varnode]{};
			
			\draw[mechedge] (a) -- (b);
			\draw[mechedge] (b) -- (c);
			\draw[mechedge] (c) -- (e);
			\draw[mechedge] (e) -- (g);
			
			\draw[mechedge] (c) -- (d);
			\draw[mechedge] (d) -- (e);
			
			\draw[mechedge] (f) -- (d);
			\draw[mechedge] (f) -- (g);
			\draw[mechedge] (f) -- (h);
			\draw[mechedge] (h) -- (g);
	\end{tikzpicture}};
	
	\colorlet{nodecolor_xy}{black!40!green}
	\colorlet{nodecolor_z}{black!20!blue}
	\colorlet{nodecolor}{black!30!white}
	\colorlet{edgecolor}{black!30!white}
	\colorlet{edgecolor_xyz}{black}
	
	\draw (A.east)+(5cm*\mcitGlobalToLocalEmToCentimeter,0) node [anchor=west,subpicture] (B) {
		\begin{tikzpicture}
			[
			xscale=0.9*\mcitGlobalToLocalScale,
			yscale=0.75*\mcitGlobalToLocalScale,
			inner sep=0,
			outer sep=0.15cm*\mcitGlobalToLocalEmToCentimeter,
			varnode/.style={
				circle,
				draw=nodecolor,
				thick,
				fill=nodecolor!20,
				align=center,
				minimum height=1.5em
			},
			mechedge/.style={
				->,
				very thick,
				edgecolor
			}
			]	
			\draw (0,0) node(a)[varnode]{};
			\draw (1,1) node(b)[varnode]{};
			\draw (2,0) node(c)[varnode, draw=nodecolor_z, fill=nodecolor_z!20]{$Z$};
			\draw (1,-1) node(d)[varnode, draw=nodecolor_xy, fill=nodecolor_xy!20]{$X$};
			\draw (3,-1) node(e)[varnode, draw=nodecolor_xy, fill=nodecolor_xy!20]{$Y$};
			\draw (0,-2) node(f)[varnode]{};
			\draw (2,-2) node(g)[varnode]{};
			\draw (1,-3) node(h)[varnode]{};
			
			\draw[mechedge] (a) -- (b);
			\draw[mechedge] (b) -- (c);
			\draw[mechedge,edgecolor_xyz] (c) -- (e);
			\draw[mechedge] (e) -- (g);
			
			\draw[mechedge,edgecolor_xyz] (c) -- (d);
			\draw[mechedge,edgecolor_xyz] (d) -- (e);
			
			\draw[mechedge] (f) -- (d);
			\draw[mechedge] (f) -- (g);
			\draw[mechedge] (f) -- (h);
			\draw[mechedge] (h) -- (g);
	\end{tikzpicture}};
	
	\colorlet{nodecolor}{black!30!white}	
	\colorlet{edgecolor_xy}{black!30!red}
	\colorlet{edgecolor}{black!30!white}
	
	\draw (B.east)+(5cm*\mcitGlobalToLocalEmToCentimeter,0) node [anchor=west,subpicture] (C) {
		\begin{tikzpicture}
			[
			xscale=0.9*\mcitGlobalToLocalScale,
			yscale=0.75*\mcitGlobalToLocalScale,
			inner sep=0,
			outer sep=0.15cm*\mcitGlobalToLocalEmToCentimeter,
			varnode/.style={
				circle,
				draw=nodecolor,
				thick,
				fill=nodecolor!20,
				align=center,
				minimum height=1.5em
			},
			mechedge/.style={
				->,
				very thick,
				edgecolor
			}
			]	
			\draw (0,0) node(a)[varnode]{};
			\draw (1,1) node(b)[varnode]{};
			\draw (2,0) node(c)[varnode]{$Z$};
			\draw (1,-1) node(d)[varnode]{$X$};
			\draw (3,-1) node(e)[varnode]{$Y$};
			\draw (0,-2) node(f)[varnode]{};
			\draw (2,-2) node(g)[varnode]{};
			\draw (1,-3) node(h)[varnode]{};
			
			\draw[mechedge] (a) -- (b);
			\draw[mechedge] (b) -- (c);
			\draw[mechedge] (c) -- (e);
			\draw[mechedge] (e) -- (g);
			
			\draw[mechedge] (c) -- (d);
			\draw[mechedge, color=edgecolor_xy, ultra thick] (d) -- (e);
			
			\draw[mechedge] (f) -- (d);
			\draw[mechedge] (f) -- (g);
			\draw[mechedge] (f) -- (h);
			\draw[mechedge] (h) -- (g);
	\end{tikzpicture}};
	
	\draw (B.north west)+(0.8cm*\mcitGlobalToLocalEmToCentimeter,-2cm*\mcitGlobalToLocalEmToCentimeter) node (targetB){};
	\draw[->,line width=0.2cm*\mcitGlobalToLocalEmToCentimeter] (A.north east)+(-1.2cm*\mcitGlobalToLocalEmToCentimeter,-2cm*\mcitGlobalToLocalEmToCentimeter) to[bend left=45]
	node[midway, above]{use cd-sparsity}
	(targetB);
	
	\draw (C.north west)+(0.8cm*\mcitGlobalToLocalEmToCentimeter,-2cm*\mcitGlobalToLocalEmToCentimeter) node (targetC){};
	\draw[->,line width=0.2cm*\mcitGlobalToLocalEmToCentimeter] (B.north east)+(-1.2cm*\mcitGlobalToLocalEmToCentimeter,-2cm*\mcitGlobalToLocalEmToCentimeter) to[bend left=45]
	node[midway, above]{use persistence and $\hat{d}$}
	(targetC);
	
	\draw (A.south) node[anchor=north]{multivariate, $|V|$};
	\draw (B.south) node[anchor=north]{multivariate, $\leq \deg(G)+1$};
	\draw (C.south) node[anchor=north]{univariate};
	
	\draw (A.north west)+(1cm*\mcitGlobalToLocalEmToCentimeter,2cm*\mcitGlobalToLocalEmToCentimeter) node (topleftbottom){};
	\draw (A.north west)+(1cm*\mcitGlobalToLocalEmToCentimeter,4.75cm*\mcitGlobalToLocalEmToCentimeter) node (toplefttop){};
	\draw (C.north east)+(-1cm*\mcitGlobalToLocalEmToCentimeter,2.75cm*\mcitGlobalToLocalEmToCentimeter) node (toprightbottom){};
	\draw (C.north east)+(-1cm*\mcitGlobalToLocalEmToCentimeter,4.0cm*\mcitGlobalToLocalEmToCentimeter) node (toprighttop){};
	
	\draw[color=gray, fill=gray!20!white] (topleftbottom.center) -- (toprightbottom.center)
	-- node[left, midway, color=black, inner sep=1cm*\mcitGlobalToLocalEmToCentimeter] {\textbf{local}}
	(toprighttop.center) -- (toplefttop.center) --
	node[right, midway, color=black, inner sep=1cm*\mcitGlobalToLocalEmToCentimeter] {\textbf{global}}
	cycle;
\end{tikzpicture}
	\end{minipage}
	
	\caption{Our framework turns a (graph-)global clustering problem into (graph-)local ones.}
	\label{fig:graph_global_to_graph_local}
\end{figure}

As already outlined in the introduction §\ref{sec:intro} to this paper, our approach
is guided by the principles of (graph-)locality and directness of testing; further, our
architecture (Fig.\ \ref{fig:architecture}) confines the associated issues to
the data-processing layer discussed here. Therefore we emphasize throughout
this section the role and realization of these principles.
Locality is achieved, as illustrated in Fig.\ \ref{fig:graph_global_to_graph_local},
in two steps: The first one is to reduce the problem from a global question to
a question about few variables involved in an independence test; this step
is realized by design and factoring through the multi-valued independence structure.
The second step focuses on the causal mechanism relating two variables.
For optimal signal to noise, one should ask:
How does one isolate, from the information contained in these few variables $X$, $Y$, 
$Z_1, \ldots, Z_k$, precisely that information describing the (possible)
dependence between the pair $X$ and $Y$ (given $Z$)?
This question has of course a very standard answer: dependence scores $\hat{d}$
(for example partial correlation),
as used in conditional independence tests (CITs) are designed precisely to do so.
Even more conveniently these scores are typically univariate,
so that we obtain not only good signal to noise, but also a dramatically simplified problem.
This simplified problem will be described and studied in §\ref{sec:univar_problem},
where it will also become apparent, why it can be efficiently tested in a \inquotes{direct} way.

\begin{rmk}
	Going from one global problem with poor signal-to-noise to many local problems
	with good signal-to-noise, one should ask oneself: How many is \inquotes{many}, and is
	this better after accounting for multiple testing?
	Given a suitable, sparse testing strategy (in our case
	provided by the CD-algorithm), testing locally realizes an inductive bias favoring sparse graphs.
	It will typically perform better than global methods on sparse enough problems.
	Sparsity of the underlying graph is a pivotal assumption underlying
	constraint-based causal discovery in general, so that we require it here,
	should not lead to substantial \emph{additional} restrictions.
\end{rmk}
\begin{rmk}
	Typically, there are multiple different conditioning sets $Z$ tested for each pair $X$, $Y$.
	At this point, there is a subtle but very powerful conceptual advantage of performing graph-discovery before any other regime-detection: As long as we maintain the
	faithfulness assumption (not erroneously conclude independence or a true regime),
	the result of graph-discovery will be consistent if we can correctly accept
	independence or a true regime for a \emph{valid separating set} $Z$
	(\eg the other parents of the target in PC). For CD
	our FPR-control only has to work correctly if $Z$ is a valid separating set.
	In this sense graph-discovery can be \emph{truly local}\Slash{}occur on a single test without prior knowledge of the multiple causal graph(s).
\end{rmk}

This section starts with the introduction of persistence assumptions and their
necessity due to a first impossibility-result §\ref{sec:persistence}.
Then we discuss the details of combining persistence with
dependence scores to reduce the problem to
a univariate one §\ref{sec:univar_problem}.
Next, we discuss the structure of this univariate problem, with particular focus
on which inductive biases are meaningful and should be realized and how these
considerations affect the precise formulation of sub-tasks and their order of execution §\ref{sec:testing_order}.
One scenario, which we will refer to as a \inquotes{weak-regime} §\ref{sec:weak_regimes} turns out to be particularly
challenging; this can be explained by a second impossibility-result concerning finite-sample error-control.
We propose assumptions, trade-offs and potential future paths to handle this fundamental limitation.
Afterwards, we briefly discuss details specific to conditional ($Z\neq\emptyset$)
independence-tests §\ref{sec:testing_conditional}.
In §\ref{sec:mCIT_scaling} we show that our direct testing philosophy does
lead to good asymptotic behavior in theory (and also in numerical experiments, §\ref{sec:num_experiments}).
Finally we
summarize our findings and discuss future work §\ref{sec:dyn_indep_test_summary}.
Further details and proofs can be found in the appendix §\ref{apdx:data_processing_layer}.

\subsection{Pattern Assumptions}\label{sec:persistence}

We analyze the existence of statistical tests for
the multi-valued independence-structure, finding the necessity of assumptions.
The underlying problem is closely related to randomness\Slash{}IIDness testing
\citep{WaldWolfowitzRuntest, RandomnessTestsOnline} (see also §\ref{sec:testing_homogeneity}).
We focus on the assumption of persistence in the following sense:
Real world regimes, for example, often persist for an extended period of time, \ie context-switches
occur on a time-scale larger than the primary dynamics under study.
This kind of prior knowledge about persistence, more generally about
patterns in the assignment of data points to
regimes is not itself in any way specific to time series.
Rather, for example spatially resolved data may be available where regimes are
oftentimes likely to
\inquotes{persist} to neighboring (in space) sites.
This kind of persistence (or pattern) assumption can be seen to allow for the resolution of
the initial impossibility result.
Persistence and more generally patterns are formalized in §\ref{sec:persistence_formal} in a way suitable
for the remainder of §\ref{sec:dyn_indep_tests};
there is not one objectively best way to leverage these patterns,
and we briefly explain the reasoning behind choices made.

\subsubsection{Necessity}

\begin{figure}[ht]
	\begin{minipage}{\textwidth}
		\centering
		\providecommand{\mcitPersistenceScale}{1.0}
\providecommand{\mcitPersistenceImgScale}{0.2}

\begin{tikzpicture}[scale=\mcitPersistenceScale]
	\draw (-1,0) node
	{\mayincludegraphics[scale=\mcitPersistenceImgScale,trim={1cm 1cm 1cm 1cm},clip]
		{figures/persistence/densities}};
	\draw (-1,1)	node[align=center, anchor=south]{(possibly multimodal)\\\textbf{densities}};
	
	\draw[->, very thick] (1,0.5) -- (2,1.0);
	\draw[->, very thick] (1,-0.5) -- (2,-1.0);
	\draw (1.5, 0) node[align=center] {\emph{different}\\mixtures};
	
	\draw (4,1.25) node
	{\mayincludegraphics[scale=\mcitPersistenceImgScale,trim={1cm 1cm 1cm 1cm},clip]
		{figures/persistence/time_resolved_3}};
	\draw (4,-1.25) node
	{\mayincludegraphics[scale=\mcitPersistenceImgScale,trim={1cm 1cm 1cm 1cm},clip]
		{figures/persistence/time_resolved_2}};
	\draw (4, 0) node[align=center] {\textbf{time-resolved}};
	
	\draw (6, 0) node[align=center] {vs.};
	
	\draw (8,1.25) node
	{\mayincludegraphics[scale=\mcitPersistenceImgScale,trim={1cm 1cm 1cm 1cm},clip]
		{figures/persistence/time_disregarded_3}};
	\draw (8,-1.25) node
	{\mayincludegraphics[scale=\mcitPersistenceImgScale,trim={1cm 1cm 1cm 1cm},clip]
		{figures/persistence/time_disregarded_2}};
	\draw (8, 0) node[align=center] {\textbf{time-agnostic}};			
\end{tikzpicture}
	\end{minipage}
	\caption{Without strong parametric restrictions on the form of densities (lhs), mixtures cannot
		be uniquely decomposed (rhs). Persistence can help resolving this ambiguity (middle).}
	\label{fig:persistence_illustrate}
\end{figure}
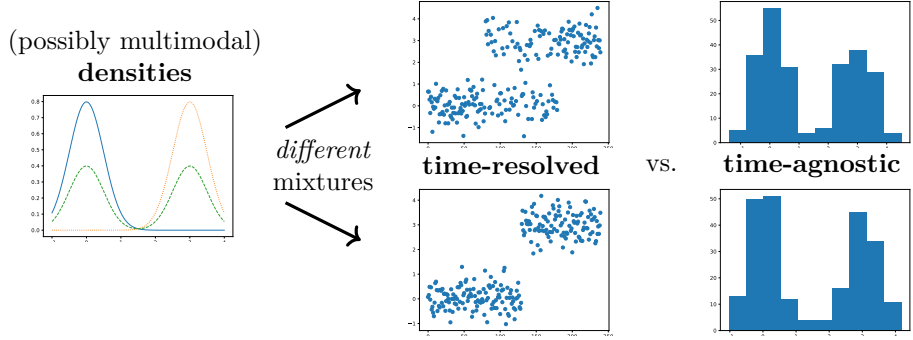

As show in Fig.\ \ref{fig:persistence_illustrate}, mixtures generally cannot
be easily decomposed into contributing densities: The histograms on the right-hand-side
differ for the two mixtures (top and bottom row) only by finite sample effects.
On time-resolved data with persistent structure (middle panel) the difference is
clearly visible.
However, in the IID case only the information in the histograms (total density) is considered.
While easy to illustrate on multi-modal densities, this problem is not specific to
multi-modality. Indeed it occurs, if the density candidates (families from which to pick
representatives) are \emph{not} linearly independent
in a very specific way \citep{yakowitz1968identifiability, BalsellsRodas2023},
and even in such cases, one would in the finite sample case certainly want to leverage
persistence as well.
There are many formal statements that capture this general behavior. In §\ref{apdx:necessity_persistence}
we show the following result:
\begin{prop}[Impossibility Result A]\label{prop:imposs_A}
	Given data as a set (without non-IID structure on the index-set),
	then the marked independence statement is not in general identifiable.
	By Lemma \ref{lemma:necessity_of_dyn_indep_struct}
	(see also Rmk.\ \ref{lemma:necessity_of_dyn_indep_struct})
	also the HCCD-problem is not in general identifiable.
\end{prop}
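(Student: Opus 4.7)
The plan is to exhibit two non-stationary data-generating processes whose outputs, once index information is discarded and the data is viewed only as a multiset of observations, have identical laws, while their marked independence values (Def.~\ref{def:marked_independence}) for a fixed triple $(X, Y, Z)$ disagree. Given such a pair, no measurable function of the unordered data can distinguish them better than chance, so the marked value is not identifiable from a set-valued view; the MCD part of the statement then follows from the cited Lemma~\ref{lemma:necessity_of_dyn_indep_struct}.

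For the construction, I would pick distributions $P_0, P_1$ on $(X, Y, Z)$ satisfying $X \independent_{P_0} Y | Z$ but $X \dependent_{P_1} Y | Z$, chosen so that the mixture $P := \tfrac{1}{2}P_0 + \tfrac{1}{2}P_1$ still satisfies $X \dependent_P Y | Z$. A concrete realization in the paper's framework: let $P_0$ be induced by a fork $X \leftarrow Z \rightarrow Y$ (no edge between $X$ and $Y$), and let $P_1$ be obtained by additionally opening the edge $Y \rightarrow X$, with mechanisms and noises chosen to avoid accidental cancellation in the mixture. Then define model $M_R$ as the non-stationary SCM of Def.~\ref{def:model_nonstationary_scm} in which the single non-trivial indicator $R_{XY}(t) = r_t$ is drawn IID fair across $t\in T$ and, given $r_t$, the observation $(X_t, Y_t, Z_t)$ follows $P_{r_t}$; the induced partition $T = \{r_t=0\}\sqcup\{r_t=1\}$ realizes Assumption~\ref{ass:segment_structure} and puts the marked value into case $R$ with probability $1 - 2^{1-|T|}$. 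Define model $M_1$ as any SCM (or just the data-distribution itself) producing $(X_t, Y_t, Z_t) \sim P$ IID; with a single regime and $X \dependent_P Y | Z$, its marked value is $1$.

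Marginalizing the hidden $r_t$ in $M_R$ yields observations that are IID from $P$, so the joint law of $(X_t, Y_t, Z_t)_{t\in T}$ under $M_R$ coincides with that under $M_1$; once the index $t$ is stripped (the data-as-set hypothesis of the proposition), there is no residual statistic distinguishing the two scenarios. Consequently any test acting on the unordered data has identical output distributions under $M_R$ and $M_1$ and cannot output $R$ in one and $1$ in the other. The main obstacle I anticipate is the parametric genericity check ensuring $P$ is truly conditionally dependent rather than accidentally conditionally independent; this follows from the observation that, in the mixture, $E[X \mid Y, Z]$ inherits a non-trivial $Y$-dependence both through the regime-conditional Y-term and through $E[r \mid Y, Z]$, so the required non-identity is an open, generic condition, easily verified on any explicit example and structurally parallel to the non-identifiability of mixture decompositions \citep{yakowitz1968identifiability, BalsellsRodas2023}.
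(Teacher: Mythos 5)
Your proposal is correct and follows essentially the same route the paper takes: exhibit a mixture $P=\tfrac12 P_0+\tfrac12 P_1$ that is equally consistent with a two-regime ground truth (marked value $R$) and a single-regime IID ground truth (marked value $0$ or $1$), so that once the index structure is discarded the two scenarios induce the same law on the data and the marked statement—and hence, via Lemma \ref{lemma:necessity_of_dyn_indep_struct}, the MCD output—cannot be identified. The genericity check you worry about is not even needed, since whether the mixture is conditionally dependent or not, the single-regime model's marked value ($1$ or $0$) already differs from $R$.
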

While Fig.\ \ref{fig:persistence_illustrate} may provide some intuition \emph{that}
persistence can help with this problem, we provide in §\ref{apdx:persistence_helps_necessity}
additionally some formal arguments to demonstrate \emph{how} it helps.

\subsubsection{Formulation}\label{sec:persistence_formal}

We start by two simple definitions that can be used to formalize a persistence requirement
at great flexibility, making it applicable not only to persistent-in-time regimes
but also to more general patterns in data.
The general form of this pattern (for example persistent in time) must of course
be known a priori.

\begin{definition}[Patterns: Blocks, Alignment and Validity]\label{def:blocks}
	Let $I$ be a set indexing our data.
	Given an integer $B\geq 1$,
	we call a subdivision of $I$ (and thus of the data)
	into disjoint subsets, containing $B$ elements each,
	a subdivision into blocks of size $B$.
	We will assume that the number $\Theta$ of blocks is maximal, \ie
	$\Theta = \lfloor \sfrac{N}{B} \rfloor$.
	Let $\mathcal{B}_I$ be the set of block subdivisions of $I$.
	A pattern is a fixed a priori structure that, given a block-size
	$B$, determines a subdivision of $I$ into blocks of size $B$.
	Formally it is a mapping:
	\begin{equation*}
		\Pattern : \mathbb{N} \rightarrow \mathcal{B}_I, B \mapsto T_B\txt,
	\end{equation*}
	such that $T_B$ is a subdivision into blocks of size $B$.
	We call a block $b$ ($\Pattern$-)aligned if $b \in \Pattern(\setelemcount{b})$.
	Given a regime structure (Ass.\ \ref{ass:segment_structure})
	we call a block $b$ valid, if all indices $i \in b$ are in the same context,
	otherwise we call $b$ invalid.
\end{definition}

\begin{example}[Time-Aligned Blocks]\label{example:persistent_in_time_pattern}
	The time-aligned pattern is the subdivision of $I=T=[0,N)\cap\mathbb{Z}$
	into size $B$ blocks of the form $b_\tau = [\tau B,(\tau+1)B) \subseteq T$ with
	$\tau \in [0,\Theta)\cap\mathbb{Z}$.
\end{example}

For more examples, see §\ref{apdx:patterns}.
To study convergence-behavior, a careful analysis of the asymptotic behavior of a
persistence assumption is necessary and carried out in §\ref{apdx:model_details}.
For finite sample and practical considerations the following definition
conveys the general idea.

\begin{definition}[Persistence]\label{def:persistence}
	We say a random indicator (cf.\ §\ref{apdx:model_details})
	is $(L, \chi)$-$\Pattern$-persistent, if a $\Pattern$-aligned block of size $L$
	is invalid with probability less than $\chi$.
\end{definition}

In practice, it is relevant to have $(L,\chi)$-persistence with $L$ large enough, yet
$\chi$ small enough.
This is one possible formulation of persistence (see §\ref{apdx:persistence_helps_necessity}).
More creative formulations like requiring the applicability of a particular
CPD method, leveraging prior knowledge about CP-candidates (for example a technological system
may be known to transition its state only, but not always, at specific points)
or parametric models for the indicator itself can also leverage
the compatibility with a (known) pattern.
In principle, our framework can be used with any such approach to persistence-based
aggregation of data (§\ref{apdx:leveraging_cpd_clustering}).
For further motivation of the choice employed here see §\ref{apdx:persistence_by_blocks}.

\subsection{The Univariate Problem}\label{sec:univar_problem}

Here we discuss the emerging univariate problem, and thus the main practical challenges
associated to testing marked independence.
We start by clarifying the reduction to a univariate problem already indicated at the beginning
of §\ref{sec:dyn_indep_tests}. Of the two steps shown in
Fig.\ \ref{fig:graph_global_to_graph_local}
we focus on the second one: turning data for $X$, $Y$ and (multiple) $Z_j$ into
a local score associated to an edge via an \emph{underlying} dependence-score $\hat{d}$.
Our numerical experiments
focus on partial correlation, but in principle the precise nature of $\hat{d}$ is not important,
we need only properties typically satisfied by scores used for independence-testing (in
a standard sense):
\begin{assumption}[Underlying CIT]\label{ass:underlying_d}
	On IID\Slash{}stationary data,
	the underlying dependence-score $d$ is estimated
	by $\hat{d}$ in an unbiased and consistent (see §\ref{apdx:simplify_d_for_power}) way
	and equals $0$ on independent data.
	The $\alpha$-confidence interval around zero
	for a given sample-count is known or can be estimated.
\end{assumption}
\begin{rmk}\label{rmk:relation_to_general_cits}
	Conditional independence testing is a hard problem \citep{shah2020hardness}.
	The additional information reported by a mCIT will certainly not make this problem easier.
	We do, however, want to focus on which \emph{new} problems arise for mCITs compared
	to conventional CITs.
	These new problems can be studied in isolation by formally searching not for
	independent and dependent regimes,
	but rather for regimes with dependencies
	$d_0 \neq d_1$ and checking if $d_0 =0$ while assuming reasonable convergence behavior
	(this assumption excludes\Slash{}detaches general CI-testing problems;
	see §\ref{apdx:simplify_d_for_power}).
	General CIT-testing related problems can then be accounted for
	either by assumptions on models (for example linearity)
	or by suitable interpretation of results (for example interpret \inquotes{independent}
	regimes as uncorrelated instead).
\end{rmk}
\begin{definition}[Marked Score-Independence Atoms]
	\label{def:marked_independence_rel_d}
	In the light of Rmk.\ \ref{rmk:relation_to_general_cits},
	we reformulate Def.\ \ref{def:marked_independence} of marked independence
	relative to $d$. Denoting the dependence-score estimate for the test $X \independent Y | Z$
	by $d(X\independent Y|Z)$ we distinguish the following hypotheses:
	\begin{enumerate}
		\item[$0:$]
			We call $X$ globally independent of $Y$ given $Z$,
			denoted $X \independent Y | Z$, if
			\begin{equation*}
				\forall i: E_{P_i}[d(X\independent Y|Z)] = 0
				\text.
			\end{equation*}
		\item[$1:$]
			We call $X$ globally dependent on $Y$ given $Z$,
			denoted $X \dependent Y | Z$, if
			\begin{equation*}
				\forall i: E_{P_i}[d(X\independent Y|Z)] \neq 0
				\text.
			\end{equation*}
		\item[$\mCIToutR:$]
			We say there is a true regime between $X$ and $Y$ given $Z$,
			denoted $X \independent_R Y | Z$, if
			\begin{equation*}
				\exists i_0 : E_{P_{i_0}}[d(X\independent Y|Z)] = 0
				\halfquad\txt{and}\halfquad
				\exists i_1 : E_{P_{i_1}}[d(X\independent Y|Z)] \neq 0
				\text.
			\end{equation*}
	\end{enumerate}
\end{definition}
For the sake of concreteness and simplicity, we restricted our numerical results and implementation
to a partial-correlation test. An efficient (both in statistical and runtime terms)
implementation of non-parametric scores can probably benefit substantially from
more specific optimizations and is left to future work.

The estimate $\hat{d}$ is statistical in nature and requires a collection of
data points to act upon. We take the simplest possible approach and apply $\hat{d}$ 
to the collections of data points obtained by exploiting a persistence assumption
as described in §\ref{sec:persistence}. Concretely, this means for the approach by
data-blocks §\ref{sec:persistence_formal} we apply $\hat{d}$ to each block of data individually.
This defines our univariate problem: analyze the resulting collection of dependence-values
to decide the marked independence query.

We focus on the question of marked independence. In §\ref{sec:indicator_translation},
when studying state-space constructions, we
will encounter another data-processing-layer test which we will refer to as
\inquotes{implication-test}, which behaves similar
and is discussed in §\ref{apdx:implication_testing}.
We distinguish three outcomes (dependence, global independence, existence of
an independent regime) on four possible ground-truths, see Fig.\ \ref{fig:univariate_problem}.
On the collection of dependence values this can be formulated as a
clustering-like\footnote{We have specific prior knowledge, like the value $d=0$ being special
with known $\alpha$-confidence region,
and are interested in a weaker notion of solution as is typically discussed in clustering
literature.} problem:
On valid blocks (those blocks contained entirely in a single regime) we could
simply ask, \inquotes{Is there more than one cluster?} and \inquotes{Is one (possibly the only) cluster at $d=0$?}
to decide the problem. The presence of invalid blocks asks for a robust assessment of these
questions, but does not fundamentally change the logic.

\begin{figure}[ht]
	\begin{minipage}{\textwidth}
		\centering
		\providecommand{\mcitUnivariateProblemScale}{1.0}
\providecommand{\mcitUnivariateProblemImgScale}{0.2}

\begin{tikzpicture}[scale=\mcitUnivariateProblemScale]
	\draw[->] (0,-1) -- (0,1);
	\draw[->] (-1.2,-0.95) -- (1.3,-0.95) node[midway, below]{$\hat{d}$};
	\draw (0,0) node {\mayincludegraphics[scale=\mcitUnivariateProblemImgScale,trim={2.4cm 1.7cm 2.1cm 1.4cm},clip]
		{figures/univariate_problem/independent}};
	
	\pgfmathsetmacro{\depx}{3}
	\pgfmathsetmacro{\adjustzero}{0.375}
	\draw[->] (\depx-\adjustzero,-1) -- (\depx-\adjustzero,1);
	\draw[->] (\depx-1.2,-0.95) -- (\depx+1.3,-0.95) node[midway, below]{$\hat{d}$};
	\draw (\depx,0) node {\mayincludegraphics[scale=\mcitUnivariateProblemImgScale,trim={2.4cm 1.7cm 2.1cm 1.4cm},clip]
		{figures/univariate_problem/dependent}};
	
	\pgfmathsetmacro{\truerx}{6}
	\draw[->] (\truerx-\adjustzero,-1) -- (\truerx-\adjustzero,1);
	\draw[->] (\truerx-1.2,-0.95) -- (\truerx+1.3,-0.95) node[midway, below]{$\hat{d}$};
	\draw (\truerx,0) node {\mayincludegraphics[scale=\mcitUnivariateProblemImgScale,trim={2.4cm 1.7cm 2.1cm 1.4cm},clip]
		{figures/univariate_problem/true_regime}};
	
	\pgfmathsetmacro{\weakrx}{9}
	\draw[->] (\weakrx-\adjustzero,-1) -- (\weakrx-\adjustzero,1);
	\draw[->] (\weakrx-1.2,-0.95) -- (\weakrx+1.3,-0.95) node[midway, below]{$\hat{d}$};
	\draw (\weakrx,0) node {\mayincludegraphics[scale=\mcitUnivariateProblemImgScale,trim={2.4cm 1.7cm 2.1cm 1.4cm},clip]
		{figures/univariate_problem/weak_regime}};

	\fill[color=gray!30] (-1.4, 1.1) rectangle (1.4, 1.7);
	\draw (0, 1.1) node [anchor=south]{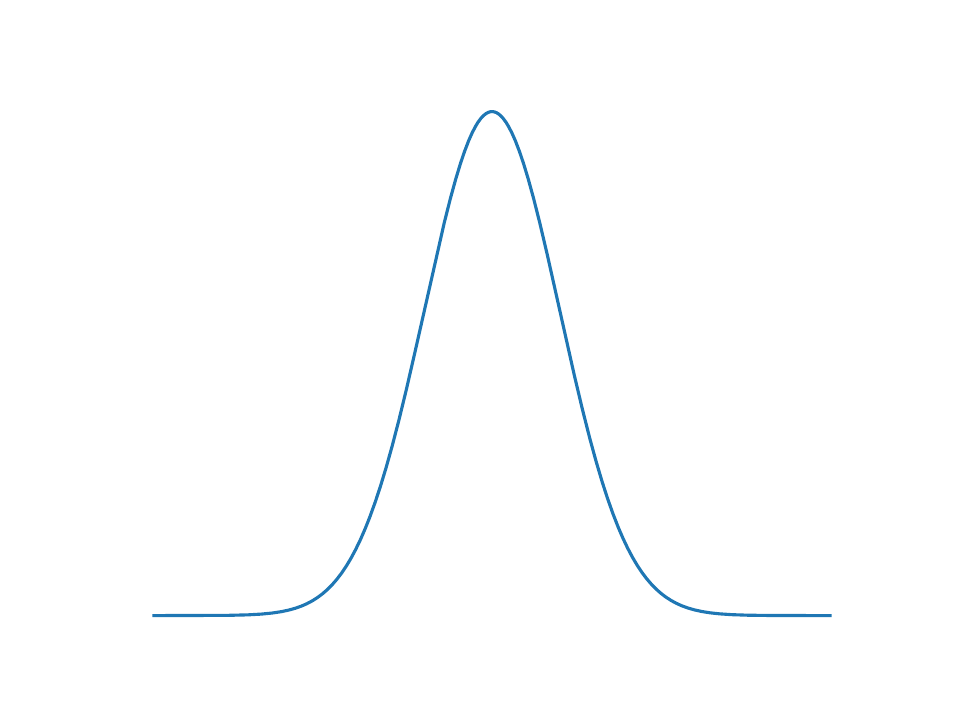};
	
	\fill[color=blue!30] (\depx-1.4, 1.1) rectangle (\depx+1.4, 1.7);
	\draw (\depx, 1.1) node [anchor=south]{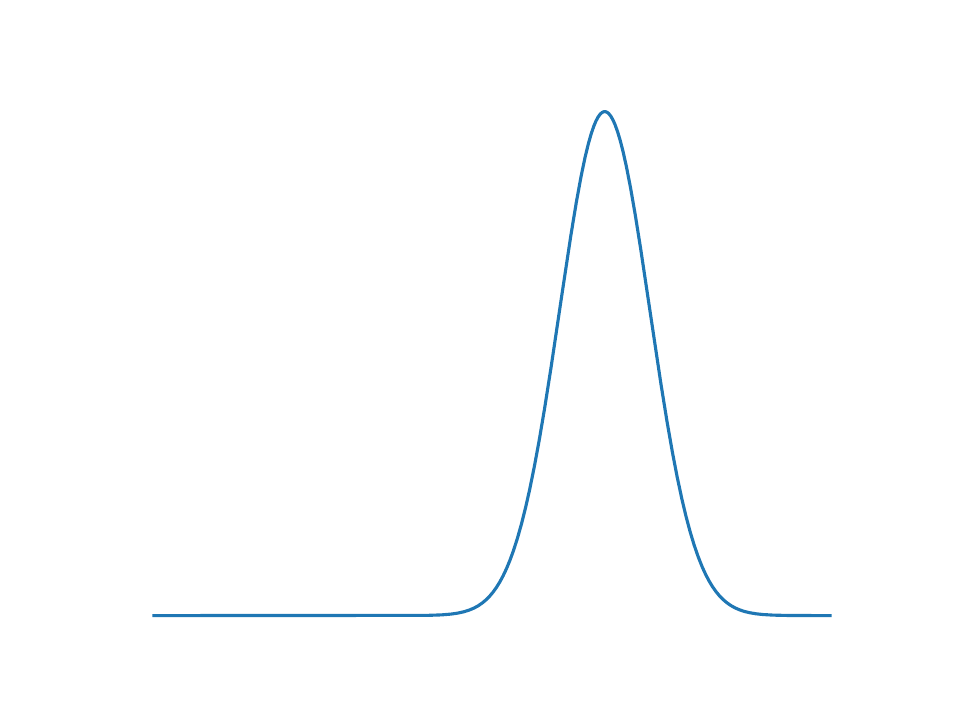};
	
	\fill[color=green!30] (\truerx-1.4, 1.1) rectangle (\truerx+1.4, 1.7);
	\draw (\truerx, 1.1) node [anchor=south]{true regime};
	
	\fill[color=blue!30] (\weakrx-1.4, 1.1) rectangle (\weakrx+1.4, 1.7);
	\draw (\weakrx, 1.1) node [anchor=south]{weak regime};
	
	\draw[very thick] (\weakrx-1.5, 0) -- (\weakrx-1.5, -2.2) -- (\weakrx + 1.5, -2.2)
	node[midway, above]{each dependent};
	
	\draw[very thick] (\truerx-1.5, 0) -- (\truerx-1.5, -2.4) -- (\weakrx + 1.5, -2.4)
	node[pos=0.25, above]{non-homogenous};
	
	\draw[very thick] (\depx-1.5, 0) -- (\depx-1.5, -2.6) -- (\weakrx + 1.5, -2.6)
	node[pos=1/6, above]{dependent};		
\end{tikzpicture}
	\end{minipage}
	\caption{Different ground-truths for the univariate problem, expected outputs
		(color of headings),
		examples for density of per-block dependence-scores $\hat{d}$,
		and grouping by tests (braces below plots; reject independence, reject homogeneity, reject independence in low-dependence regime).}
	\label{fig:univariate_problem}
\end{figure}

\textbf{It is of great significance that this is a direct testing formulation:}
We do not insist on any knowledge concerning the assignment of blocks (or data points)
to regimes. Rather, if multiple peaks overlap, we only ask, is one of them is centered at zero?
We can thereby accept
that for points in such an overlap we do not have substantial information about
their regime-assignments; we do not need such information to solve our task.

We formulated the problem above by asking two yes\Slash{}no questions. Indeed statistical
testing of binary hypotheses is more common and seems simpler in the present case.
Deciding between four ground-truth scenarios (three potential results) will require
at least two binary tests. It turned out that the second question
(\inquotes{is there a cluster at zero?}) on homogeneous data is rather standard (it is an
independence test), while in scenarios with multiple clusters it is very difficult.
So we treat both cases separately, thus will discuss a total of three binary tests.
This also leads to rather intuitive tests with simpler failure-modes:
Is the data dependent? Is it non-homogeneous? Is the lowest-dependence cluster centered at $d\neq 0$?
As shown in Fig. \ref{fig:univariate_problem}, these tests distinguish the four
ground-truth scenarios.

Generally, in an oracle case (no finite sample errors), the order of execution of
these tests is irrelevant. Nevertheless, there are at least two evident finite-sample
effects to consider:
\begin{enumerate}[label=(\alph*)]
	\item 
	Not all three problems are equally difficult, so unsurprisingly in practice the associated
	tests have different statistical properties. More concretely,
	the (global) dependence test and the homogeneity (IIDness\Slash{}randomness)
	test are more reliable
	than the distinction between weak and true regimes (deeper reasons
	will be discussed in §\ref{sec:weak_regimes} and §\ref{apdx:why_weak_is_harder}).
	Using three binary tests implies a total of eight possible combinations of results,
	which is more than needed, so some will collapse to the same output. This
	has the convenient side-effect that one will not have to execute
	all three tests on all scenarios. Thus, we can \eg on any homogeneous
	scenario entirely avoid the more difficult test.
	\item 
	There is an inductive bias in the order of test execution that
	encodes our belief about the relative frequency of different ground-truth scenarios
	and the severity of different error-types (for example the weak regime and global
	dependence are assigned the same outcome, so errors confusing these two scenarios are
	much less harmful than others).
\end{enumerate}

Next, we lay out the details of ordering the dependence
and homogeneity tests (the weak-regime test will be executed last, see (a)).
Afterwards we discuss the realization of the individual tests.

\subsubsection{Testing Order}\label{sec:testing_order}

Inspecting Fig.\ \ref{fig:univariate_problem}, we could apply tests in order
from left to right (dependence, homogeneity, weak regime), executing the next one
whenever the previous one is positive. But since all inhomogeneous results are
dependent, we could also test for homogeneity (as will be explained in §\ref{sec:testing_homogeneity})
first, and depending on the outcome
test either (global) dependence or weak regimes afterwards.
In the oracle-case this does not change the outcome, but in the finite sample case it may
(see points (a) and (b) in the introduction to the univariate problem above).

Applying an underlying score $\hat{d}$ (and associated independence-test)
designed for IID data to a non-IID sample and evaluating it as a CIT does \emph{not} affect FPR-control:
Inhomogeneity of $d$-values implies they are non-zero somewhere, thus
true negatives (the null) are independent \emph{and homogeneous}\footnote{This is slightly oversimplified:
Different models satisfying the null hypothesis need not produce the same variance of $\hat{d}$.
For practical considerations focusing on expectations is not a substantial restriction.}
(see also Fig.\ \ref{fig:univariate_problem}).
Thus type I errors remain controlled.
Concerning type II errors (statistical power) however,
an estimator which is well-suited for rejecting IID alternatives,
may no longer perform well on heterogeneous alternatives.
Finite sample performance in this case depends on the composition (rate of IID-alternatives
vs.\ rate of non-IID alternatives and their form) of the model-prior;
so there is typically no uniformly best estimator, rather we should seek to understand
the inductive biases of different approaches.

We restrict our discussion of this power-optimization to the comparison of
the two possible orders of independence- and homogeneity-test.
We expect the following:
Applying the independence test first ensures optimal power on IID data
and will perform well if the typical alternative encountered is close to IID.
Applying the homogeneity-test first avoids type II errors on certain types of
inhomogeneous data. A simple example for a problem where IID-approaches may have low
power for the case of correlation-based independence-testing is
the case of two regimes with similar size and similar correlation but of different sign
(which is also given as a motivating example in \citep[2.D]{Saggioro2020}).

Testing homogeneity first, on (homogeneous) independent IID data produces a 
false positive, if \emph{either} the homogeneity test produces a false positive
or if -- after a true negative from the homogeneity test -- the dependence
test produces a false positive. So if both control errors at a rate $\alpha$,
the total false positive rate is $\alpha + (1-\alpha) \alpha \approx 2\alpha$.
We are ultimately interested in the full marked independence result,
and there is no objectively fair comparison; luckily this effect is small
enough to not hinder the verification of our understanding of the associated inductive biases.
We include in Fig.\ \ref{fig:global_dependence} the global dependence first
approach twice with $\alpha = 2.5\%$ and with $\alpha = 5.0\%$ on its single stage
(to illustrate the above mentioned effect)
and the homogeneity first approach with $\alpha = 2.5\%$ on both of its stages.

\begin{figure}[ht]
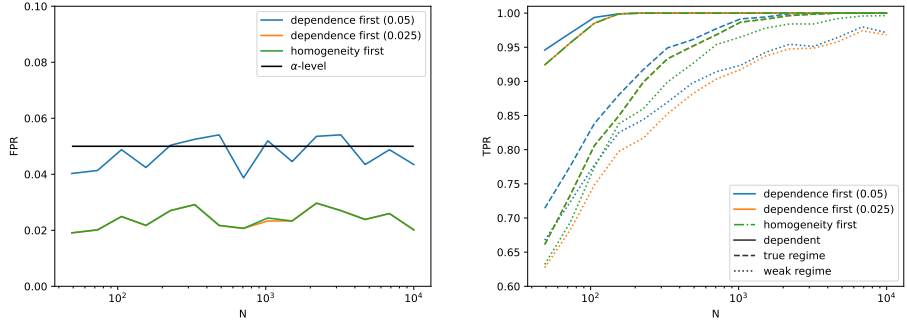

	\begin{minipage}{0.9\textwidth}
		\hfill
		\mayincludegraphics[width=0.45\textwidth,trim={0.5cm 0.2cm 1.5cm 1.25cm},clip]
		{global_dependence/fpr}
		\hfill
		\mayincludegraphics[width=0.45\textwidth,trim={0.5cm 0.2cm 1.5cm 1.25cm},clip]
		{global_dependence/tpr}
		\hfill
	\end{minipage}
	\caption[Global Dependence (dependence first vs.\ homogeneity first)]{
		Global Dependence (dependence first vs.\ homogeneity first):
		FPR (lhs) targeting 5\% (blue, green) and 2.5\% (orange),
		and TPR (rhs) for different ground-truth scenarios (see main text).
		Details on the benchmarking setup are given in §\ref{apdx:model_details}.
		Solid lines are dashdot if on top of each other.
	}
	\label{fig:global_dependence}
\end{figure}

Numerical experiments (see Fig.\ \ref{fig:global_dependence}) support the qualitative understanding outlined above: Error control works in both cases; the homogeneity test
used tends to overcontrol false-positives for larger $N$, this is discussed in the next section §\ref{sec:testing_homogeneity}.
Power against IID alternatives and against the true-regime case
differs mostly due to the ambiguous choice of
$\alpha$ (see above; in Fig.\ \ref{fig:global_dependence} for solid
lines and dashed lines respectively, green is between orange and blue).
The most evident difference is visible for the weak regime case
(see §\ref{sec:weak_regimes}, dotted lines):
Here the homogeneity first approach substantially outperforms
the global dependence first approach.

Our choice of inductive bias for subsequent numerical experiments
and recommendation for application of our method is to use the homogeneity first approach.
The reasons for this conclusion are:
\begin{itemize}
	\item The loss in power against IID-alternatives by applying homogeneity first is small.
	\item There is a substantial gain on a particularly challenging case
	(weak regimes, §\ref{sec:weak_regimes}); an approach that
	performs well on simple cases will eventually produce diminishing returns,
	a focus on challenging cases seems to make sense.
	\item We expect our framework and method to be applied in cases where there is
	substantial interest in, and a plausible presence of, non-IID effects;
	a focus primarily on the IID case seems like an inappropriate choice of inductive bias.
\end{itemize}

\subsubsection{Homogeneity}\label{sec:testing_homogeneity}

We want to decide whether our data is IID (random) or not.
This is a longtime and extensively studied problem
\citep{WaldWolfowitzRuntest, RandomnessTestsOnline},
so we again focus on simplicity and insights about the problem, rather than giving an extensive
review of related literature.
Future implementations of our framework can easily switch to more sophisticated approaches.
Generally it is of course not possible to test if data is IID considering all possible alternatives
(see also prop.\ \ref{prop:imposs_A}).
However, by the persistence-assumption, we already singled out a specific class of
alternatives. So we study testing of
IIDness vs.\ persistent alternatives. Our goal is to remain close to
our intuition about persistence, while still providing
a sound statistical test. This also includes a solid understanding of
the relation of hyperparameter-choices to persistence assumptions.

Indeed this intuition is very simple:
Under the null (trivial regime-structure) all blocks (aligned or not) are equal.
Pretend for the moment that we knew  the true value of $d$ and the $\beta$-confidence
intervals for $\hat{d}$ per block around this $d$.
Then the number of blocks with $\hat{d}$ outside
of the $\beta$-confidence region are binomially distributed; thus confidence bounds
can be obtained.
Under the alternative (persistent regime-structure) on the other hand,
valid blocks are distributed as if drawn
from one of multiple contexts exclusively, say with ground-truth values $d_0 \neq d_1$
(for gaining power given more than two regimes, similar arguments apply).
So the valid blocks lead to a multi-modal distribution spaced by
$|d_1 - d_0|$. If each regime contains at least a ratio of $\beta$ many valid blocks,
and $|d_1 - d_0|$ is large compared to the size of the $\beta$-confidence-region
(which depends on block-size, cf.\ also Fig.\ \ref{fig:tradeoffs} and discussion in §\ref{sec:weak_regimes}), we can reject IIDness. More formally:
\begin{lemma}[Binomial Test]\label{lemma:binomial_test}
	Fix a hyperparameter $\beta \in (0,1)$. Given $\alpha \in (0,1)$ and
	a sound estimator $\hat{q}_\beta^\leq$ of a lower bound of the $\beta$-quantile of the distribution
	in the $d_1$-regime (Def.\ \ref{def:quantile_est}), then we can reject data-homogeneity
	with type I errors controlled at $\alpha$ as follows:
	
	Let $k := \setelemcount{\{\tau | d_\tau < q_\beta \}}$ be the number of blocks
	with $d_\tau$ less than $q_\beta$.
	Denote by $\phi^{\txt{binom}}_{\Theta, \beta}$ the cumulative distribution-function
	of the binomial distribution $\BinomDist(n=\Theta, p=\beta)$.
	Then $p_0 := 1-\phi^{\txt{binom}}_{\Theta, \beta}(k)$ is a
	valid p-value under the null:
	\begin{equation*}
		P_{\txt{homogenous}}( p_0 < \alpha ) \leq \alpha\txt.
	\end{equation*}
	In particular rejecting homogeneity iff $p_0<\alpha$ leads to a valid test
	(Def.\ \ref{def:error_control_power_meta}).
\end{lemma}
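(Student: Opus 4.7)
The plan is to reduce the statement to a standard fact about binomial tails plus a stochastic-domination argument. Under the null of homogeneity, by Assumption~\ref{ass:segment_structure} restricted to a single regime, the blocks are disjoint IID collections of samples. Hence the per-block scores $d_\tau$ themselves are IID draws from a common distribution, and for the true $\beta$-quantile $q_\beta$ of that distribution we would have $P(d_\tau < q_\beta) \leq \beta$ by definition of a quantile. Were $q_\beta$ known, the count $k := |\{\tau : d_\tau < q_\beta\}|$ would thus be stochastically dominated by $X \sim \BinomDist(\Theta, \beta)$, and $1 - \phi^{\text{binom}}_{\Theta, \beta}(k)$ would be a valid p-value by the usual argument that CDF-transformed statistics are super-uniform.

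The subtlety is that we do not have $q_\beta$, only the sound estimator $\hat{q}_\beta^\leq$ of a lower bound. Here I would invoke the precise soundness statement of Def.~\ref{def:quantile_est} (which I read as: with the required coverage, $\hat{q}_\beta^\leq \leq q_\beta$ almost surely or with controlled failure probability). Under the event $\{\hat{q}_\beta^\leq \leq q_\beta\}$, the indicator $\mathbf{1}[d_\tau < \hat{q}_\beta^\leq]$ is pointwise dominated by $\mathbf{1}[d_\tau < q_\beta]$, and so the observed $k$ is dominated by the count one would have obtained using $q_\beta$. By transitivity of stochastic ordering, $k$ is dominated by $X$.

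From stochastic domination, the p-value inequality is a short monotonicity calculation: the event $\{p_0 < \alpha\}$ is, for a non-decreasing CDF, the event that $k$ exceeds the smallest integer $k_0$ with $1 - \phi^{\text{binom}}_{\Theta, \beta}(k_0) < \alpha$. Under $X \sim \BinomDist(\Theta, \beta)$ this event has probability at most $\alpha$ (by construction of $k_0$), and by domination the same holds for $k$. This yields the claimed bound, from which validity of the test in the sense of Def.~\ref{def:error_control_power_meta} follows immediately.

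I expect the main obstacle to be the handling of the estimator $\hat{q}_\beta^\leq$: if it is computed from the same blockwise data that enters $k$, then the soundness event and the per-block tail event are not independent, and stochastic domination must be argued pointwise on the sample path (as above) rather than in distribution, and any probabilistic failure budget in Def.~\ref{def:quantile_est} has to be absorbed into $\alpha$ by a union bound. The remaining ingredients — independence of blocks under the null and the discrete-CDF super-uniformity of the binomial-tail p-value — are routine and do not require separate argument.
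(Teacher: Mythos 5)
Your overall route is the paper's route: under the null of homogeneity the aligned blocks are disjoint collections of IID samples, so the per-block scores $d_\tau$ are IID; a sound lower bound for the $\beta$-quantile makes the count $k$ (pathwise, on the soundness event) dominated by the count at the true quantile, hence stochastically dominated by $X\sim\BinomDist(n=\Theta,p=\beta)$; the binomial tail then supplies the p-value. The ingredients you list (block independence under the null, pointwise domination of indicators on $\{\hat q_\beta^\leq \le q_\beta\}$) are exactly the ones needed. The step that does not close as written is the super-uniformity bookkeeping. With the conventions of the statement ($\phi^{\text{binom}}_{\Theta,\beta}$ the usual CDF, so $p_0=P(X>k)$), the rejection event is $\{k\ge k_0\}$ with $k_0=\min\{m: 1-\phi^{\text{binom}}_{\Theta,\beta}(m)<\alpha\}$, and ``by construction of $k_0$'' you only obtain $P(X>k_0)<\alpha$, not $P(X\ge k_0)\le\alpha$; in the boundary case $k\sim\BinomDist(n=\Theta,p=\beta)$ the latter can exceed $\alpha$ by one binomial atom (take $\Theta=1$, $\beta=\tfrac12$, $\alpha=0.3$: the rejection probability is $\tfrac12$). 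Your word ``exceeds'' is ambiguous, and under either reading one half of the step is unproved: either the event identification $\{p_0<\alpha\}=\{k>k_0\}$ is wrong, or the probability bound is. The argument closes for the tail $P(X\ge k)$, i.e.\ $1-\phi^{\text{binom}}_{\Theta,\beta}(k-1)$, which is also what Rmk.\ \ref{rmk:binomial_rounding_control} (over-control caused by the integer count) presupposes; so you should carry the weak/strict inequalities explicitly and work with that tail rather than appeal to ``the usual argument.''

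The second soft spot is the level. Absorbing a failure probability $\delta$ of the event $\{\hat q_\beta^\leq \le q_\beta\}$ ``into $\alpha$ by a union bound'' proves control at $\alpha+\delta$, not at the claimed $\alpha$. The lemma's guarantee at level $\alpha$ rests on the specific soundness notion of Def.\ \ref{def:quantile_est} — e.g.\ an (almost surely) conservative lower bound, as for the analytical partial-correlation estimator, or a definition that already allocates any stochastic failure budget — so the proof must invoke that definition at this point rather than a generic union bound; your own caveat about the estimator being computed from the same blockwise data is the right instinct, but it has to be resolved through the definition, not deferred.
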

\begin{rmk}\label{rmk:binomial_rounding_control}
	Controlling via a binomial, thus via an \emph{integer} count, will typically not
	control at $\alpha$ but rather below $\alpha$ (at the next larger count $k$ such that
	the cdf at $k$ is greater than $1-\alpha$);
	this would be true even if all approximations were exact.
\end{rmk}

An appropriate definition of soundness for the estimator $\hat{q}_\beta^\leq$ 
is discussed in §\ref{apdx:homogeneity_quantile_est}.
In §\ref{apdx:homogeneity_quantile_est} we also provide such sound estimators based
on analytical results for partial correlation and based on bootstrapping\Slash{}shuffling
suitable for general estimators and analyze them numerically on partial correlation.
Subsequent numerical results employ
the analytical estimate, primarily for low run-time at small loss of power;
applications and specialized versions for time series (beyond MCI, cf.\ §\ref{apdx:MCI})
may want to reconsider this choice.
Proofs and a discussion of power-results can be found in §\ref{apdx:testing_homogeneity}.

\begin{figure}[ht]
	\begin{minipage}{0.38\textwidth}
		\begin{tikzpicture}
			\draw (-1, 1.8) node {(a)};
			\draw[->, line width=.2em, color=gray] (-0.6,1.6) -- (3.9,1.6)
			node[midway, above]{ground-truth regime-size $L$\hspace*{0.5em}};
			\draw (0,0) node
			{\mayincludegraphics[scale=0.2]{homogeneity/hyperparameters_individual/by_length_region_10000_1}};
			\draw (1.5,0) node
			{\mayincludegraphics[scale=0.2]{homogeneity/hyperparameters_individual/by_length_region_10000_2}};
			\draw (3,0) node
			{\mayincludegraphics[scale=0.2]{homogeneity/hyperparameters_individual/by_length_region_10000_3}};
			
			\draw[<-] (-0.6, -1.15) -- (-0.6, 1.05) node [midway, left, anchor=south, rotate=90]
			{\footnotesize{block-size $B$}};
			\draw[->] (-0.6, 1.05) -- (0.75, 1.05) node [midway, below, anchor=south]
			{\footnotesize{quantile $\beta$}};

			\foreach \offset [count=\i from 0]
			in {0.8900, 0.7729, 0.6740, 0.5883, 0.5126, 0.4450, 0.0414, -0.2044, -0.3818, -0.5206, -0.6347, -0.7315, -0.8156, -0.8900} {
				\draw (2.575,\offset) -- (2.475,\offset) node (ytick\i){};
			}
			\draw (ytick5.west)+(-0.05,0) node {\footnotesize{$10$}};
			\draw (ytick10.west)+(-0.05,0) node {\footnotesize{$50$}};

			\draw (2.6 + 0.05 * 2.3571, -0.9) -- (2.6 + 0.05 * 2.3571, -1)
			node[below]{\footnotesize $0.1$};
			\draw (2.6 + 0.15 * 2.3571, -0.9) -- (2.6 + 0.15 * 2.3571, -1);
			\draw (2.6 + 0.25 * 2.3571, -0.9) -- (2.6 + 0.25 * 2.3571, -1);
			\draw (2.6 + 0.35 * 2.3571, -0.9) -- (2.6 + 0.35 * 2.3571, -1)
			node[below]{\footnotesize $0.4$};
		\end{tikzpicture}
		\vspace*{-0.75em}
	\end{minipage}
	\begin{minipage}{0.6\textwidth}
		\caption{
			Power of binomial homogeneity-test for different hyperparameters:
			Panel (a) shows, for fixed $N=10^4$ (other $N$ look similar, higher $N$ show a clearer picture), true positive rate (controlling FPR at 5\%) for ground-truth regimes
			of average length in ranges $10\ldots100$, $100\ldots1000$ and $>1000$.
			Panel (b) shows again TPR,
			for an uninformative (cf.\ Def.\ \ref{def:uninformative_prior_temporal})
			prior on regime-lengths, for increasing sample-size
			(logarithmic scale for $N=10^2\ldots10^{4.5}$).
		}
		\label{fig:homogeneity_power}
		\vspace*{-0.75em}
	\end{minipage}
	\begin{minipage}{0.8\textwidth}
		\begin{tikzpicture}		
			\draw (4.7, 1.3) node {(b)};
			\draw[->, line width=.2em, color=gray] (4.5,1) -- (15.6,1)
			node[midway, above]{sample-size $N$\hspace*{0.5em}};
			\foreach \Nidx in {0,...,10}{
				\draw (5+\Nidx, 0) node 
				{\mayincludegraphics[scale=0.15]{homogeneity/hyperparameters_individual/true_regime_\Nidx}};
			}
		\end{tikzpicture}
	\end{minipage}\hfill
	\begin{minipage}{0.15\textwidth}
		\begin{tikzpicture}
			\draw (-0.5,0) node[rotate=90] {TPR-values};
			\draw (0,0) node {				
				\mayincludegraphics[scale=0.15, angle=90]{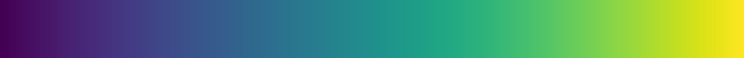}
			};
			\draw (0.0,0.95) -- (0.2,0.95) node[anchor=west] {1.0};
			\draw (0.0,0) -- (0.2,0) node[anchor=west] {0.5};
			\draw (0.0,-0.95) -- (0.2,-0.95) node[anchor=west] {0.0};
		\end{tikzpicture}		
	\end{minipage}
\end{figure}

\paragraph{Choice of hyperparameters:}
Scans over the hyperparameters block-size $B$ and quantile $\beta$
on uninformative priors (see Fig.\ \ref{fig:homogeneity_power})
can be used to pick reasonable hyperparameters. Unsurprisingly, on large
(ground-truth) regimes, larger blocks perform better, thus we propose, additionally to the
\inquotes{uninformed} hyperparameters, a second set of parameters for
the case where background-knowledge suggests large ($\gtrsim 100$ data points per segment)
regimes. We pick hyperparameters based on sample-size $N$ and size of the conditioning-set
$|Z|$ based on a simple semi-heuristic form. A detailed analysis is given
in §\ref{apdx:homogeneity_hyper_heuristic}.
If substantial prior knowledge on ground-truth regime-sizes is available it may make sense
to consult the figures in \ref{apdx:homogeneity_hyper_heuristic} for better choices.
A summary will be given in Rmk.\ \ref{rmk:hyperparam_meaning}.

\begin{rmk}
	Note that the argument (and test) given in this section is making a
	\emph{direct} decision about the presence of non-IID structure,
	without requiring an explicit \mbox{(time-)}\allowbreak{}resolution of regimes.
	Indeed, given an (approximate) resolution into multiple (say two)
	regimes differing by $\Delta d = |d_1-d_0|$, testing IIDness
	must avoid (take into account) over-fitting the regime-structure:
	Allowing too many choices (high temporal resolution)
	for regime-resolutions will inflate the null (the $\alpha$-plausible
	value-range for $\Delta d$ given IID data), so an indirect approach
	must control overfitting (related to the choice of block-sizes in our
	approach) \emph{and} estimate the corrected null-distribution (which
	can be complicated).
\end{rmk}

\subsubsection{Weak Regimes}\label{sec:weak_regimes}

By the previous testing-stages, we have
(almost, we focus on the case of at most two regimes,
which primarily affects power-results;
see §\ref{apdx:simplify_regime_structure} and §\ref{apdx:weak_methods})
reduced the problem to the following situation:
We are given data with precisely two different persistent
regimes of dependence $d_1$ and $d_0$ respectively, where $|d_1| > |d_0|$.
The question we seek to answer is whether $d_0 = 0$.
Unfortunately this cannot be tested statistically in a standard sense,
a problem resulting essentially from the combination of
the \emph{rejection} of homogeneity with the \emph{acceptance} of independence:
\begin{prop}[Impossibility Result B]\label{prop:imposs_B}
	Assume the model-class considered has well-defined model-properties
	$d_0 \in\mathbb{R}_{\geq 0}$ and $\Delta d \in\mathbb{R}_{\geq 0}$
	such that a model $M$ has an independent regime iff $d_0(M) = 0$ and
	is homogeneous iff $\Delta d(M) = 0$.
	Further assume that we are given an mCIT $\hat{T} \in \{0,1,\mCIToutR\}$
	such that $\Pr(\hat{T}=\mCIToutR|d_0,\Delta d)$ is continuous in $d_0$ and $\Delta d$,
	and that there is no a priori known gap in realized parameters $d_0$ and $\Delta d$,
	that is $P(d_0,\Delta d) > 0$ near $d_0 = 0$,
	and there is no a priori known gap in $\Delta d$ on weak regimes,
	that is $P(\Delta d|d_0=0) > 0$ near $\Delta d=0$.
	
	Then, given $\alpha > 0$, no such mCIT can for any finite sample-size $N$
	\begin{enumerate}[label=(\alph*)]
		\item on true regimes uniformly control
		FPR at $<\alpha$ and have non-trivial power $>\alpha$ on any global independence alternative or
		\item on global dependence or the union of global hypotheses control
		FPR at $<\alpha$ and have non-trivial power $>\alpha$ on any true regime alternative.
	\end{enumerate}
\end{prop}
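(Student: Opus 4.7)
The plan is to prove both parts by a clean contradiction using the stipulated continuity of $\Pr(\hat T = R \mid d_0, \Delta d)$ to transport the uniform error bound onto the shared boundary between two competing ground-truth hypotheses. The no-gap assumptions enter precisely to guarantee that the relevant boundary point is itself in the admissible parameter range, so taking a one-sided limit cannot leave the model class.

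For (a), I assume toward contradiction that some $\hat T$ controls FPR strictly below $\alpha$ uniformly over true regimes (i.e.\ $\Pr(\hat T \neq R \mid d_0 = 0, \Delta d) < \alpha$ for every $\Delta d > 0$) while achieving power strictly above $\alpha$ on some global-independence alternative ($d_0 = \Delta d = 0$). Rewriting uniform control as $\Pr(\hat T = R \mid d_0 = 0, \Delta d) > 1 - \alpha$ for every $\Delta d > 0$ and letting $\Delta d \downarrow 0$, continuity yields $\Pr(\hat T = R \mid 0, 0) \geq 1 - \alpha$, hence $\Pr(\hat T \neq R \mid 0, 0) \leq \alpha$, contradicting the power claim. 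The hypothesis $P(\Delta d \mid d_0 = 0) > 0$ near $0$ is exactly what makes the approaching sequence of $\Delta d > 0$ correspond to admissible models.

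Part (b) is symmetric, with the role of $\Delta d$ played by $d_0$. Uniform control over the global hypotheses gives $\Pr(\hat T = R \mid d_0, \Delta d) < \alpha$ whenever $\Delta d = 0$ or $d_0 > 0$. Fix a target true regime $(0, \Delta d^\ast)$ with $\Delta d^\ast > 0$; the no-gap hypothesis $P(d_0, \Delta d) > 0$ near $d_0 = 0$ supplies a sequence of admissible globally-dependent models $(d_0, \Delta d^\ast)$ with $d_0 \downarrow 0$. Continuity in $d_0$ then forces $\Pr(\hat T = R \mid 0, \Delta d^\ast) \leq \alpha$, contradicting non-trivial power on the target true regime.

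The main obstacle is not the limit argument itself (essentially one line in each part) but rather to identify and defend the hypotheses so that they genuinely capture the obstruction and cannot be sidestepped. In particular, continuity in $(d_0, \Delta d)$ rules out mCITs that react discontinuously at the boundary (e.g.\ a discrete check tuned to a parametric family in which $\Delta d = 0$ or $d_0 = 0$ by construction), and the no-gap hypotheses rule out a prior that artificially separates the two regions of parameter space. It is worth emphasising in the writeup that this is structurally different from the conventional CIT hardness of \cite{shah2020hardness}: what obstructs a mCIT here is not the difficulty of certifying independence per se, but the requirement to positively declare the $R$ category at a boundary that global and true-regime scenarios share continuously.
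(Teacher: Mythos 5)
Your argument is correct and is essentially the paper's own proof: both parts follow by using the assumed continuity of $\Pr(\hat{T}=R\mid d_0,\Delta d)$ to carry the uniform error bound along a sequence of admissible parameters (supplied by the no-gap assumptions) to the boundary point shared by the null and the alternative, contradicting the claimed power there. The closing remark correctly distinguishes this from the hardness of conventional CI testing, which matches the paper's framing of why the weak-regime case is fundamentally harder.
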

A more detailed discussion is provided in §\ref{apdx:why_weak_is_harder};
this is an important conceptual reason why the weak-regime test is more difficult
to realize than the homogeneity test.
One may of course ask about convergence of a specific decision-procedure
(without finite-sample error control) in some appropriate limit.
We do provide such an argument in §\ref{apdx:homogeneity_scaling}.
However, the above result shows that even if a procedure yields the correct decision
asymptotically, given any finite sample-count $N$,
there are always regime-structures on which it will not produce meaningful results.
So the more urgent question is: \emph{How} should a procedure look like to
make it transparent \emph{when} it will work and what are the \emph{trade-offs}
we necessarily have to make?

This section focuses on the concrete challenges arising for this problem.
The viewpoint we take is a rather practical one, focusing on assumptions
on model\Slash{}regime structures that are intuitive, have a clear relation to
the hyperparameters of the method we introduce and lead to reasonable
finite-sample performance on uninformative model-priors, where they restore
FPR-control in a Bayesian (averaging) sense.
A potential path forward to a theoretically more satisfying solution
is laid out in §\ref{apdx:assess_validity} based on the idea
to first assess whether the model\Slash{}regime structures satisfy given
assumptions, and insist on a decision \emph{only} then.

There are also some further subtleties that arise from the 
integration of the test with the other stages:
If we understand testing $d_0$ as an independence-statement on a subset of data,
than a (false) positive leads to a (false) rejection of the existence of
a true regime. From the perspective of detecting regimes, it thus leads to a
(false) negative. To avoid confusion, we show below the impact
of hyperparameter choices on recall and precision \emph{of regime-detection},
as this is more clearly interpretable in the greater context of our framework.
This also affects some practical decisions, for example:
\begin{rmk}\label{rmk:low_sample_count_as_validity}
	In cases where too few data points
	of low dependence are available for reliable testing,
	we do \emph{not} interpret the result as a regime,
	which makes sense from the regime-detection perspective, but is unnatural from
	the independence-testing perspective.
\end{rmk}
By testing in the wake of the homogeneity test,
the prior over parameters which the weak-regime test sees (even for an uninformative model-prior)
is rather specific. For example it encounters up to $\alpha$ only (persistence-)non-homogeneous data,
because the homogeneity test must have had power to reject.

We return to the concrete test-implementation.
The weak-regime test could be approached by clustering the $\hat{d}$-scores obtained
on blocks. For example for partial correlation the score (z-values) is approximately normal
so that a multi-modal normal approach could yield good results.
In practice there are at least two difficulties in making this work:
One requires a good uncertainty estimate on the cluster-means, and one requires robustness
against invalid blocks. We believe that it is possible to substantially improve
performance in this step by using an appropriate clustering method, but because
of these difficulties, we opt for a modified independence-test instead.
This also makes the applicability assumptions and trade-offs, which we laid out as
our principle goals before, more accessible.

Concretely we study the statistical behavior of those blocks with
$\hat{d}$-scores below a certain cutoff $c$;
we will assume \asswlog $d_1>0$.
We face multiple challenges:
\begin{enumerate}[label=(\roman*)]
	\item There are impurities below the cutoff, coming from the dependent regime.
	\item There are	data points above the cutoff lost from the low-dependence-regime,
		this incurs a bias on those remaining.
	\item Even in the persistent case, not \emph{all} blocks are valid,
		and we have to account for the impact of invalid blocks.
\end{enumerate}
Importantly, by our direct approach, we can treat these contributions statistically,
without the need to reason about the identity of individual blocks:
It is enough to assess the rate of impurities and their impact,
but we do not need to know which particular blocks are impurities.
The choices of the cutoff $c$ and block-size $B$ allow to navigate
the trade-offs between these problems as illustrated in 
Fig.\ \ref{fig:tradeoffs}.
Finally, the choice of cutoff also affects the available sample-size (below $c$).

\begin{figure}[ht]
	\begin{minipage}{0.6\textwidth}
		\begin{tikzpicture}[scale=0.5]
			\draw (0,0.2) node {\mayincludegraphics
				[scale=0.2,trim={1cm 1cm 1cm 1cm},clip]{figures/tradeoffs/B0_c0}};
			\draw (5,0.2) node {\mayincludegraphics
				[scale=0.2,trim={1cm 1cm 1cm 1cm},clip]{figures/tradeoffs/B1_c0}};
			\draw (10,0.2) node {\mayincludegraphics
				[scale=0.2,trim={1cm 1cm 1cm 1cm},clip]{figures/tradeoffs/B2_c0}};
			
			\draw (0,4) node {\mayincludegraphics
				[scale=0.2,trim={1cm 1cm 1cm 1cm},clip]{figures/tradeoffs/B0_c1}};
			\draw (5,4) node {\mayincludegraphics
				[scale=0.2,trim={1cm 1cm 1cm 1cm},clip]{figures/tradeoffs/B1_c1}};
			\draw (10,4) node {\mayincludegraphics
				[scale=0.2,trim={1cm 1cm 1cm 1cm},clip]{figures/tradeoffs/B2_c1}};
			
			\draw (0,-3.2) node {\mayincludegraphics[scale=0.2]{figures/invalid_fraction/low}};
			\draw (5,-3.2) node {\mayincludegraphics[scale=0.2]{figures/invalid_fraction/mid}};
			\draw (10,-3.2) node {\mayincludegraphics[scale=0.2]{figures/invalid_fraction/high}};
			
			\draw[line width=0.1em, ->] (-2.5,-1.8) -- (-2.5, 6) node [midway, left, anchor=south,
			rotate=90] {\small cutoff};
			\draw[line width=0.1em, ->] (-2.5,-1.8) -- (13, -1.8) node [midway, below] {\small
				block size};
		\end{tikzpicture}
	\end{minipage}
	\begin{minipage}{0.38\textwidth}
		\caption{Trade-offs in hyperparameter choices.
			The two rows of \inquotes{density}-plots have different cutoffs (vertical black line).
			Illustrated for both
			are densities for independent (blue) and dependent (orange) regime, as well
			as rates of:
			impurities (orange shaded); lost data points (blue shaded) and bias induced
			by not using them (hatched blue, vertical blue lines are
			true and biased mean);
			invalid blocks (hatched red).
			The black dash--dot line is the (actually observed) sum of all
			contributions.
			The bottom panel shows the increase in the amount of data in invalid
			blocks with larger block-size.
		}\label{fig:tradeoffs}
	\end{minipage}
\end{figure}
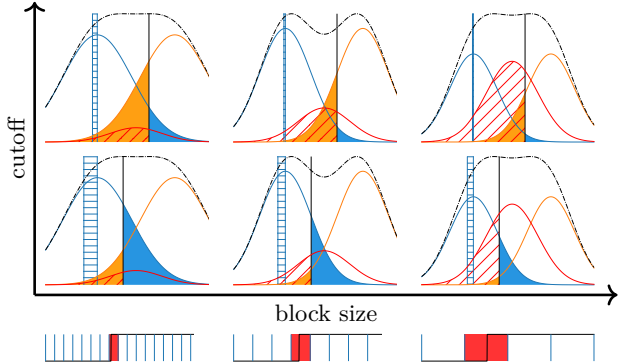

How can one account for these effects?
The effect of the lost samples can be well approximated by considering a
cutoff normal density; the incurred shift $\mu^B_c<0$
of the mean-value is shown in  Fig.\ \ref{fig:tradeoffs}
as blue-hatched region.
For the effect of impurities one may give a simple upper-bound as long as
there are less impurities, say $k_I$ than lost data points, say $k_L$:
Impurities individually contribute less than $c$ to the mean, thus at most $\leq \frac{k_I}{\Theta} c$
in total, while lost data points loose at least $c$, thus at least $\geq \frac{k_L}{\Theta} c$,
so that the mean changes by $\frac{k_I-k_L}{\Theta} c < |\mu_{0,\sigma_B}^c|$.
This argument required less impurities than lost data points ($k_I \leq k_L$).
Thus, as long as there are few invalid blocks, the mean-value of 
block-scores below $c$ is in the range $[\mu_{0,\sigma_B}^c, 0]$ if $d_0=0$
up to the expected finite-sample confidence of $\hat{d}$.
In summary we have just constructed the following method:
\begin{assumption}[Acceptance Interval Applicability]\label{ass:interval}
	Require controlled impurities $c < q_a$ where $q_a(B)$ is the $a$-th quantile of $\hat{d}$ on all data,
	and few invalid blocks $\chi \approx 0$:
	\begin{equation*}
		c < q_a
		\quad\text{and}\quad 
		B \ll L
	\end{equation*}
\end{assumption}
\begin{lemma}[Acceptance Interval]\label{lemma:interval_method}
	Some additional but rather weak assumptions are required,
	these are discussed in §\ref{apdx:simplify_d_for_power},
	see also the more formal statement Lemma \ref{lemma:interval_method_apdx}
	in the appendix.
	
	Choose a cutoff $c>0$
	and a minimal valid size $n_0^{\txt{min}}$,
	compute the mean $\hat{d}_{\txt{weak}}$
	of blocks of dependence $\hat{d}$
	with $\hat{d} < c$ 
	where the number of such blocks is $n_c$.
	\begin{equation*}
		\hat{d}_{\txt{weak}} = n_c^{-1} \sum_{\tau|\hat{d}_\tau\leq c} \hat{d}_\tau
	\end{equation*}
	We assume that the variance $\sigma_B^2$ of
	$\hat{d}$ estimated on $B$ samples under the null is known.
	
	Given an error-control target $\alpha > 0$,
	let $\sigma_{n_c}^\alpha = n_c^{-\frac{1}{2}}\sigma_B\phi(1-\frac{\alpha}{2})$
	be the (two-sided) $\alpha$-interval around $0$.
	Define an acceptance interval $I := [\mu_{0,\sigma_B}^c-\sigma_{n_c}^\alpha, \sigma_{n_c}^\alpha]$.
	Accept the null hypothesis (independence of the weak regime)
	if $n_c \geq n_0^{\txt{min}}$ and $\hat{d}_{\txt{weak}} \in I$,
	otherwise reject it.	
	Under Ass.\ \ref{ass:interval}, this test controls false positives.
\end{lemma}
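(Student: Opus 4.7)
The plan is to show that under the null hypothesis $d_0 = 0$ together with Ass.\ \ref{ass:interval}, the expected value $E[\hat{d}_{\txt{weak}}]$ lies in the interval $[\mu_{0,\sigma_B}^c, 0]$, then apply the finite-sample confidence guarantee from Ass.\ \ref{ass:underlying_d} with width $\sigma_{n_c}^\alpha$ on either side of this range to conclude that $\hat{d}_{\txt{weak}} \in I$ with probability at least $1-\alpha$.

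First I would partition the $\Theta$ blocks into valid $d_0$-regime blocks (count $\Theta_0$), valid $d_1$-regime blocks (count $\Theta_1$), and invalid blocks (count $\Theta_{\txt{inv}}$). The persistence half of Ass.\ \ref{ass:interval} ($B \ll L$, \ie $\chi \approx 0$) ensures that $\Theta_{\txt{inv}}$ is small compared to $\Theta_0, \Theta_1$, so the invalid-block contribution to $n_c \hat{d}_{\txt{weak}}$ enters only as a vanishing correction absorbed in the $\alpha$-budget.

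Next I would carry out the key decomposition $n_c \hat{d}_{\txt{weak}} = S_0 + S_1 + S_{\txt{inv}}$, where $S_j$ sums the $\hat{d}$-values over kept blocks of type $j$. Under the null, $\hat{d}$ on $d_0$-blocks has mean $0$ and variance $\sigma_B^2$, so conditioning on $\hat{d} < c$ contributes $\mu_{0,\sigma_B}^c < 0$ per kept block. Each impurity (kept $d_1$-block) contributes strictly less than $c$. The cutoff condition $c < q_a$ from Ass.\ \ref{ass:interval} forces the total count of blocks below $c$ to be small relative to $\Theta$, and matching the per-sample accounting from the main text (lost $d_0$-points each remove at least $c$ from the would-be mean, while impurities each add at most $c$), one obtains $k_I \leq k_L$ and consequently $E[\hat{d}_{\txt{weak}}] \in [\mu_{0,\sigma_B}^c, 0]$. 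Finally, Ass.\ \ref{ass:underlying_d} gives that $\hat{d}_{\txt{weak}}$ concentrates around its expectation with confidence radius $\sigma_{n_c}^\alpha$ once $n_c \geq n_0^{\txt{min}}$, and since $I$ was constructed precisely to be the $\sigma_{n_c}^\alpha$-thickening of $[\mu_{0,\sigma_B}^c, 0]$, the containment $\hat{d}_{\txt{weak}} \in I$ holds with probability at least $1-\alpha$.

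The hardest step will be the bound $E[\hat{d}_{\txt{weak}}] \leq 0$, \ie formalizing $k_I \leq k_L$ beyond the heuristic of the surrounding text. Care is needed because $k_I$ and $k_L$ are random and coupled to the regime-assignment and to the realized $\hat{d}$-values; the argument has to pass cleanly to expectations, exploit the quantile condition to control the expected impurity count uniformly in $d_1$, and absorb the residual contribution from invalid blocks (whose $\hat{d}$ can be arbitrary) into a small inflation of the nominal level. I expect a fully rigorous version may need to either invoke the variance-uniformity and convergence assumptions deferred to §\ref{apdx:simplify_d_for_power}, or weaken the conclusion to FPR control up to an additive term vanishing with $\chi$.
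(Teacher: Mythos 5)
Your proposal is correct and follows essentially the same route as the paper's own argument: model the kept null-regime blocks as a cutoff (truncated) density with mean shift $\mu_{0,\sigma_B}^c<0$, bound the impurity effect by the per-block accounting that each impurity adds less than $c$ while each lost block removes at least $c$ (so $k_I\leq k_L$ keeps the expectation in $[\mu_{0,\sigma_B}^c,0]$), treat invalid blocks via $\chi\approx 0$, and then widen by the $\sigma_{n_c}^\alpha$ confidence radius to obtain FPR control. The caveats you flag (formalizing $k_I\leq k_L$, the role of the assumptions in §\ref{apdx:simplify_d_for_power}, and control only up to terms vanishing with $\chi$) are exactly the points the paper also defers to its appendix treatment.
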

For proofs, a more detailed discussion of assumptions and a formal interpretation
of the assumption $\chi \approx 0$ see §\ref{apdx:testing_weak_regime},
where also a statistical power analysis is given.
The assumption \ref{ass:interval} and Fig.\ \ref{fig:tradeoffs} inform us about the meaning of our hyperparameters:
\begin{rmk}[Relations of Hyper-Parameters and Model]\label{rmk:hyperparam_meaning}
	\begin{align*}
		B \leftrightarrow L
			\quad:\quad& \txt{$B$ has to be chosen small enough relative to $L$}\\
		c \leftrightarrow d_1 \approx \Delta d
			\quad:\quad& \txt{(weak-test only) $c$ has to be chosen small compared to}\\
			&\txt{regime-separation	$\Delta d = |d_1 - d_0|$ (measured by $d$)}\\
		\alpha, |I| \leftrightarrow d_0
			\quad:\quad& \txt{$\alpha$ controls the sensitivity for small $d_0$,}\\
			&\txt{but the size
			of the interval $I$ (thus $B$, $c$) also contributes,}\\
		\beta \leftrightarrow a
		\quad:\quad& \txt{(homogeneity-test only) $\beta$ controls the sensitivity to small regime-}\\
			&\txt{fractions $a$ (fraction of data-points in the independent regime).}\\
	\end{align*}
\end{rmk}

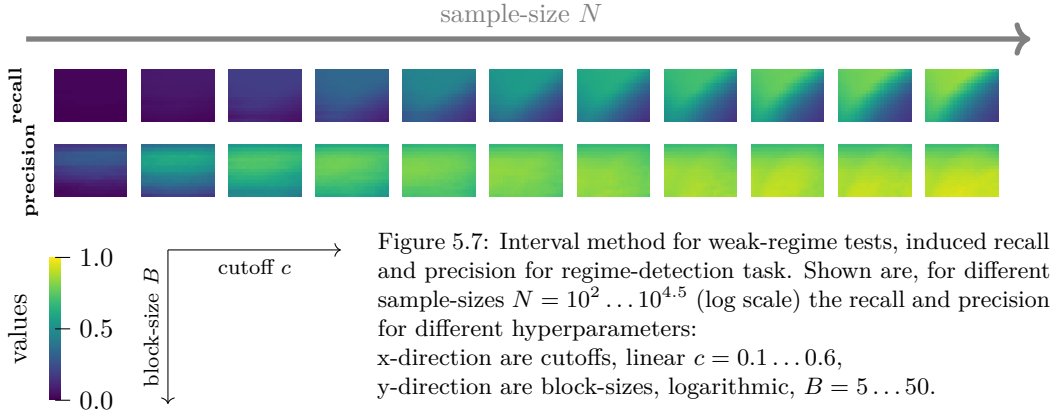
\begin{figure}[ht]
	\begin{minipage}{1.0\textwidth}
		\begin{tikzpicture}
			\draw[->, line width=.2em, color=gray] (-0.75,0.75) -- (12.6,0.75)
			node[midway, above]{sample-size $N$\hspace*{0.5em}};

			\draw (-0.5, 0) node[anchor=west]
			{\mayincludegraphics[scale=0.425]{weak/interval_recall}};

			\draw (-0.5, -1.0) node[anchor=west] {\mayincludegraphics[scale=0.425]{weak/interval_precision}};
			
			\draw (-0.9, 0) node[rotate=90]{\scriptsize\textbf{recall}};		
			\draw (-0.7, -1.) node[rotate=90]{\scriptsize\textbf{precision}};
		\end{tikzpicture}
	\end{minipage}
	\begin{minipage}{0.35\textwidth}
		\begin{tikzpicture}
			\draw (-0.5,0) node[rotate=90] {values};
			\draw (0,0) node {				
				\mayincludegraphics[scale=0.15, angle=90]{color_map_power_default}
			};
			\draw (0.0,0.95) -- (0.2,0.95) node[anchor=west] {1.0};
			\draw (0.0,0) -- (0.2,0) node[anchor=west] {0.5};
			\draw (0.0,-0.95) -- (0.2,-0.95) node[anchor=west] {0.0};
			
			\draw[<-] (1.5, -1) -- (1.5, 1.05) node [midway, left, anchor=south, rotate=90]
			{\footnotesize{block-size $B$}};
			\draw[->] (1.5, 1.05) -- (3.8, 1.05) node [midway, below, anchor=north]
			{\footnotesize{cutoff $c$}};
		\end{tikzpicture}		
	\end{minipage}\hfill
	\begin{minipage}{0.64\textwidth}
		\caption{Interval method for weak-regime tests, induced recall and precision
			for regime-detection task. Shown are, for different sample-sizes
			$N=10^2\ldots10^{4.5}$ (log scale) the recall and precision for different
			hyperparameters:\\
			x-direction are cutoffs, linear $c=0.1\dots0.6$,\\
			y-direction are block-sizes, logarithmic, $B=5\ldots50$.}
		\label{fig:weak_regime_recall_precision}
	\end{minipage}
\end{figure}
The average recall and precision resulting for regime-detection on the
(uninformative prior) benchmark are shown in Fig.\ \ref{fig:weak_regime_recall_precision}.
The observed structure and the choice of hyperparameters is discussed 
in detail in §\ref{apdx:weak_hyper_choices}.
The main observation is that the resulting precision is comparatively insensitive to
hyperparameter-choices, at least for larger $N \gtrsim 10^3$ samples, while
the recall breaks down for poor choices.
Other than that, similar remarks as for the homogeneity-test are in place.
We again provide two simple recommendations for hyperparameters for
generic cases and for cases for large ($L \gtrsim 100$) ground-truth regimes
in §\ref{apdx:weak_hyper_choices}.

\subsection{Conditional Tests}\label{sec:testing_conditional}

Conditional independence tests are of great importance for constraint-based casual discovery.
In principle the above discussion applies to arbitrary dependence-scores $\hat{d}$,
in particular they can be conditional ones.
Clearly, there are some optimizations possible.
For example if conditional tests are evaluated by regressing out the
conditions and testing on residuals (as it is the case for partial correlation),
then the regressors, if associated to homogeneous links themselves, could be shared
between blocks. We find that such sharing for linear regressors (which converge
comparatively fast) does not lead to tremendous improvements.
Interestingly, not sharing them does make the conditional testing
more robust against other types of non-IIDness, like drifting coefficients
(cf.\ §\ref{apdx:conditional_tests}).
From this perspective, one may want to choose a suitable middle-way
using multivariate (additionally time-dependent) regressors.
The appendices §\ref{apdx:homogeneity_hyper_heuristic} and
§\ref{apdx:weak_hyper_choices} study hyperparameter choices
for conditional tests.

A noteworthy finite-sample effect encountered concerns mediators:
Our method has difficulties with near-deterministic mediators (mediators that
are very strongly correlated to the source-node). We consider this to be a
known challenge to causal discovery in general, it seems to appear more generally
for CITs not just our mCIT (see also §\ref{apdx:datagen_mechanisms}).

\subsection{Full Test and Asymptotics}\label{sec:mCIT_scaling}

We show in the appendix §\ref{apdx:data_processing_layer} that
our method has good asymptotic properties.
While finite-sample guarantees, point-wise convergence and a convergence for
limits where typical regime-sizes are constant (of order $N^0$),
does not seem possible (Prop.\ \ref{prop:imposs_B}),
we can show that our method, under weak assumptions,
satisfies the \inquotes{next best statement}
of convergence given any (arbitrarily slow) increase
of typical regime-sizes.
Typical regimes-size here takes into account how
well the chosen pattern $\Pattern$ matches the actual pattern in data.

\begin{algorithm}
	\caption{\algMarkedIndependence{}}
	\label{algo:mCIT}
	\begin{algorithmic}[1]
		\State \textbf{Input:}
		Data '\texttt{data}' for the variables $X$, $Y$ and $Z_1,\ldots,Z_k$
		in $X \independent Y | Z_1, \ldots, Z_k$.
		\State \textbf{Output:}
		One of the three values defined in Def.\ \ref{def:marked_independence_rel_d},
		these are:\\\quad $0$ (global independence), $1$ (global dependence) or
		$\mCIToutR$ (true regime).
		\If{\texttt{is\_homogeneous}(\texttt{data})}
			\State \Return $1$ \textbf{if} \texttt{is\_dependent}(\texttt{data})
				\textbf{else} $0$
		\Else
			\State \Return $1$ \textbf{if} \texttt{is\_weak\_regime\_dependent}(\texttt{data})
				\textbf{else} $\mCIToutR$
		\EndIf
	\end{algorithmic}
	\hrule\vspace{0.5em}
	Here '\texttt{is\_homogeneous}' and '\texttt{is\_weak\_regime\_dependent}' perform the
	tests described in §\ref{sec:testing_homogeneity} and §\ref{sec:weak_regimes};
	'\texttt{is\_dependent}' performs a standard CIT or a robust variant (averaging per-block
	dependencies, see §\ref{apdx:conditional_tests}).
	These stages implicitly employ a dependence estimator $\hat{d}$ (in numerical experiments:
	partial correlation) and a pattern (like temporal persistence).
\end{algorithm}

\begin{thm}[mCIT Convergence]\label{thm:mCIT}
	For a formal statement, see §\ref{apdx:full_mCIT}.
	Under weak assumptions discussed in §\ref{apdx:simplify_d_for_power},
	given a pattern $\Pattern$ (Def.\ \ref{def:blocks}),
	in an asymptotic limit such that the system is eventually
	$(L, \chi)$-$\Pattern$-persistent for every $L$, $\chi > 0$,
	the mCIT described in Algo.\ \ref{algo:mCIT} 
	is asymptotically correct in a Bayesian sense (on average, but not point-wise).
\end{thm}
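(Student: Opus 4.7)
The plan is to analyze the three sub-tests making up the mCIT separately, since by design each has its own null-hypothesis, and then to combine them via a union bound. The key ingredient is a joint hyperparameter schedule $(B_N, c_N, \beta_N)$ as a function of sample-size. Under the hypothesis that for every $(L,\chi)$ the system is eventually $(L,\chi)$-$\Pattern$-persistent, I would pick $B_N\to\infty$, $\Theta_N:=\lfloor N/B_N\rfloor\to\infty$ and $c_N\to 0$, while simultaneously ensuring $B_N/L\to 0$ and the invalid-block rate $\chi(L,B_N)\to 0$ eventually. Such a schedule exists because $L$ may be taken arbitrarily large in the persistence limit, so $B_N$ simply has to grow slower than the slowest persistence scale that eventually applies.

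With this schedule fixed, the three stages converge as follows. The global dependence test inherits consistency from Ass.\ \ref{ass:underlying_d}, since persistence implies that almost all blocks are valid, hence effectively IID within their regime. For the homogeneity test, Lemma \ref{lemma:binomial_test} delivers FPR-control by design; power on a true-regime alternative with regime-separation $|d_1-d_0|$ bounded away from zero follows because per-block estimates concentrate at $d_0$ and $d_1$ within confidence $\sigma_{B_N}\to 0$, so the number of blocks below any $\beta_N$-quantile lying strictly between the two cluster locations eventually exceeds the binomial threshold. Finally, the weak-regime test of Lemma \ref{lemma:interval_method} has its FPR controlled once Ass.\ \ref{ass:interval} is satisfied, which happens eventually along the schedule; power against $d_0>0$ follows because the acceptance interval shrinks to $\{0\}$ while $\hat d_{\txt{weak}}$ concentrates around the fixed positive $d_0$.

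By Prop.\ \ref{prop:imposs_B}, none of the above can be made point-wise uniform near the hypothesis-boundaries $d_0=0$ and $\Delta d=0$. However, the \inquotes{bad regions} around these boundaries where the test may err have widths that shrink as $N$ grows, mirroring the confidence-widths appearing in the stage-by-stage arguments. Under the meta-model prior of §\ref{apdx:model_details}, whose density is bounded near those boundaries, the prior mass of the bad regions vanishes, and integrating the point-wise bound yields an error rate tending to zero --- which is precisely the \inquotes{Bayesian sense} of the claim. The main obstacle is the simultaneous calibration underlying the first paragraph: $B_N$ has to be large enough that $\sigma_{B_N}\to 0$, small enough that $B_N/L\to 0$ and $\Theta_N\to\infty$, while $c_N$ must shrink slowly enough that $n_c\to\infty$ but fast enough to keep impurities controlled. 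Matching these rates carefully, without a priori knowledge of how slowly $L$ grows, is where the delicate part of the argument will live.
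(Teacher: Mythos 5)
Your overall architecture --- analyze the three stages separately for error-control and power, and then convert point-wise statements, which Prop.~\ref{prop:imposs_B} forbids from being uniform, into an average statement by integrating the shrinking ``bad regions'' near $d_0=0$ and $\Delta d=0$ against the meta-model prior --- is the right shape of argument for the claimed ``Bayesian'' conclusion. The genuine gap sits in your very first step, the asserted existence of the joint schedule $(B_N,c_N,\beta_N)$. The hypothesis ``eventually $(L,\chi)$-persistent for every $L,\chi>0$'' has the quantifier order: for each \emph{fixed} $(L,\chi)$ there is an $N_0(L,\chi)$ beyond which aligned size-$L$ blocks are invalid with probability below $\chi$. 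It does \emph{not} supply a single ``slowest persistence scale that eventually applies'': for any predetermined sequence $B_N\to\infty$ there are admissible limits satisfying the hypothesis (e.g.\ typical regime length growing like $\log N$ while $B_N\sim\sqrt N$) in which $B_N/L\to\infty$, so that asymptotically essentially all blocks are invalid and every stage of your argument collapses --- the quantile estimator of Lemma~\ref{lemma:binomial_test} is no longer sound, Ass.~\ref{ass:interval} never holds, and per-block scores do not concentrate at $d_0$ and $d_1$. So ``such a schedule exists because $L$ may be taken arbitrarily large'' is circular. You must either construct the schedule by diagonalizing over $(L,\chi)$ \emph{relative to the given limit} (e.g.\ $B_N=k$ for $N\in[N_0(k,1/k),N_0(k+1,1/(k+1)))$), accepting that the schedule is limit-dependent rather than a universal hyperparameter policy, or keep $B$ fixed (or growing only as the prior permits) and absorb the mismatch event ``sampled length scale small relative to $B_N$'' into the Bayesian average, showing its prior mass vanishes --- which is precisely how the special cases of trivial scaling $\Lambda(N)\to\infty$ and of the uninformative length-scale prior are meant to be handled. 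You name this calibration as ``where the delicate part will live,'' but the theorem \emph{is} that delicate part; deferring it leaves the proof incomplete.

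Note also why you cannot simply retreat to fixed hyperparameters to dodge the issue: with $B$ and $c$ fixed, the acceptance interval of Lemma~\ref{lemma:interval_method} retains the non-vanishing width $|\mu_{0,\sigma_B}^c|$, so models with small but nonzero $d_0$ occupy a region of fixed prior mass on which the mCIT keeps returning $R$; the Bayesian error then tends to a positive constant instead of zero. Hence $\sigma_{B_N}\to 0$ (i.e.\ growing blocks) is genuinely needed for the averaged statement, which is exactly why the unresolved schedule/persistence calibration is the crux and not a technicality. Relatedly, the FPR-control you invoke for the weak-regime stage is conditional on Ass.~\ref{ass:interval}, which is itself a model- and realization-dependent event; in the averaged statement its failure probability must be shown to vanish along the schedule, again riding on the same calibration you left open.
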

The scaling assumption is extremely weak, for example
it includes all \inquotes{trivial} scaling results
$\mathcal{R}_N(t) := \bar{\mathcal{R}}(t/\Lambda(N))$
with $\Lambda(N)\rightarrow \infty$
(Cor.\ \ref{cor:mCIT_on_trivial_scaling})
and even meaningful uninformative priors for
the case of persistent in time regimes (Cor.\ \ref{cor:mCIT_on_uninformative}).
Numerical experiments
(§\ref{sec:num_ex_scenarios}, paragraph \inquotes{Scaling behavior with $N$})
strongly support this claim.

\subsection{Summary}\label{sec:dyn_indep_test_summary}

Our approach to HCCD reduces the data-processing layer within the pipeline
Fig.\ \ref{fig:architecture} to comparatively simple \inquotes{marked independence}
statements (and indicator-relations §\ref{sec:indicator_translation}).
The testing of such statements was in this section further reduced
to a series of what might be considered standard problems:
\begin{itemize}
	\item The segmentation of data, for example by simple division into blocks,
		or CPD (§\ref{apdx:leveraging_cpd_clustering}).
	\item The focus on a single link's strength by dependence-scores.
	\item A univariate clustering problem, for which additional constraints are known,
		that allow for further substructure into:
	\begin{itemize}
		\item Independence Testing.
		\item IIDness\Slash{}randomness testing.
		\item Selection bias and independence testing on selected datasets.
	\end{itemize}
\end{itemize}
A comprehensive review of the extensive literature on these topics
is beyond the scope of this paper.
Instead, we focused on a well-defined structure of the full problem,
its reduction to these well-studied topics and a detailed analysis
of the difficulties encountered.
We provide a simple baseline method realizing the structure and
principles of our framework. Its primary goal is to demonstrate
feasibility and gain insight into finite-sample challenges.

While we expect that future work on these subproblems should be able to substantially
improve finite-sample properties, our baseline already turns out to perform very competitive to
state-of-the-art methods §\ref{sec:num_experiments}.
We attribute this to the systematic addressing of the two deep underlying challenges
outlined in the introduction, by following the guiding principles of locality and direct testing.
The qualitative trends observed in the numerical experiments §\ref{sec:num_experiments}
support this interpretation. In §\ref{apdx:mCIT_future_work} more ideas concerning future work are outlined.

	\section{State-Space Construction}\label{sec:indicator_translation}

	Induced indicators as encountered in example \ref{example:induced_indicators}
	make the extraction of the multi-valued independence-atoms $\IStructX$
	from the knowledge of a marked independence-atoms $\IStructM$ a non-trivial task.
	This section describes, how, despite these difficulties,
	this goal can be achieved with few additional tests.
	We provide an implementation
	of \algConstructStateSpace{} as required by the core-algorithm
	which under reasonable assumptions (see below) is theoretically sound
	and comparatively easy to implement; details and proofs can be found in
	§\ref{apdx:state_space_construction}.
	The approach requires an additional type of test (\ie the data-processing layer
	requires slight extension), to inter-relate detected indicators.
	This test can be realized similar to weak-regime tests (see §\ref{apdx:implication_testing}).
	While already the comparatively simple analysis provided in §\ref{apdx:state_space_construction}
	can substantially simplify the type of additional tests needed and reduce their number,
	we strongly suspect that considerable further improvement should
	be possible (see §\ref{sec:ind_locality}).
	
	We employ assumptions (see §\ref{apdx:state_constr_ass} for technical details) about
	\begin{enumerate}[label=(\alph*)]
		\item 
		acyclicity to simplify the problem and representation of solutions.
		Our approach should be extensible to cyclic models,
		however, this will require a more complex state-space construction
		and an additional type of (data-processing layer) test (§\ref{apdx:state_space_cyclic_models}).
		\item 
		modularity of changes and reachedness of states, to avoid the complexity of an additional
		post-processing phase §\ref{apdx:non_modular_changes}.
		Again, our approach should be extensible.
		\item
		skeleton discovery in the used CD-algorithm.
		While our framework is almost entirely agnostic of the
		implementation details of the CD-algorithm employed,
		we use a simple assumption about the skeleton
		discovery (that seems to be satisfied for all algorithms
		used in practice) to simplify proofs.
	\end{enumerate}
	Under these assumptions,
	the state-space construction can be realized in two phases.
	In the first phase (Algo. \ref{algo:discover_model_indicators}) model-indicators are identified as those links $X$--$Y$,
	for which a regime-dependence is found (the mCIT returns $R$),
	but no $Z$ with $X\independent Y|Z$ was encountered.
	Each model-indicator is then represented by a test of the form
	$X \independent Y|Z$
	such that $R_{XY|Z}\equiv \Rmodel_{XY}$.
	A suitable $Z$ is found as a minimum under an ordering-relation on $Z$
	that corresponds to a subset-relation of regions of independence on
	detected indicators $R_{XY|Z}$. The evaluation of this ordering-relation
	is the first encounter of the new type of data-processing layer test.
	Under the assumption of modular changes, these model-indicators span
	the state-space.
	
	In the second phase (Algo. \ref{algo:indicator_resolution}),
	the value of detected indicators in each state
	is deduced. This will again require the new type of test.
	In general the space of detected indicators grows very fast in the number
	$k$ of model-indicators (as $2^{2^k}$), there are, however,
	some restrictions on the form of detected indicators that can be
	leveraged to obtain a much smaller search-space. A particular
	helpful observation is that the removal of a link (vanishing
	of a model-indicator) from one state to another can close a
	causal path (in the sense of d-connectivity), but never open a new one.
	
	The results of both phases allow for the realization of the
	\algConstructStateSpace{} sub-algorithm required for the core-algorithm
	in §\ref{sec:core_algo}. We show in §\ref{apdx:state_space_construction}:

\begin{algorithm}
	\caption{\algConstructStateSpace{}}
	\label{algo:constr_state_space}
	\begin{algorithmic}[1]
		\State \textbf{Input:}
			The multivariate \texttt{data}, independencies tested $I$,
			marked tests $J \subseteq I$.
		\State \textbf{Output:}
			A set of mappings $S \subseteq \Map(J, \{0,1\})$.
		\Statex \emph{Phase I: Find changing links and represent their model-indicators
			by elements in $J$.}
		\State $\mathcal{M}$ := \texttt{discover\_model\_indicators}($I$, $J$)
		\State $S := \{ s_m | m \in \linspan_{\sfrac{\mathbb{Z}}{2\mathbb{Z}}}(\mathcal{M})^* \}$
		\Statex \emph{Phase II: Represent changing tests relative to model indicators.}
		\For{$\vec{j}\in J$}
			\State $s_{\vec{j}}$
				:= \texttt{represent\_indicator}($\mathcal{M}$, $\vec{j}$)
				\Comment{Represent}
			\State $s_m(\vec{j})$
				:= $s_{\vec{j}}$(m)
				\Comment{Transpose}
		\EndFor
		\State\Return $S$
		\Comment{\inquotes{Forget}\Slash{}discard indexing by $m$}
	\end{algorithmic}
	\hrule\vspace{0.5em}
	Details on \texttt{discover\_model\_indicators} are in §\ref{apdx:model_indicators}
	as Algo.\ \ref{algo:discover_model_indicators}; it returns a subset of $J$
	uniquely representing model indicators (in the oracle case).
	In the modular case, the state-space is spanned by the model-indicators.
	A co-vector $m \in \linspan_{\sfrac{\mathbb{Z}}{2\mathbb{Z}}}(\mathcal{M})^*$
	by definition assigns a value $0$ or $1$ to each basis-vector
	$R^{\text{model}}\in \mathcal{M}$.
	The algorithm \texttt{represent\_indicator}
	(see §\ref{apdx:detected_indicator_representation},
	Algo.\ \ref{algo:discover_model_indicators}) 
	expresses marked independencies $\vec{j}\in J$ relative to values of model-indicators;
	thus they can now be evaluated on co-vectors $m$ (see above).
\end{algorithm}

	\begin{thm}[State-Space Construction]\label{thm:state_space_construction}
		A formal statement is given as Thm.\ \ref{apdx:thm:state_space_construction} in §\ref{apdx:state_space_construction_J}.
		Pseudo-code for the two phases of the algorithm is provided in Algo.\ \ref{algo:discover_model_indicators} and Algo.\ \ref{algo:indicator_resolution}.
		
		Under the assumptions (a-c) outlined above,
		the described two-phase algorithm, in the oracle-case, implements
		a sound and complete state-space construction (Def.\ \ref{def:state_space_construction_specification}),
		satisfying the requirements of Thm.\ \ref{thm:core_algo} (soundness of the core algorithm).
	\end{thm}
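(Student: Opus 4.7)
The plan is to decouple the two phases, prove each correct in the oracle setting, and then show that their composition matches the formal specification of \algConstructStateSpace{} required by Thm.\ \ref{thm:core_algo}. The specification of a sound and complete state-space construction (Def.\ \ref{def:state_space_construction_specification}) essentially demands that, given a set $J$ of multi-indices flagged as regime-dependent by $\IStructOracleM(M)$, \algConstructStateSpace{}$(J)$ returns exactly the reached $J$-states $S^J \subset \Map(J,\{0,1\})$. I would therefore first state a refined invariant: after Phase 1 each non-trivial model indicator $\Rmodel_{XY}$ is represented by some detected indicator $R_{XY|Z}$ with $R_{XY|Z} \equiv \Rmodel_{XY}$, and the set of such representatives is in bijection with $\NonTrivialIndicators$; after Phase 2 the value taken by every element of $J$ in every reached state is correctly deduced as a function of $\sigma$. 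Soundness\Slash{}completeness of \algConstructStateSpace{} follows by reading off the resulting joint assignments.

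For Phase 1 (Algo.\ \ref{algo:discover_model_indicators}), the skeleton assumption (c) supplies the candidate set: a pair $X$--$Y$ survives skeleton elimination in the union graph iff no separating $Z$ with $X \independent^{\IStructOracleM(M)} Y \mid Z$ exists; combined with acyclicity (a) and the oracle, this is exactly the condition that $\Rmodel_{XY}$ is not identically zero (only the direct link creates an unblockable connection in at least one state). For each such pair I would argue that among all $Z$ for which the oracle returns $R$, there is a minimum under the ordering of \inquotes{region of independence on $R_{XY|Z}$}, realized by any $Z$ that d-separates $X$ from $Y$ along every path except the direct edge in the state where $\Rmodel_{XY}=1$. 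The implication test (§\ref{apdx:implication_testing}) implements this ordering soundly, so a minimum is attained; the core step is to verify that at such a minimum $R_{XY|Z} \equiv \Rmodel_{XY}$, which follows by a case split: whenever $\Rmodel_{XY}(t)=0$ the chosen $Z$ blocks all remaining paths (acyclicity + Markov), and whenever $\Rmodel_{XY}(t)=1$ the direct edge is active and unblocked by $Z$. Modularity (b) then guarantees that distinct model indicators yield distinct, logically independent representatives, so the enumeration in Phase 1 is complete and free of duplicates.

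For Phase 2 (Algo.\ \ref{algo:indicator_resolution}), the task is, given a new multi-index $\vec j \in J$ with detected indicator $R_\vec j$, to express $R_\vec j$ as a Boolean function of the model indicators recovered in Phase 1. Acyclicity and the observation that removing a link can only close d-connections but never open new ones restrict the candidate Boolean functions to a monotone class: the set of states in which $X \independent Y \mid Z$ holds is upward closed under \inquotes{turn additional model indicators off}. This collapses the naive $2^{2^\kappa}$ search to a polynomial enumeration of minimal covers, each verifiable by a single implication test against the already-computed model-indicator representatives. Completeness of the reached-state enumeration then follows from assumption (b): reachedness together with modularity makes the product set $\{0,1\}^\kappa$ fully populated, so every Boolean combination of model-indicator representatives corresponds to exactly one reached $J$-state.

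The main obstacle I anticipate is the minimum-under-ordering step in Phase 1: formally, I need that the ordering induced by the implication test has a unique (up to equivalence) minimum whose detected indicator coincides pointwise with the model indicator, rather than merely being dominated by it. Graphically this reduces to showing that some $Z \subset V \setminus \{X,Y\}$ d-separates $X$ and $Y$ in the union graph minus the $X$--$Y$ edge; under acyclicity of the union graph (part of assumption (a)) such a $Z$ always exists (\eg take the parents of $X$ in the union graph excluding $Y$, as in standard Markov arguments). Verifying this, handling latent confounders where present, and making precise how the implication-test oracle interacts with the ordering are where the proof will concentrate its effort; the remaining Boolean bookkeeping in Phase 2 is then routine given modularity and acyclicity.
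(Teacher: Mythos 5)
Your proposal follows essentially the same route as the paper's argument: Phase 1 is justified by exhibiting, for each changing link, a conditioning set that d-separates the pair in the (acyclic) union graph with the direct edge removed---so that, since deleting edges can only close d-connections and adjacency plus faithfulness forces dependence whenever the edge is active, the detected indicator of that test coincides pointwise with the model indicator and is the minimum under the independence-region ordering---while Phase 2 uses the same monotonicity observation to restrict and verify candidate resolutions via implication tests, with modularity and reachedness of states giving completeness of the reached $J$-states. The only quibbles are immaterial to oracle soundness and completeness: the monotone restriction does not yield a polynomial enumeration (monotone Boolean functions are still super-exponentially many), and the witness separating set should be the union-graph parents of whichever of $X$, $Y$ is not an ancestor of the other, not always the parents of $X$.
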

	
	Finally, the interpretation of the discovered state-space
	in terms of model-properties can also be non-trivial.
	For example in presence of hidden confounders
	two states may differ by one or more inducing paths being present in
	one of them. This is excluded by a suitable acyclicity assumption,
	but for the general case jeopardizes both the independence of
	model indicators (even in the case of modular changes)
	and their interpretation as changing causal mechanism at that
	location in the graph. While for the discovery of states
	there is a rather clear path forward, their interpretation
	in terms of model-properties appears to be non-identifiable
	in some cases (see §\ref{apdx:state_space_inducing_paths});
	non-identifiable here refers to identifiability from the extended
	independence-structure in principle, not specifically to our
	proof-of-concept method. Such (possibly only partial) translations,
	together with knowledge-transfer results (§\ref{sec:edge_orientation_transfer})
	allow for an easy-to-read representation of the full results by a labeled (or colored) union-graph
	(Fig.\ \ref{fig:intro_toy_model}a).
	
	Our current approach does not take into account constraints on
	the potential representation of detected indicators that arise
	from (partial) knowledge of discovered graphs from previous iterations.
	We believe that in future work it should be possible to considerably
	improve the scaling to much larger models with many changing model-indicators
	by systematically exploiting such constraints (see §\ref{sec:ind_locality}).

	\section{Numerical Experiments}\label{sec:num_experiments}
	
	We evaluate the full method on randomly generated non-stationary SCMs
	(for details see §\ref{apdx:model_details}).
	To allow for a comparison with \citep{Saggioro2020}, we focus on the
	case of time series data without contemporaneous links and without hidden confounders.
	Further numerical experiments, especially for non-time series data, are included
	in §\ref{apdx:num_experiments}.
	We first briefly describe the methods we compare, then discuss results for different scenarios
	obtained via these methods, with details about evaluation metrics deferred to the end of this section.
	
	\subsection{Methods Applied}\label{sec:num_exp_methods}

	We compare multiple methods representing different classes of approaches.
	These representatives may not accurately capture, in terms of absolute numbers,
	what finite-sample performance \emph{could} potentially be achieved by each respective class.
	We will focus on the trends observed for individual approaches when
	changing the model behavior in different ways\Slash{}\inquotes{scenarios} (see next subsection).

	\paragraph{Regime-PCMCI} \citep{Saggioro2020} in simple terms can be described of as follows:
	Divide the data into regimes, run causal discovery (via PCMCI) and fit a model per regime,
	then find regime-assignments of data points that minimize the prediction-error
	from these models. Then iterate these (EM-)steps.
	Here persistence is leveraged by constraining the maximum number of
	regime-switches. The optimization-procedure may not always converge, we only took converged results
	into account.
	Fitting a model is not practically feasible with confounders or
	with unorientable links, which restricts the applicability of 
	such a method to time series models without contemporaneous links and without confounders.
	Model fitting and regime-assignment is \emph{global} in the causal graph,
	and causal discovery is run \emph{indirectly} via the discovered indicators.
	Regime-PCMCI requires\footnote{As \citep{Saggioro2020} remark, it is in principle
	possible (as for any other method, cf.\ \eg clustering below) to find an optimal number of clusters
	by optimizing some score (like minimal description length).
	In practice this would certainly not produce \emph{better} results than providing the ground
	truth; both runtime cost and stability of convergence (rate of successful runs) are also
	prohibitive to numerical experiments with this approach.}
	prior knowledge of the number of regimes and an upper bound
	for the number of switches; we provide it with ground-truth for both,
	as well as minimum and maximum time-lags.
	We further analyze three different hyperparameter-sets, two are recommended by
	the original paper \citep{Saggioro2020} for simple\Slash{}complex models,
	the third is the default parameter-set proposed by the implementation in the
	\texttt{tigramite}\footnote{\label{footnote:tigramite_repo}see \url{https://github.com/jakobrunge/tigramite}}
	python package. In the plots these are marked as: diamond (tigramite),
	plus (simple), X-symbol (complex).
	
	\paragraph{Clustering:}	
	We compare to a simple clustering approach.
	This approach clusters data points \emph{globally}
	(in the causal graph), then applies causal discovery (via PCMCI) \emph{indirectly} per cluster.
	Here the absolute values are not of great significance,
	as the use of other clustering-algorithms could change it (we use a simple
	kmeans clustering), but the qualitative behavior should be sufficiently
	representative for such	global clustering approaches.
	The method is provided with ground-truth information about the number of clusters,
	as well as minimum and maximum time-lags.
	To allow the approach to take advantage of persistence, the clustering is also applied to aggregations of
	data points over different window-sizes.
	
	\paragraph{Sliding-Window:}
	A sliding-window approach is easy to implement: Apply causal discovery (via PCMCI) to each \inquotes{sliding window} \emph{globally} (in the graph).
	For practical reasons our \inquotes{sliding} windows do not slide, but rather the data is again
	divided into fixed size non-overlapping windows.
	It is, however, difficult to evaluate. We start by counting for each link
	the number of windows on which it is present in the specific window's causal graph.
	Then pick two cutoffs $a_\pm$ and consider links absent if they appear less often than $a_-$,
	changing (regime-dependent) if they appear at least $a_-$ but less than $a_+$ often,
	(non-regime-dependent) present in the graph otherwise.
	The authors are not aware of any detailed investigations into the practical choice of $a_\pm$;
	generally this seems to be a difficult problem. The study of
	the practical choice of $a_\pm$ is beyond the scope of this paper.
	Instead, our evaluation fixes $a_\pm$ \emph{a posteriori} to the values which \emph{would have}
	provided the best evaluation metric.
	This is not a method applicable in practice, but it does provide an estimate of an \emph{upper bound}
	on the possible performance of sliding-window methods; to emphasize this hypothetical nature,
	the corresponding results will be plotted as dashed (as opposed to solid) lines.
	Again, also ground-truth about minimum and maximum time-lags is provided.
	The approach takes advantage of persistence by the choice of time-aligned windows.
	We apply this approach with different window-sizes.
	In practice the union-graph should in this case be more reliably obtained
	from a standard (stationary) method applied to all data, and we do not include
	the union of regime-specific edge-sets for comparison in the union-graph results.
	See also §\ref{apdx:sliding_window}.

	\paragraph{gLD-PCMCI (ours):} We will, of course, also include results for 
	the baseline method as described above (using PCMCI as CD algorithm),
	realizing the \emph{graph-local and direct}
	framework described in this paper in a simple but transparent way.
	We compare two sets of hyperparameters that were chosen respectively for an uninformative
	prior and with an inductive bias slightly favoring larger regimes in §\ref{sec:dyn_indep_tests} for IID-data
	(see also §\ref{apdx:prelim_num_results} and §\ref{apdx:hyper_param_policy}
	concerning relevance of and policy of choice of hyperparameters).
	Again, ground-truth about minimum and maximum time-lags is provided,
	\emph{no} ground-truth about the number of contexts or number of transitions is needed.
	Since the other methods here use PCMCI, we show results for gLD-PCMCI in the main text.
	Applications may consider using gLD-PCMCI+ for time series, as this combination
	seems to often perform more robustly (§\ref{apdx:num_exp_compare}, §\ref{apdx:MCI}).
	Our framework is slightly adapted to the time-series case based on the heuristic momentary
	conditional independence (MCI) principle \citep{pcmci}: In a first phase supersets of lagged parents
	are discovered, and in a second phase these are employed to remove lagged correlations
	to restore test-statistics; full mCITs are deployed only in the MCI phase (see §\ref{apdx:MCI}).

	\paragraph{Stationary PCMCI:}
	We additionally compare to the results one obtains by applying
	the PCMCI-algorithm (also underlying the other methods) without modifications to the data.
	This produces only a union-graph. Due to its special role as a reference, this
	line will be drawn in a dash-dot style in the plots below.

	\subsection{Scenarios for Numerical Experiments}\label{sec:num_ex_scenarios}
	
	The problem of non-stationary data-generation has by far too many
	degrees of freedom to define a universal benchmark.
	So we focus instead on a comparison of the qualitative behavior of different
	approaches given different variations of the benchmark data-set, for example the
	behavior with increasing sample-count $N$, with different model or regime-structure
	complexity or with different typical segment lengths.
	Thus, our primary goal of this numerical experiments section is
	to understand and verify our understanding of
	the inductive biases of the compared classes of methods.
	To this end, we study a sequence of \inquotes{scans} over model- (and data-)parameters 
	and analyze the observed behavior.
		
	\paragraph{Scaling behavior with $N$:}
	
	\begin{figure}[ht]
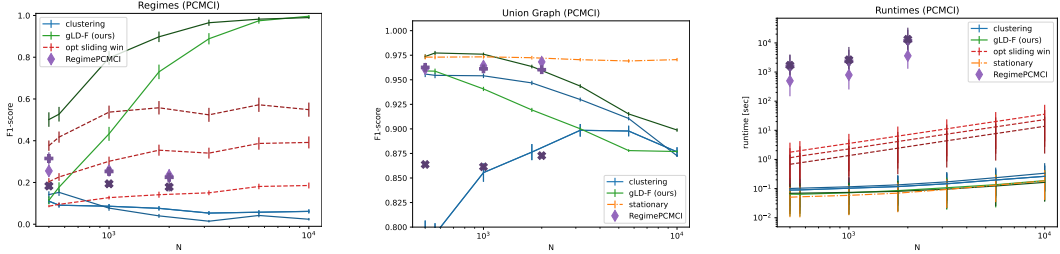

		\mayincludegraphics[width=0.3\textwidth]
			{cd_ts/sample_size/PCMCI_F1_regime}
		\hfill
		\mayincludegraphics[width=0.3\textwidth]
			{cd_ts/sample_size/PCMCI_F1_union_graph}
		\hfill
		\mayincludegraphics[width=0.3\textwidth]
			{cd_ts/sample_size/PCMCI_runtimes}
		\caption{Scaling with $N$:
			Shown are F1-score of regime-detection, union-graph and
			runtime in seconds (log scale)
			vs.\ sample-size $N$. Darker lines represent larger window-sizes or
			the hyperparameter choice optimized for large regimes.
			All approaches used PCMCI as basic CD-algorithm.}
		\label{fig:dcd_ts_scaling}
	\end{figure}
		
	An important aspect of any statistical analysis is convergence speed or more generally the behavior under
	the increase of the sample-size. With regimes in the data, a non-trivial choice with regard to
	the behavior of	segment-sizes with increasing $N$ has to be made.
	We decided for a plausible uninformative prior: For each indicator,
	a typical length-scale is picked first, then all segments of that indicator
	are drawn at random, but similar to this scale.
	An extensive discussion of the rational behind this choice is given in §\ref{apdx:model_details}.
	Figure \ref{fig:dcd_ts_scaling} shows results for the detection of
	changing links (local contexts)
	and the (summary\mbox{-)}\allowbreak{}union-graph skeleton (details on the precise 
	evaluation-metrics are given below in §\ref{sec:num_exp_eval_metrics}).
	The most apparent feature is that both clustering and the Regime-PCMCI approach actually become \emph{worse}
	for this choice of limit. This makes theoretical sense. Both methods are \emph{indirect} and essentially
	race against themselves: There are abstractly two tasks performed. First the assignment of data to regimes,
	second the independence-testing (via causal discovery) on the imperfectly reconstructed per-context data.
	Both tasks benefit from more data, however, for the second task, testing on imperfectly reconstructed per-context data,
	this also means that the sensitivity for these imperfections in the reconstruction increases.
	For example, the reconstructed independent regime, erroneously containing some dependent data points, will have
	non-zero correlation of say $E_N[\hat{d}] = d_0(N)\neq 0$. With growing $N$, the sensitivity of a correlation-test typically
	improves with $N^{-\onehalf}$, so if the reconstruction does not converge fast enough to keep $d_0(N)$ 
	dropping at a rate of at least $N^{-\onehalf}$, the correlation-test will eventually be able to detect the
	spurious dependence. In practice this rate of convergence is plausible for
	limits where $L = \mathcal{O}(N^1)$: Then the information contained in the pattern
	remains constant, so effectively the problem behaves as if the context were known
	(it is then always possible to divert an infinite amount, yet arbitrary small fraction, of the
	data to learn each bit of information about the pattern).	
	While the Regime-PCMCI and clustering fall off with larger sample-count,
	their performance on low sample-counts can be quite good.
	So especially for small sample-count, these approaches can be a good alternative.
	Additionally they can detect non-graphical changes; for example if mechanism-changes
	co-occur with distribution-shifts (especially with mean-shifts) in noise terms,
	then clustering should become a versatile and viable alternative.
	
	Our method is testing in a very simple, but direct way (cf.\ §\ref{sec:dyn_indep_tests}), therefore does not suffer from this
	convergence-race problem.
	The regime-sizes are bounded below to allow for a meaningful comparison, thus it is not surprising that
	the hyperparameter set for large regimes performs better.
	Also for clustering and sliding-windows, the larger window-sizes seem to perform better, probably for the same reason.
	For the sliding-window approach with optimal (a posteriori) cutoffs, there is also no indirection
	and therefore it improves with larger $N$ in this limit. However, the convergence is slower than for our method,
	likely due to propagating errors in CD first, before aggregating across windows. Similarly
	sliding-window variants seems to cap out at
	a lower value; given the fundamental limitations\Slash{}impossibility results found in §\ref{sec:dyn_indep_tests}
	such a non-trivial soft\footnote{The cap is \inquotes{soft}, because under the used model-prior,
	low typical regime-sizes eventually do become less common, albeit only logarithmically fast,
	cf.\ Lemma \ref{lemma:uninformative_prior_scaling_temporal}.} cap is expected for any method
	(it becomes more evident for our method with more challenging setups).
		
	Another very apparent effect is a decrease in the quality of the recovered union-graph (for our method and clustering).
	This effect seems to be present only for time series models (see §\ref{apdx:iid_sample_size} for
	IID-results), and seems to arise from edge-precision (see Fig.\ \ref{fig:pc1_phase_union_graph}).
	This observation is clearly worrisome. A practical \inquotes{fix} is to run a stationary algorithm separately
	and use its output to validate the union-graph.
	However, this means that it unfortunately is not easily possible to
	benefit from potential improvements for the union-graph discovery (in the time series case),
	that our method could provide (see \eg Fig.\ \ref{fig:dcd_ts_regime_complexity} middle panel, Fig.\ \ref{fig:cd_iid_noise_mixed} and in comparable scenarios does provide in the IID case
	Fig.\ \ref{fig:cd_iid_node_count}).
	There are at least two plausible explanations for this issue.
	Our homogeneity-first approach is designed for independent data
	and may not work well as-is with the MCI idea,
	we believe this can be fixed and might also help improve independence-tests for MCI-approaches
	(see also §\ref{apdx:MCI}).
	Using PCMCI+ and skipping the homogeneity-test in the PC$_1$-phase seems to
	improve the behavior of the union-graph supporting this idea, see §\ref{apdx:MCI},
	even though it is not entirely clear, why this would require the use of the PCMCI+ algorithm.
	The second explanation is particular intriguing, because it might simultaneously explain the fall-off observed for the
	clustering-method: Having $k$ time series of length $n$ is not the same as having a single time series of length $N=k\times n$.
	For multiple time series, the stationary distribution is sampled $k$ times, as opposed to once, for initial
	states, injecting additional
	information into the system, which should affect null distributions (of independence tests)
	thus test-statistics and error-rates.
	While we do account for changes in sample-size and batch-aggregation, we do not account for this
	separate effect, and we are not aware of other methods accounting for this effect.
	A broadening of the null, which is not accounted for, would also explain the loss in edge-precision
	specifically (see Fig.\ \ref{fig:pc1_phase_union_graph}).
	
	Finally, the problem might of course be due to a bug in our code.
	However, clustering and our method use different implementations of the partial correlation test
	and share very little to no code,
	so while we cannot exclude this possibility, we believe the issue is \inquotes{real}, due to a statistical effect,
	not an implementation issue.

	\paragraph{Model Complexity by Node-Count:}
	
	\begin{figure}[ht]
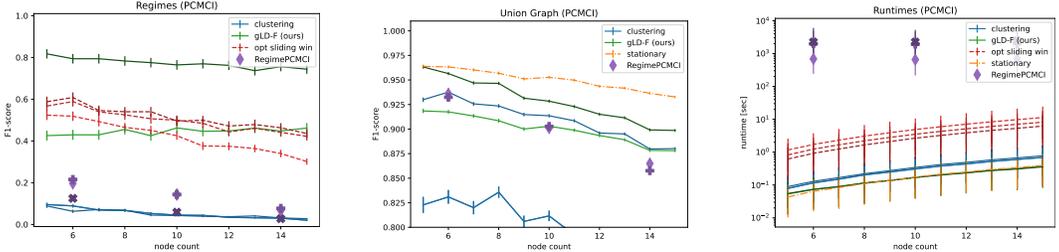

		\mayincludegraphics[width=0.3\textwidth]
		{cd_ts/node_count/PCMCI_F1_regime}			
		\hfill	
		\mayincludegraphics[width=0.3\textwidth]
		{cd_ts/node_count/PCMCI_F1_union_graph}		
		\hfill	
		\mayincludegraphics[width=0.3\textwidth]
		{cd_ts/node_count/PCMCI_runtimes}
		\caption{Scaling with node-count: The number of nodes in the system can serve
			as a proxy for model-complexity.}\label{fig:dcd_ts_complexity}
	\end{figure}

	We use the number of nodes (at fixed link-density per node) as a proxy for model complexity
	(for results on changing link-density, see §\ref{apdx:num_exp_link_density}).
	Figure \ref{fig:dcd_ts_complexity} shows the results, again for regimes and union-graph.
	Concerning regimes, our method, being local in the graph, does not considerably suffer from an increase in
	model complexity (in the sense of node-count).
	All other methods show a downward-trend, as would be expected for global methods.
	The effect is comparatively weak however.
	The recovery of the (summary-)union-graph suffers for all methods from an increase in complexity,
	which is not surprising.
	
	\FloatBarrier
	
	\paragraph{Regime-Structure Complexity by Indicator Count}
	
	\begin{figure}[ht]
		\mayincludegraphics[width=0.3\textwidth]
		{cd_ts/indicator_count/PCMCI_F1_regime}
		\hfill				
		\mayincludegraphics[width=0.3\textwidth]
		{cd_ts/indicator_count/PCMCI_F1_union_graph}
		\hfill				
		\mayincludegraphics[width=0.3\textwidth]
		{cd_ts/indicator_count/PCMCI_runtimes}
		\hfill			
		\mayincludegraphics[width=0.3\textwidth]
		{cd_ts/indicator_count_sync/PCMCI_F1_regime}				
		\hfill				
		\mayincludegraphics[width=0.3\textwidth]
		{cd_ts/indicator_count_sync/PCMCI_F1_union_graph}		
		\hfill						
		\mayincludegraphics[width=0.3\textwidth]
		{cd_ts/indicator_count_sync/PCMCI_runtimes}	
		\caption{Scaling with indicator-count: The number of changing indicators
			as a proxy for regime-structure complexity.
			Top row: modular/independent changes, bottom row: synchronized changes.}
		\label{fig:dcd_ts_regime_complexity}
	\end{figure}
	
	We use the number of changing links (at otherwise fixed graph-parameters) as a proxy for regime-structure
	complexity.
	We consider two cases: Independently changing links (which our method is optimized for)
	and synchronized changes in multiple links (which should favor global methods).
	Some care should be taken in the interpretation of F$_1$-scores because the
	increase in the number of changing indicators changes the actual (ground-truth)
	positive rates (see §\ref{apdx:ts_indicator_count}).
	See Fig.\ \ref{fig:dcd_ts_regime_complexity}.
	At least for Regime-PCMCI, this understanding of biases can be confirmed (for independent-indicators, it has very severe
	convergence-issues, fully breaking down at around three indicators, precluding results for
	larger counts).
	Generally, clustering would probably also benefit from synchronized changes more substantially,
	if these changes would for example change noise-offsets (mean-values of additive noise) at
	the same change-points.
	So, one learning from this experiment is that the model-fitting and evaluation approach of Regime-PCMCI
	is better at finding (global) model changes in the fitted \emph{effects} than for example clustering.
	
	It is noteworthy that our approach, likely due to its built-in robustness towards non-stationarities
	(cf.\ §\ref{apdx:conditional_tests}), can almost maintain the quality of the recovered union-graph
	even under the addition of more non-stationarities. This also applies to Regime-PCMCI in the case of
	synchronized changes.
	
	\paragraph{Regime-Sizes}

	\begin{figure}[ht]
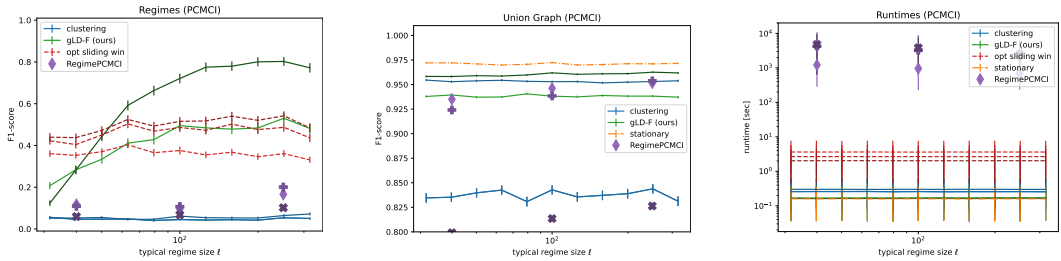

		\mayincludegraphics[width=0.3\textwidth]
		{cd_ts/regime_size/PCMCI_F1_regime}				
		\hfill
		\mayincludegraphics[width=0.3\textwidth]
		{cd_ts/regime_size/PCMCI_F1_union_graph}
		\hfill
		\mayincludegraphics[width=0.3\textwidth]
		{cd_ts/regime_size/PCMCI_runtimes}
		\caption{Scaling with regime-size: Ground-truth regime-sizes should
			affect relative effectiveness of hyperparameters.}
		\label{fig:dcd_ts_regime_sizes}
	\end{figure}
	
	We use different window-sizes and a hyperparameter-set for large regimes vs.\ one for the general case.
	One would expect that starting at small ground truth segment-lengths, then increasing them
	would demonstrate a clear hand-off\Slash{}cross-over at some point in the middle.
	Interestingly, while there is some crossover-behavior, it is not very pronounced for time series
	models (see Fig.\ \ref{fig:dcd_ts_regime_sizes}). The behavior is closer to this expectation in the IID-case
	(§\ref{apdx:num_exp_regime_size}),
	so we believe that this is a time series effect (at least for our method; considering that we use
	hyperparameters configured for the IID case, this is not too surprising).
	From this observation, we recommend to typically use the hyperparameter set for larger regimes in the
	time series case for now.
	It might make sense to investigate good hyperparameter choices for the time series case in future work.

	\subsection{Evaluation Metrics}\label{sec:num_exp_eval_metrics}
	
	We usually compare F$_1$ scores on two problems:
	\begin{itemize}
		\item Which links are found to change between contexts? -- Each method outputs
		(potentially among other information) a collection
		of per-context graphs $G_s$. We compare these, to see which links occur for only some
		$G_s$, thus \emph{change} (for the sliding window case, see §\ref{sec:num_exp_methods}).
		Such a changing link is a 'positive' (all other possible -- see below -- links are 'negatives'),
		it is a 'true positive' ('false negative' if a negative) if the model
		on this link has a non-trivial model indicator, and a 'false positive'
		(or 'true negative' if negative) if the associated
		true model-indicator is trivial (if the link does not change\Slash{}is always present or always absent).
		For computing F$_1$, a negative is any possible link
		which is not included in the set of changing links, but excluding auto-lags\footnote{Our data-generation
		generates no non-trivial indicators on auto-lags, see §\ref{apdx:model_details}.
		All methods can report changes on auto-lags,
		and those are counted as false positives; this seems to happen very rarely.}.
		\item How well is the summary-union-graph recovered? -- The union-graph is the edge-union
		over the individual $G_s$ returned by the method. Since we use PCMCI with $\tau_{\text{min}}=1$
		all links are lagged and oriented forward in time, so there is no ambiguity on orientations at this point. The summary-graph is then the edge-union over all time-lags.
		A 'positive' is a link in this summary-union-graph, it is a 'true positive' if the true
		summary-union-graph also contains this edge, and a 'false positive' if the true
		summary-union-graph does not contain this edge.
	\end{itemize}
	We are particularly interested in the first question, which captures the actual detection
	of the presence of regimes in a way that can be compared for the different methods providing
	different output-information in a simple and fair way.
	Regime-PCMCI (see below) is based on the PCMCI causal discovery algorithm for time series;
	the other methods can be applied with other methods, we compare time series results for
	PCMCI \citep{pcmci}, PCMCI+ \citep{pcmci_plus}, LPCMCI \citep{lpcmci}
	(via implementations in the \texttt{tigramite}\textsuperscript{\ref{footnote:tigramite_repo}}
	python package)
	and IID-data results for PC \citep{PCalgo}, PC-stable \citep{PCstable}
	and FCI \citep{spirtes2001causation}
	(via implementations in the \texttt{causal learn} \citep{causal_learn} python package)
	in more detail in the appendix §\ref{apdx:num_experiments}.
	
	F$_1$-score favors recall over precision;
	for example the best (\wrt F$_1$-score) purely random method is always the trivial-true
	algorithm (always return true). Our method particularly features a very high precision,
	so this puts it at a slight disadvantage; for the most part this is not a problem,
	only the results on increasing indicator count should be interpreted with particular care:
	Increasing the number of non-trivial model-indicators changes the number of true positives
	relative to the number of true negatives, which means the behavior across different
	counts, even the qualitative behavior, is only easily interpreted for methods with
	comparable precision and recall. For details see §\ref{apdx:ts_indicator_count}.
	F$_1$-score is, however, simple and in very wide-spread use which is the reason
	why we employ it here.

	\FloatBarrier
	
	\section{Conclusion}
	
	The ideas behind our framework allow to break the HCCD problem down
	into much more attainable and well-defined sub-problems:
	A (stationary) CD task, a state-space reconstruction task, a marked independence testing task
	and the composition task implemented by the core-algorithm itself to combine
	solutions to these sub-tasks into a useful framework.
	This gives a structure to the problem which allows for a detailed and insightful
	study of the many challenges implicit in the original problem.
	These insights reach from limitations on identifiability and the construction of
	meaningful solutions and their interpretation to systematic improvements
	in the scaling behavior with sample-size (by directness of testing)
	and with complexity of model and regime-behavior (by locality).
	
	\paragraph{Limitations:}
	Our approach so far does \emph{only} discover changes in the graph. It will neither inform about
	nor leverage changes in effect-strengths or noise-distributions.
	It can also not benefit from co-occurrence of multiple changes; it can only
	account for such synchronized or related changes in principle (§\ref{apdx:non_modular_changes}).
	It seems that co-occurrence of graphical (and likely regime-form effect-strength) changes
	favors model-fitting approaches like Regime-PCMCI \citep{Saggioro2020},
	while co-occurrence with changes of noise-distributions might favor clustering approaches.
	Finally, our approach requires comparatively large data-sets, its major
	benefits seem to show primarily if $\gtrsim 1000$ data points are available,
	but this number of course also depends on model-complexity and effect-sizes.
	
	We currently focus on partial correlation tests, thus on linear models.
	It should be possible to support other independence tests, but doing so efficiently
	and at acceptable runtime could benefit from further work.
	For indicator-relation tests as used in §\ref{sec:indicator_translation},
	we provide a proof of concept (§\ref{apdx:implication_testing}), but we believe
	that a careful analysis, especially of the observations in §\ref{sec:ind_locality},
	has the potential to provide additional insight and improved orientation-correctness.
	
	On time series data our baseline implementation encounters unexpected difficulties
	in the quality of the recovered union-graph. The most likely cause of this problem
	is our simple MCI-based time series adaptation §\ref{apdx:MCI}.
	This problem can likely be fixed with tests more appropriate for the time series case, at least
	for the PC$_1$ phase.
	For the moment, we recommend the additional execution of the underlying stationary CD-algorithm
	directly on the data to obtain a reference union-graph to assess the stability of (regions of)
	the union-graph.
	Using PCMCI+ and skipping the homogeneity-test in the PC$_1$-phase seems to
	improve the behavior of the union-graph as well, see §\ref{apdx:MCI}.
	
	We did not include an application to real-world data. A thorough analysis of
	a result on real-world data would doubtless be interesting, but it would also
	require an extensive discussion which considering the length of this paper
	seems more adequate for a dedicated write-up.
	Our implementation in code and numerical experiments are currently limited to
	linear, acyclic models and we focus in the evaluation on causally sufficient (no hidden confounders)
	examples and (for the IID case) on graph skeletons.
	
	\paragraph{Strengths:}
	Our approach can benefit substantially from large data-sets; it demonstrates
	both in theory and numerical experiments good statistical convergence properties
	and maintains low run-times.
	Similarly it scales well to large systems and complex regime-structures with many
	non-trivial model-indicators, again at comparatively low run-times.
	It behaves, except for the union-graph issue on time series mentioned above,
	as expected from the inductive biases realized by a local and direct approach.
	This makes its strengths, weaknesses and applicability to a given problem comparatively easy to assess.
	Our implementation is also compatible with the presence of latent confounders
	and contemporaneous links (thus IID problems) if the underlying CD-algorithm is chosen
	accordingly. It can also not only be used for persistent-in-time regimes but also
	for other types of patterns in data.
	Finally, from a practical perspective, our method does not face numerical-convergence
	problems (like Regime-PCMCI does on some data-sets) and produces succinct and easy
	to interpret output (see also §\ref{sec:edge_orientation_transfer}) at low run-time,
	making it an easy to use potential first extension beyond the stationary case for many applications.
	
	A good framework should be extensible
	and be able to leverage existing technology.
	The use of standard CD-algorithms is already possible in our baseline implementation,
	while we only used partial correlation for our numerical experiments
	our framework was designed to easily accommodate other independence test scores.
	Also the integration of existing CPD or clustering technology seems plausible
	(§\ref{apdx:leveraging_cpd_clustering}).
	Similar remarks apply to the combination with JCI-like \citep{JCI} ideas
	(see §\ref{apdx:JCI}).
	An extension to account additionally for changes in effects or noise-terms, and
	a combinations with methodology like CD-NOD \citep{CD-NOD} for other (non-regime-like)
	types of non-stationarities should be possible at multiple
	points of the framework, by its highly modular design.

\section*{Author's Statements}

\paragraph{Funding Information:}
J.\,R.\ has received funding from the European Research Council
(ERC) Starting Grant CausalEarth under the European Union’s Horizon 2020 research and innovation
program (Grant Agreement No. 948112). J.\,R.\ and M.\,R.\ have received funding from the
European Union’s Horizon 2020 research and innovation programme under grant agreement No
101003469 (XAIDA). This work was supported by the German Federal Ministry of Education and Research (BMBF, SCADS22B) and the Saxon State Ministry for Science, Culture and Tourism (SMWK) by funding the competence center for Big Data and AI "ScaDS.AI Dresden/Leipzig".
The authors gratefully acknowledge the GWK support for funding this project by providing
computing time through the Center for Information Services and HPC (ZIH) at TU Dresden.
This work used resources of the Deutsches Klimarechenzentrum (DKRZ) granted by its Scientific
Steering Committee (WLA) under project ID 1083.

\paragraph{Author Contribution:}
All authors have accepted responsibility for the entire content of this manuscript
and consented to its submission to the journal,
reviewed all the results and approved the final version of the manuscript.

M.\,R.\ has contributed problem analysis, structure and details of formal and
numerical analyses; J.\,R.\ has contributed to the choice of research questions,
the placement relative to existing results and methods
and the accessible presentation of results.

\paragraph{Conflict of Interest:}
Authors state no conflict of interest.

\paragraph{Code Availability:}
An open-source implementation of our framework in python code
is available at \CodeRepo{}.

	\bibliography{./BibTex}
	
	\appendix
	\inputfrom{./}{appendix}
	
	\section{Table of Contents}
	\tableofcontents
\end{document}